\documentclass{article} % For LaTeX2e
%
% !!! TO BE FILLED !!!
 % type paper id here
\def\printOption{final} % type final for final version, arxiv for preprint, review for submission

\def\actualcodeurl{https://github.com/MetaDialog-Research/PBRC}
% ==============================================
%

% Packages
% ---------
% conference specific includes
\def\printFinal{final}
\def\printReview{review}
\def\printArxiv{arxiv}
\usepackage{iclr2025_conference,times}

\ifx \printOption \printFinal
	\iclrfinalcopy
\else
	\ifx \printOption \printReview
            \relax
	\else
            \iclrfinalcopy
	\fi
\fi

% ============================
\usepackage{bm}
\usepackage{array}
\usepackage{adjustbox}
\usepackage{hyperref}
\usepackage{url}
\usepackage{graphicx} 
\usepackage{subcaption}
\usepackage{float} 
\usepackage{tikz}
\usepackage{enumitem}
\usepackage{algorithm}
\usepackage{algpseudocode}
\usepackage{listings}
\usepackage{color}
\usepackage{wrapfig}
\definecolor{lightblue}{rgb}{0.8, 0.9, 1.0}
\definecolor{lightgray}{rgb}{0.9,0.9,0.9}
\usepackage{titletoc}
\newcommand\DoToC{%
  \startcontents
  \printcontents{}{2}{\textbf{APPENDIX}\vskip3pt\hrule\vskip5pt}
  \vskip3pt\hrule\vskip5pt
}

\lstset{ 
  backgroundcolor=\color{lightblue}, 
  basicstyle=\ttfamily\small, 
  breaklines=true,
  frame=single,
  rulecolor=\color{black},
  xleftmargin=1em,
  xrightmargin=1em,
  literate={“}{"}1 {”}{"}1 {‘}{'}1 {’}{'}1 {—}{-}1 {–}{-}1  {œ}{\oe}1 {£}{\pounds}1
}

\ifx \printOption \printFinal
	\def\codeurl{\actualcodeurl} % code url
        \def\infoheader{Actual Paper Title}
\else
       \def\infoheader{A Preprint.}
	\ifx \printOption \printReview
		\def\codeurl{\anonymouscodeurl} % code url
	\else
		\def\codeurl{\actualcodeurl} % code url
	\fi
\fi

% note taking
% ============

\usepackage{amsmath, amssymb, amsthm}
\newtheorem{theorem}{Theorem}
\newtheorem{lemma}[theorem]{Lemma}

\newtheorem{proposition}[theorem]{Proposition}

%%%%% NEW MATH DEFINITIONS %%%%%

% Mark sections of captions for referring to divisions of figures

% Highlight a newly defined term

% Figure reference, lower-case.

% Figure reference, capital. For start of sentence

% Section reference, lower-case.

% Section reference, capital.

% Reference to two sections.

% Reference to three sections.

% Reference to an equation, lower-case.
\def\eqref#1{equation~\ref{#1}}
% Reference to an equation, upper case

% A raw reference to an equation---avoid using if possible

% Reference to a chapter, lower-case.

% Reference to an equation, upper case.

% Reference to a range of chapters

% Reference to an algorithm, lower-case.

% Reference to an algorithm, upper case.

% Reference to a part, lower case

% Reference to a part, upper case

\def\1{\bm{1}}

% Random variables

% rm is already a command, just don't name any random variables m

% Random vectors

% Elements of random vectors

% Random matrices

% Elements of random matrices

% Vectors

\def\vs{{\bm{s}}}

\def\vx{{\bm{x}}}
\def\vy{{\bm{y}}}
\def\vz{{\bm{z}}}

% Elements of vectors

% Matrix

\def\mI{{\bm{I}}}

\def\mP{{\bm{P}}}
\def\mQ{{\bm{Q}}}

\def\mY{{\bm{Y}}}

% Tensor
\DeclareMathAlphabet{\mathsfit}{\encodingdefault}{\sfdefault}{m}{sl}
\SetMathAlphabet{\mathsfit}{bold}{\encodingdefault}{\sfdefault}{bx}{n}

\title{Efficient Perplexity Bound and Ratio Matching in Discrete Diffusion Language Models}
%\title{Cross Entropy for Enhanced Ratio Matching and Evaluation in Diffusion Language Models}

% Authors
% --------
\def\etrit{Etrit~Haxholli}
\def\yeti{Yeti~Z.~Gurbuz}
\def\ogul{Ogul~Can}
\def\eli{Eli~Waxman}

\renewcommand\footnotemark{}

\def\instmetadialog{MetaDialog Research}

\author{\etrit\qquad \yeti \qquad \ogul \qquad  \eli  \thanks{Code is available at: \href{\codeurl}{\codeurl}}\\
\instmetadialog\\
\texttt{\{etrith, yeti, ogulc, elib\}@metadialog.com}
}

% Document
% ---------
\begin{document}

\maketitle

% Arxiv Add-on
% -------------
\ifx \printOption \printArxiv

\fancyhead{}
\fancyhead[L]{\infoheader}   % Left header
\pagestyle{fancy}

\else
\relax

\fi

% Abstract
% ---------
\begin{abstract}
While continuous diffusion models excel in modeling continuous distributions, their application to categorical data has been less effective. Recent work has shown that ratio-matching through \emph{score-entropy} within a continuous-time discrete Markov chain (CTMC) framework serves as a competitive alternative to autoregressive models in language modeling.
To enhance this framework, we first introduce three new theorems concerning the KL divergence between the data and learned distribution. Our results serve as the discrete counterpart to those established for continuous diffusion models and allow us to derive an improved upper bound of the perplexity. Second, we empirically show that ratio-matching performed by minimizing the \emph{denoising cross-entropy} between the clean and corrupted data enables models to outperform those utilizing score-entropy with up to 10\% lower perplexity/generative-perplexity, and 15\% faster training steps.
 To further support our findings, we introduce and evaluate a novel CTMC transition-rate matrix that allows prediction refinement, and derive the analytic expression for its matrix exponential which facilitates the computation of conditional ratios thus enabling efficient training and generation.
\end{abstract}

% Body
% -----
\section{Introduction}
Modeling data distributions is a fundamental task in machine learning. In the case of continuous data distributions, recent advancements in continuous diffusion models \citep{orig_diff, DBLP:journals/corr/abs-2006-11239, DBLP:journals/corr/abs-1905-07088} have demonstrated impressive capabilities in generating data samples and performing density estimation \citep{NEURIPS2021_0a9fdbb1, song2021scorebased, haxholli2023faster, NEURIPS2021_b578f2a5}. Despite these achievements, the application of such models to categorical data distributions, like language, remains limited, as continuous diffusion models generally underperform compared to autoregressive models in these scenarios \citep{chen2022analog, gulrajani2024likelihood, li2022diffusion, dieleman2022continuous, strudel2022self}. To address this, recent research has focused on the development of discrete diffusion models \citep{austin2021structured, campbell2022continuous, meng2022concrete, lou2023discrete, sahoo2024simple, shi2024simplified, ou2025your} which offer distinct advantages compared to autoregressive models, such as the ability to infill various parts of a sequence non-sequentially and have the potential to reduce computing time and expenses in generating lengthy sequences \citep{deschenaux2024promises, christopher2024speculative}.

Evaluating discrete diffusion models, however, presents a practical challenge due to the difficulty in calculating the perplexity, unlike in autoregressive models where this computation is straightforward. Although a recent perplexity bound has been proposed by \citet{lou2023discrete}, no tightness guarantees exist. In this paper, we present three theorems concerning the Kullback-Leibler (KL) divergence between the data distribution and the learned distribution in discrete diffusion models \citep{lou2023discrete}. These results serve as the discrete analogue of the continuous diffusion theorems provided in \citet{song2021maximum}. One of our key contributions is Theorem \ref{theorem4}, which provides an upper bound ($J_2$) on the cross-entropy between the data and learned distributions, offering a more direct way of bounding the perplexity. This bound is computationally more efficient than the existing bound in \citet{lou2023discrete}, and empirical results suggest that it is also slightly tighter.

In addition, inspired by model reparametrizations \citep{DBLP:journals/corr/abs-2006-11239, karras2022elucidating, lou2023discrete}, this paper examines the ratio-matching training objective SEDD in \citet{lou2023discrete}. We highlight that the sole unknowns implicitly learned by the model are the per-token marginal probabilities across the vocabulary, conditioned on the current perturbed sequence. Consequently, rather than modeling the ratios directly,  we employ a weighted version of the denoising cross-entropy loss $L_{ll}$ proposed in \citet{campbell2022continuous}, which also mirrors the cross-entropy loss utilized in continuous diffusion models, as described in \citet{dieleman2022continuous}. We show empirically that by modifying the reconstruction of the scores, training with cross-entropy outperforms direct ratio matching for all types of tested discrete diffusion dynamics. We name this strategy of using cross-entropy for training and the adjusted ratio reconstruction for generation, cross-entropy discrete diffusion (CEDD). Similar advantageous results when utilizing the cross-entropy loss have been reported in studies of \emph{absorb} discrete diffusion, as shown in concurrent research by \citet{sahoo2024simple, shi2024simplified, ou2025your}. These particular results, up to a scaling factor, are specific cases within the broader CEDD framework. CEDD improves upon SEDD by circumventing the learning of conditional ratios—which are analytically determinable— focusing instead on learning the mixing weights that constitute the necessary marginal ratios for generation. This focus not only conserves modeling resources but is also particularly advantageous when the distribution of conditional ratios is complex.

To illustrate our point, we design a new transition-rate matrix named \emph{roulette} diffusion, and derive its matrix exponential. The roulette diffusion is an interpolation between the absorb and the uniform diffusion. In the forward process, a token can transition to any state until it hits the absorb state, that is, until it is masked. In return, the reverse process begins with a sequence of masked tokens, which are gradually unmasked, and where the unmasked tokens can be refined. Intuitively. this capability should be important for discrete diffusion models \citep{deschenaux2024promises}, and can be useful for downstream tasks, as shown in our spelling correction experiments. Moreover, during the reverse process, the scores/ratios of unmasked tokens have much larger magnitudes than those of masked tokens, posing a significant learning challenge for the network due to output scale variability. Employing the CEDD strategy mitigates this challenge as demonstrated experimentally.

In summary, the main \textbf{contributions} of this paper include:
\begin{itemize}[leftmargin=10pt]
    \item Providing 3 new theorems concerning the KL divergence and cross entropy between the data and the learned distribution. Improving model evaluation through the bound provided in Theorem \ref{theorem4}.
    \item Introducing a new transition-rate matrix (roulette diffusion) that allows token correction after unmasking in the reverse process. Deriving its matrix exponential which enables efficient training using SEDD and generation when CEDD is employed.

    \item Comparing the performance of SEDD and CEDD experimentally in the task of language modelling on absorb, uniform and roulette diffusion models. Showing that CEDD outperforms SEDD in all cases in terms of perplexity.
\end{itemize}

\section{Preliminaries and Notation}
\subsection{Markov Chains Over Finite-State Spaces}
\label{gen_inst}

A discrete-time Markov Chain in a finite-state space is a stochastic process $X_1, X_2, \ldots, X_{\bar{T}}$, where each state $X_t$ depends solely on the preceding one. The states $X_t$ can take values from $\{1, 2, \ldots, S\}$, and $\bar{T}$ represents the number of time steps. The probability of being in state $x$ at time $t$ is
\begin{equation}\label{first}
    p_t(X_t=x)= \sum_{y=1}^S p_t(X_t=x, X_{t-1}=y)=\sum_{y=1}^Sp_{t|t-1}(X_t=x|X_{t-1}=y)p_{t-1}(X_{t-1}=y).
\end{equation}
Placing all such probabilities $p_t(X_t=x)$ in a vector $\vs_t$, such that $\vs_t(x)=p_t(X_t=x)$, gives
\begin{equation}\label{second}
    \vs_t= \mP\vs_{t-1}, \text{ where } \mP(x, y) = p_{t|t-1}(X_t=x|X_{t-1}=y).
\end{equation}

One can generalize such processes into Continuous Time Markov Chains (CTMCs) where $t\in[0, \bar{T}]$, \citep{anderson2012continuous}. For simplicity, we make the choice $\bar{T}=1$. To construct a CTMC, one first chooses a transition-rate matrix $\mQ_t$, which has the property that its non-diagonal elements are non-negative, and the elements in each of its columns add to zero \citep{suhov2008probability}.  Given an initial probability distribution $\vs_0$, the equation below fully determines the evolution of the probability with respect to time: 
\begin{equation}\label{ode}
    \frac{d\vs_t}{dt}=\mQ_t\vs_t.
\end{equation}
In addition, we choose $\mQ_t=\sigma^{'}(t) \mQ$, where $\mQ$ is itself a constant transition-rate matrix and where function $\sigma$ is monotonically increasing, and satisfies $\sigma(0)=0$ as well as $\lim_{t\rightarrow 1}\sigma(t)=T$. In this setting, the distribution over states at time $t$ is the solution of the linear ODE in Equation (\ref{ode}), that is, $\vs_t=e^{\sigma(t)\mQ}\vs_0$.
\newline
 Matrices $\mQ_t$ are chosen such that: a) the matrix exponential $e^{\sigma(t)\mQ}$ is easy to calculate, which is essential as $p_{t|0}(x|y)=e^{\sigma(t)\mQ}(x,y)$; and b) $\vs_1$ is an easy reference distribution to sample from \citep{austin2021structured, campbell2022continuous}.
 \newline
Finally, similar to diffusion processes in continuous spaces, the continuous-time Markov chain in Equation (\ref{ode}) also admits a reverse process \citep{kelly1979reversibility, sun2022score}:
\begin{equation}\label{back_ode}
    \frac{d\vs_{1-t}}{dt}=\bar{\mQ}_{1-t}\vs_{1-t},
\end{equation}
where $\bar{\mQ}_t(x,y)={\mQ}_t(y,x)\frac{p_t(x)}{p_t(y)}$ for $x\neq y$, and $\bar{\mQ}_t(x,x)=-\sum_{y\neq x}\bar{\mQ}_t(y,x)$. Since we can easily sample from the reference distribution, the only unknowns preventing us from being able to run backwards are the ratios $\frac{p_t(x)}{p_t(y)}$ also known as concrete scores \citep{meng2022concrete, lou2023discrete}. Once such ratios are modeled using a neural network, we can generate samples from the learned data distribution $p_0^\theta$ by discretizing Equation (\ref{back_ode}) as follows:
\begin{equation}\label{gen_euler}
    p(x_{t-\epsilon} = y \mid x_t = x) = \delta_{x}(y) + \bar{\mQ}_t(y, x) \epsilon + O(\epsilon^2).
\end{equation}
Additional details are provided in Appendix \ref{AppendixD}.
\subsection{SEDD: Estimating the Ratios via Score Entropy }
As pointed out in the previous subsection, we wish to model the ratios $\frac{p_t(y)}{p_t(x_t)}$ via a neural network $s_{\theta}(x_t, t)_y$, for example by minimizing the score entropy loss \citep{lou2023discrete}:
\begin{equation}\label{se_loss}
\mathbb{E}_{x_t \sim p_t} \sum_{y \neq x_t} w_{x_t, y}  \ell\left(\frac{p_{t}(y)}{p_{t}(x_t )}, s_\theta(x_t , t)_y\right), \text{ for } \ell(a, b) = 
 \left(
b-a \log b + K (a)
\right),
\end{equation}
and $K(a)= a(\log{a}-1)$. In \citet{lou2023discrete}, $w_{x_t, y}= \mQ_t(x_t , y) $, and furthermore they show that an equivalent loss is the following:
\begin{equation}\label{cse_loss}
\mathbb{E}_{x_0 \sim p_0, x_t  \sim p_{t|0}(\cdot | x_0)} \sum_{y \neq x_t } w_{x_t, y} \ell\left(\frac{p_{t|0}(y|x_0)}{p_{t|0}(x_t |x_0)}, s_\theta(x_t , t)_y\right),
\end{equation}
which side-steps the problem of not knowing the marginal ratios $\frac{p_t(y)}{p_t(x_t )}$, by employing \mbox{$p_{t|0}(i|j)=e^{\sigma(t)\mQ}(i,j)$}. A more detailed derivation of Equation (\ref{cse_loss}) can be found in Appendix \ref{rederive}.

\subsection{Discrete Diffusion for Language Modeling - Token Level Transitions}\label{difflanguage}

In the case of Language Modeling, we write a sequence of length $L$ from the data distribution as $\vx_0$, where $\vx_0=(x_0^1, x_0^2, ..., x_0^L)$ and $x_0^i \in \text{Vocab}=\{1,2,...,V\}$. The number of possible sequences, that is, the number of states $S$ is $V^L$. Unfortunately, this implies that it is not computationally feasible to model the ratios of probabilities between the current state $\vx_t$ and all other states $\vy$, since the output of our neural network would have to be $V^L$ dimensional \citep{campbell2022continuous}.

We follow the usual approach \citep{campbell2022continuous, lou2023discrete} to mitigate this issue, which is to select a sparse matrix $\mQ_t(S\times S)$, such that each entry $\mQ_t(\vx,\vy)$ for two sequences $\vx,\vy$ that differ in more than one token will be zero. The forward process that such a $\mQ_t$ defines, can equivalently described as follows: at each discretized step, only one uniformly randomly chosen token from the current sequence can be modified, according to a token level forward diffusion process \mbox{$\mQ^{tok}_t (V\times V)$}. More formally, for $\vx=(x_0^1, ...,x^i, ..., x_0^L)$ and $\vy=(x_0^1, ...,\hat{x}^i, ..., x_0^L)$, if $x^i\neq \hat{x}^i$ we have $\mQ_t(\vx, \vy)=\mQ^{tok}_t(\vx^i, \vy^i)$, otherwise $\mQ_t$ is zero in other non-diagonal entries. For such a sparse choice of $\mQ_t$, and $\vy$ which only differs from $\vx$ at a single position $i$, Expression (\ref{gen_euler})  becomes
\begin{equation}
p(x_{t-\epsilon}=\vy \mid x_t = \vx) =
\begin{cases} 
1-\sum_{z^i\in\text{Vocab}\setminus x^i}\mQ^{tok}_t(x^i, z^i)\frac{p_t(\vz)}{p_t(\vx)}\epsilon + O(\epsilon^2). & \text{if } \vy= \vx\\
\mQ^{tok}_t(x^i, y^i)\frac{p_t(\vy)}{p_t(\vx)} \epsilon + O(\epsilon^2). & \text{if } \vy\neq \vx\\
\end{cases}
\end{equation}

where $\vz$ denotes a sequence that is identical to $\vx$ everywhere, but position $i$. Thus, the usual approach entails only modeling the ratios between $\vx$ and `neighbours' $\vy$ which only differ from  $\vx$ by one token. The number of such neighbours is $L\times V$, that is $V$ per each of the $L$ positions, hence the output of the network is $L\times V$ coinciding with that of transformers in autoregressive language models. It should be pointed out that one can indeed use Expression (\ref{cse_loss}) for training, due to the fact that tokens are perturbed independently from one another in the forward process $p_{t|0}(\vx_t|\vx_0)=\prod_j p_{t|0}({x_t}^j|{x_0}^j)$, and thus
\begin{equation}
\frac{p_{t|0}(\vy|\vx_0)}{p_{t|0}(\vx_t|\vx_0)}=\frac{\prod p_{t|0}(y^j|{x_0}^j)}{\prod p_{t|0}({x_t}^j|{x_0}^j)}=\prod \frac{p_{t|0}(y^j|{x_0}^j)}{p_{t|0}({x_t}^j|{x_0}^j)}=\frac{p_{t|0}(y^i|{x_0}^i)}{p_{t|0}({x_t}^i|{x_0}^i)}=\frac{e^{\mQ^{tok}_t}(y^i, x^i_0)}{e^{\mQ^{tok}_t}(x^i_t, x^i_0)},
\end{equation}
Finally, the noise schedule $\sigma_t$ is typically loglinear $-\log(1-(1-\epsilon)t)$ or geometric $\sigma_{min}^{1-t}\cdot\sigma_{max}^{t}$.

\section{Methodology and Theoretical Results}
\label{headings}
In Subsection \ref{CEKL}, we provide results related to the cross entropy and the KL divergence between the data and the learned distribution in the CTMC (discrete diffusion) framework. The first three theorems therein can be considered as the discrete diffusion analog of the ones given in \citep{song2021maximum}. Importantly, Theorem \ref{theorem4} provides an upper bound ($J_2$) on the cross entropy between the data and learned distribution which can be used to bound the perplexity, and which does not depend on the function $K$ (Equation \ref{se_loss}). We emphasize that the results hold for general CTMCs, and not only in the special case of token-level transitions. From the second subsection onwards, we operate in the token-level transition framework. More precisely, in Subsection \ref{subsection3.2}, we introduce the roulette transition-rate matrix, and provide an expression for its exponential. In Subsection \ref{testingspeed}, we state Proposition \ref{proposition1}, which enables a more efficient estimation of $J_2$. In Subsection \ref{subsection3.4}, we highlight that similarly to the continuous case \citep{dieleman2022continuous}, the ratios can be modeled using $L_{ll}$ from \citet{campbell2022continuous}, and present how this approach is adapted in our experimental setup.

\subsection{Cross Entropy and KL Divergence Results}\label{CEKL}
 We begin by finding an upper bound for the KL divergence between the data and the learned distribution. The proofs are provided in Appendix \ref{threekl}.

\begin{theorem}\label{theorem1} Define a CTMC with transition matrix ${\mQ}_{t}$ that runs from time $0$ to $1$. The true reverse process defines a probability evolution $p_t$ from $p_1$ to the data distribution $p_0$, while the learned reverse process induces the evolution $p^{\theta}_t$ from the reference distribution $p^{\theta}_1=p_{r}$ to the approximation of the data distribution $p^{\theta}_0$. In this setting, the following KL divergence bound holds 
\begin{equation}
    D_{KL}(p_0||p_0^\theta)\leq \int_0^1\mathbb{E}_{x_t\sim p_t}\sum_{y\neq x_t}\mQ_t(x_t,y)\ell\left(\frac{p_{t}(y)}{p_{t}(x_t)}, s_\theta(x_t, t)_y\right)dt + D_{KL}(p_1||p_{r}).
\end{equation}
where $\ell(a, b) = 
 \left(
b-a \log b + K (a)
\right)$ and $K(a) = a(\log a - 1)$.
\end{theorem}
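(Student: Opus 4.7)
The plan is to lift the marginal divergence $\KL(p_0\|p_0^\theta)$ to a divergence between the full path measures of the two reverse CTMCs on $[0,1]$, and then invoke the data processing inequality. The true reverse chain has initial law $p_1$ and generator $\bar{\mQ}_t(y,x)=\mQ_t(x,y)\,p_t(y)/p_t(x)$ for $y\neq x$, while the learned reverse chain has initial law $p_r$ and generator $\bar{\mQ}^\theta_t(y,x)=\mQ_t(x,y)\,s_\theta(x,t)_y$. Writing $\mathbb{P}$ and $\mathbb{P}^\theta$ for the induced path laws, the marginals at $t=0$ are exactly $p_0$ and $p_0^\theta$, so $\KL(p_0\|p_0^\theta)\leq \KL(\mathbb{P}\|\mathbb{P}^\theta)$ by data processing.

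The key analytic step is the chain-rule decomposition of the path-level KL into an initial term and an integrated generator-KL,
\begin{equation*}
\KL(\mathbb{P}\|\mathbb{P}^\theta)=\KL(p_1\|p_r)+\int_0^1\E_{x\sim p_t}\sum_{y\neq x}\Bigl[\bar{\mQ}^\theta_t(y,x)-\bar{\mQ}_t(y,x)+\bar{\mQ}_t(y,x)\log\frac{\bar{\mQ}_t(y,x)}{\bar{\mQ}^\theta_t(y,x)}\Bigr]dt.
\end{equation*}
I would derive this via the discrete-space analogue of Girsanov's theorem: write the Radon--Nikodym derivative between the two CTMC laws as the initial-distribution ratio times a product over jump events times an exponential of the compensator difference, take $\log$, and use the fact that under $\mathbb{P}$ the expected number of $x\to y$ jumps on an infinitesimal slice is $\bar{\mQ}_t(y,x)\,dt$. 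An equivalent elementary derivation is to write the per-step KL for the Euler discretisation with step $\epsilon$, expand $\log(1+O(\epsilon))$ on the diagonal entries, and pass to the limit $\epsilon\to 0$; the off-diagonal entries contribute the log-ratio term and the difference of rates separately.

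With the identity in hand, the rest is algebraic bookkeeping. Substituting $a=p_t(y)/p_t(x)$ and $b=s_\theta(x,t)_y$, the common factor $\mQ_t(x,y)$ cancels inside the logarithm of the generator ratio, so the bracket simplifies as
\begin{equation*}
\mQ_t(x,y)\bigl[b-a+a\log(a/b)\bigr]=\mQ_t(x,y)\bigl[b-a\log b+a(\log a-1)\bigr]=\mQ_t(x,y)\,\ell(a,b),
\end{equation*}
which matches the theorem's integrand exactly. Combining with the $\KL(p_1\|p_r)$ boundary term yields the stated bound.

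The main obstacle is a clean justification of the CTMC path-KL identity. One must handle the degenerate case in which $s_\theta(x,t)_y=0$ for some $y$ with $p_t(y)>0$ (the RHS is then $+\infty$ and the inequality is vacuous), ensure mutual absolute continuity of the two path measures on the event this does not occur, and make either the Euler-limit or the Girsanov-type argument rigorous without assuming boundedness of $\bar{\mQ}_t^\theta$ in $t$. Once this single analytic identity is established, no further difficulty arises: the data processing inequality and the substitution above are routine.
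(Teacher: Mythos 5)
Your proposal is correct and follows essentially the same route as the paper: bound the marginal KL by the path-level KL via the data processing inequality, apply the known decomposition of the KL between two CTMC path measures into an initial-distribution term plus an integrated generator-level term, and algebraically reshape the integrand into $\mQ_t(x_t,y)\,\ell(a,b)$. The only cosmetic differences are that the paper packages the argument in a general lemma with arbitrary ratio functions $r,\hat{r}$ and simply cites \citet{NIPS2007_735b90b4} for the path-KL formula, whereas you sketch its derivation via a discrete Girsanov or Euler-limit argument.
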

The following theorem provides an expression for the entropy of the data distribution. Furthermore, it provides sufficient conditions for the bound given above to become tight.

\begin{theorem}\label{theorem3} Denote the intermediate distributions at time $t$ determined by the true reverse process, and by the learned reverse process with $p_t$ and $p_t^\theta$, respectively. We can write the entropy of the data distribution $H(p_0)$ as
\begin{equation}
H(p_0) =  H(p_1) - \int_0^1  \mathbb{E}_{x_t \sim p_t} \sum_{y}\mQ_t(x_t,y) K\left(\frac{p_t(y)}{p_t(x_t)}\right)  dt.
\end{equation}
In addition, if the learned ratios ${s_\theta(x_t, t)_y}$ equal $\frac{p^\theta_t(y)}{p^\theta_t(x_t)}$ and $p_1=p_1^\theta:=p_r$, then the inequality in Theorem \ref{theorem1}, becomes an equality.
\end{theorem}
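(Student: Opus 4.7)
The plan is to handle the two claims in sequence.

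For the entropy identity, I would differentiate $H(p_t) = -\sum_x p_t(x)\log p_t(x)$ in time. Using the Kolmogorov forward equation $\dot p_t(x) = \sum_y \mQ_t(x,y) p_t(y)$ together with the fact that $\sum_x \dot p_t(x) = 0$, the derivative reduces to $-\sum_{x,y} \mQ_t(x,y) p_t(y) \log p_t(x)$. Exploiting the column-sum-zero property $\sum_x \mQ_t(x,y)=0$ lets me replace $\log p_t(x)$ with $\log(p_t(x)/p_t(y))$ inside this sum, matching (up to a sign) the logarithmic part of $K(p_t(y)/p_t(x_t))$. The remaining term $-p_t(y)/p_t(x_t)$ of $K$, once weighted by $p_t(x_t)\mQ_t(x_t,y)$ and summed over all $x_t,y$, reduces to $\sum_{x_t}\dot p_t(x_t) = 0$. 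Integrating the resulting identity $\frac{d}{dt}H(p_t) = \mathbb{E}_{x_t\sim p_t}\sum_y \mQ_t(x_t,y) K(p_t(y)/p_t(x_t))$ from $0$ to $1$ and rearranging gives the claim.

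For the tightness statement, the key step is to show that the assumptions $s_\theta(x,t)_y = p_t^\theta(y)/p_t^\theta(x)$ and $p_1 = p_1^\theta = p_r$ actually force $p_t^\theta \equiv p_t$ on $[0,1]$. Substituting the self-consistent ratio into $\bar{\mQ}_t^\theta(x,y) = \mQ_t(y,x) s_\theta(y,t)_x$, the $p_t^\theta$ factors inside $[\bar{\mQ}_t^\theta p_t^\theta](x)$ cancel, and the column-sum-zero property of $\mQ_t$ collapses the expression to $-[\mQ_t p_t^\theta](x)$. Combined with the sign induced by the time reversal in Equation~(\ref{back_ode}), this shows that $p_t^\theta$ obeys the same linear ODE $\dot p = \mQ_t p$ as $p_t$. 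Uniqueness of solutions to this initial value problem, together with the matching terminal condition at $t=1$, yields $p_t^\theta \equiv p_t$.

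With this identification in hand, tightness is immediate. A direct check gives $\ell(a,a) = (a - a\log a) + a(\log a - 1) = 0$, so the integrand in Theorem~\ref{theorem1} vanishes pointwise; $D_{KL}(p_1\Vert p_r) = 0$ by assumption; and $D_{KL}(p_0\Vert p_0^\theta) = 0$ since $p_0 = p_0^\theta$. Both sides of Theorem~\ref{theorem1} are therefore zero, yielding equality. The main obstacle is the self-consistency step: one has to track the signs and the direction of time in the reverse-time ODE carefully to conclude $p_t^\theta = p_t$ without circular reasoning, i.e. to verify that it is precisely the \emph{forward} equation (not a spuriously related evolution) that is being recovered.
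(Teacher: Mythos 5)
Your argument is correct in both parts, but it takes a noticeably different route from the paper's own proof, and the difference is worth recording.

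For the entropy identity, the paper does not differentiate $H(p_t)$ directly. Instead it first derives a general cross-entropy identity, $\frac{d}{dt}H(p_t,p_t^\theta) = -\mathbb{E}_{x_t\sim p_t}\sum_y \mQ_t(x_t,y)\left(\frac{p_t^\theta(y)}{p_t^\theta(x_t)} - \frac{p_t(y)}{p_t(x_t)}\log\frac{p_t^\theta(y)}{p_t^\theta(x_t)}\right)$, using the forward Kolmogorov equation for both $p_t$ and $p_t^\theta$, and then specializes to $p_t^\theta \equiv p_t$. Your route is shorter: you differentiate $H(p_t)$ in time directly, kill the $\sum_x \dot p_t(x)$ term by probability conservation, use column-sum-zero to swap $\log p_t(x)$ for $\log(p_t(x)/p_t(y))$, and recognize the result as the $K$-integrand (after noting the $-a$ part of $K(a)$ contributes $\sum_{x_t}\dot p_t(x_t)=0$). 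Both reach the same identity; yours avoids introducing $p_t^\theta$ at all, which is cleaner for this part of the claim.

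For the tightness claim, the paper's route is more computational: it combines the cross-entropy integral identity (with $p_t^\theta(y)/p_t^\theta(x_t)$ replaced by $s_\theta(x_t,t)_y$ via self-consistency) with the entropy identity, arrives at an exact integral formula for $D_{KL}(p_0\Vert p_0^\theta)$, observes the diagonal term contributes zero, and notes $D_{KL}(p_1\Vert p_r)=0$. Your route instead establishes the stronger structural fact that self-consistency plus matching terminal marginals forces $p_t^\theta\equiv p_t$, and then collapses both sides of the bound to zero via $\ell(a,a)=0$. This is a genuinely useful complement: the paper's derivation of the cross-entropy identity silently uses $\frac{\partial p_t^\theta}{\partial t} = \sum_y \mQ_t(x_t,y)p_t^\theta(y)$, i.e.\ that the learned reverse process marginals satisfy the \emph{forward} Kolmogorov equation with the original rate matrix $\mQ_t$. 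That is not automatic for a process defined by a reverse-time generator $\bar{\mQ}_t^\theta$; it is exactly the consequence of self-consistency that you prove explicitly when you show the $p_t^\theta$ factors cancel in $\sum_y \bar{\mQ}_t^\theta(x,y)p_t^\theta(y)$ and the column-sum-zero identity collapses the result to $-[\mQ_t p_t^\theta](x)$. So your sub-lemma actually supplies the justification for a step the paper treats as obvious. The only thing you lose relative to the paper is the intermediate exact formula $D_{KL}(p_0\Vert p_0^\theta) = \int_0^1 \mathbb{E}_{x_t\sim p_t}\sum_{y\neq x_t}\mQ_t(x_t,y)\,\ell(\cdot,\cdot)\,dt$ under self-consistency, which is reused by the paper in the proof of Theorem~\ref{theorem4}; your approach gives the equality only as $0=0$ rather than as a term-by-term identity.
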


A particular case where the conditions of the theorem above hold is when $s_\theta(x_t, t)_y=\frac{p_t(y)}{p_t(x_t)}$ as then $\frac{p^\theta_t(y)}{p^\theta_t(x_t)}=\frac{p_t(y)}{p_t(x_t)}=s_\theta(x_t, t)_y$. The third theorem gives an upper bound of the negative log-likelihood at a single point. This is a central result in \citet[Theorem 3.6]{lou2023discrete}, but we restate it here for completeness, and provide an alternative, more detailed proof in Appendix \ref{threekl}. 

\begin{theorem}\label{theorem2} Let $p_0^\theta$ denote the learned distribution from which the reverse process samples. The negative log-probability of a state $x_0$ being sampled by the reverse process can be bounded from above as follows,
\begin{equation*}
    -\log p_0^\theta(x_0)\leq \int_0^1 \mathbb{E}_{x_t \sim p_{t|0}(\cdot|x_0)}  \sum_{y \neq x_t} \mQ_t(x_t, y) \ell\left(\frac{p_{t|0}(y|x_0)}{p_{t|0}(x_t|x_0)}, s_\theta(x_t, t)_y\right) dt
\end{equation*}
\begin{equation}\label{J2_pre}
+ D_{KL}(p_{1|0}(\cdot|x_0)||p_{r}).
\end{equation}
\end{theorem}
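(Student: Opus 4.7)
The cleanest route is to observe that Theorem~\ref{theorem1} is stated for an arbitrary data distribution $p_0$, and then apply it to the degenerate distribution $\tilde p_0 = \delta_{x_0}$. Under this substitution, the forward CTMC starting from $\tilde p_0$ has marginals $\tilde p_t(y) = p_{t|0}(y \mid x_0)$, so the score-entropy integrand on the right-hand side automatically turns into $\ell\!\left(\frac{p_{t|0}(y|x_0)}{p_{t|0}(x_t|x_0)}, s_\theta(x_t,t)_y\right)$ and the endpoint term becomes $D_{KL}(p_{1|0}(\cdot|x_0)\|p_r)$. On the left-hand side, $D_{KL}(\delta_{x_0}\|p_0^\theta) = -\log p_0^\theta(x_0)$, exactly the quantity to be bounded. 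So the theorem follows as an immediate specialization of Theorem~\ref{theorem1}.

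For the ``more detailed'' alternative proof promised by the authors, the plan is to argue directly at the path-measure level rather than invoking Theorem~\ref{theorem1} as a black box. Let $P^{x_0}$ denote the law on path space $\Omega = D([0,1],\{1,\dots,S\})$ of the forward CTMC with generator $\mQ_t$ started at $\gamma_0 = x_0$, and let $Q^\theta$ denote the law of the learned reverse CTMC started from $p_r$ at time $1$. Since $P^{x_0}$ is supported on $\{\gamma_0 = x_0\}$, we may write
\begin{equation*}
p_0^\theta(x_0) \;=\; Q^\theta(\gamma_0 = x_0) \;=\; \mathbb{E}_{P^{x_0}}\!\left[\frac{dQ^\theta}{dP^{x_0}}\right],
\end{equation*}
and Jensen's inequality yields $-\log p_0^\theta(x_0) \le \mathbb{E}_{P^{x_0}}\!\left[-\log \frac{dQ^\theta}{dP^{x_0}}\right]$. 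The next step is to compute the Radon–Nikodym derivative between two CTMCs using the standard CTMC Girsanov-type formula: the log-ratio splits into (i) a boundary contribution at $t=1$ given by $\log \frac{dP^{x_0}_1}{dp_r} = \log \frac{p_{1|0}(\cdot|x_0)}{p_r(\cdot)}$ whose $P^{x_0}$-expectation is precisely $D_{KL}(p_{1|0}(\cdot|x_0)\|p_r)$, and (ii) a time-integrated ``generator difference'' contribution, summing over jumps the log-ratios of jump rates minus the integrated holding-time corrections.

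Collecting terms at each $t$, the generator-difference contribution evaluates (after taking the $P^{x_0}$-expectation) to $\int_0^1 \mathbb{E}_{x_t \sim p_{t|0}(\cdot|x_0)} \sum_{y\neq x_t}\mQ_t(x_t,y)\bigl[s_\theta(x_t,t)_y - r(x_t,y)\log s_\theta(x_t,t)_y\bigr]\,dt$ plus a term depending only on $r(x_t,y) := p_{t|0}(y|x_0)/p_{t|0}(x_t|x_0)$ that matches $K(r)= r(\log r - 1)$, thereby assembling into the score-entropy loss $\ell$. The main obstacle is carrying out this Girsanov-style bookkeeping: one must carefully identify which terms in the log-likelihood ratio are jump contributions (producing $-r\log s_\theta$), which are compensator contributions (producing $s_\theta$), and which come from the $r$-dependent pieces of the forward process itself (producing $K(r)$), and then verify that the resulting expression is exactly $\ell(r, s_\theta)= s_\theta - r\log s_\theta + K(r)$ integrated against $\mQ_t(x_t,y)$.
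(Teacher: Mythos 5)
Your first route is exactly the paper's proof: the paper invokes Lemma~\ref{lemma1} (the general form of Theorem~\ref{theorem1} with arbitrary $r,\hat r$) with $r(x,y)=\frac{p_{t|0}(y|x_0)}{p_{t|0}(x_t|x_0)}$, $\hat r=s_\theta$, and initial distribution $q_0^*=\delta_{x_0}$, which yields $D_{KL}(\delta_{x_0}\|p_0^\theta)=-\log p_0^\theta(x_0)$ on the left and the stated bound on the right.

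Your ``more detailed'' sketch in the second paragraph is not a genuinely different argument but an unfolding of what Lemma~\ref{lemma1} itself does: the data-processing inequality $D_{KL}(q_0^*\|\hat q_0)\le D_{KL}(q^*\|\hat q)$ on path measures, combined with the known expression for the KL divergence between two CTMC path measures (Opper--Sanguinetti), which already splits into the boundary term and the time-integrated generator-difference term and reorganizes into $\ell$. Two remarks on your Jensen variant: first, $Q^\theta\not\ll P^{x_0}$ in general (since $Q^\theta$ charges paths with $\gamma_0\ne x_0$), so you should write $\frac{dQ^\theta|_A}{dP^{x_0}}$ for $A=\{\gamma_0=x_0\}$ rather than $\frac{dQ^\theta}{dP^{x_0}}$; the identity $p_0^\theta(x_0)=\mathbb{E}_{P^{x_0}}\bigl[\frac{dQ^\theta|_A}{dP^{x_0}}\bigr]$ and Jensen then give $-\log p_0^\theta(x_0)\le D_{KL}(P^{x_0}\|Q^\theta)$, which is just the DPI step by another name. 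Second, the piece you label $K(r)$ does not come from a separate ``$r$-dependent contribution of the forward process''; it emerges automatically when you expand $\log\frac{\mQ^*_t(y,x_t)}{\hat\mQ_t(y,x_t)}$ with $\mQ^*_t(y,x_t)=\mQ_t(x_t,y)r(x_t,y)$ and $\hat\mQ_t(y,x_t)=\mQ_t(x_t,y)\hat r(x_t,y)$, and regroup the jump and compensator terms as in the paper's Lemma~\ref{lemma1} derivation. So the plan is sound but, once carried out, reproduces the paper's argument rather than offering an independent one.
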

Since the noise schedule in \citep{lou2023discrete, ou2025your} is chosen such that $p_1\approx p_r$ and thus $D_{KL}(p_{1|0}||p_{r})\approx 0$, one can take the expectation $\mathbb{E}_{x_0}$ on both sides of Inequality (\ref{J2_pre}) to get a bound on the cross entropy $\frac{1}{L}H(p_0, p_0^\theta)=\mathbb{E}_{x_0}[-\frac{1}{L}\log(p_0^\theta(x_0))]$. One approach is to compute the RHS in Expression (\ref{J2_pre}) for each point $x_0$ and then average results (Appendix \ref{Naive-J_1}), which can be computationally expensive. Instead, we can divide by $L$ and take the expectation with regards to data distribution on both sides of Expression (\ref{J2_pre}) as in \citet{ou2025your}, and calculate
\begin{equation}
    J_1 = \frac{1}{L}\mathbb{E}_{t\sim U(0,1)} \mathbb{E}_{x_0 \sim p_{0}(x_0)}\mathbb{E}_{x_t \sim p_{t|0}(\cdot|x_0)} \sum_{y \neq x_t} \mQ_t(x_t, y) \ell\left(\frac{p_{t|0}(y|x_0)}{p_{t|0}(x_t|x_0)}, s_\theta(x_t, t)_y\right).
\end{equation}
Using Theorem \ref{theorem1} and \ref{theorem3}, we provide another, direct upper bound on the cross-entropy between the data and learned distributions, which evades the computation of $K$.
% todo: emphasize this is our central result
\begin{theorem}\label{theorem4} Under the conditions stated in Theorem \ref{theorem1}, the following inequality for the cross entropy between the data and the learned distribution holds:
\begin{equation*}
    H(p_0, p_0^\theta) \leq \int_0^1\mathbb{E}_{x_t\sim p_t}\sum_{y\neq x_t}\mQ_t(x_t,y)\bar{\ell}\left(\frac{p_t(y)}{p_t(x_t)}, s_\theta(x_t,t)_y \right) dt
\end{equation*}
\begin{equation}\label{altupperbound}
 -\int_0^1\mathbb{E}_{x_t\sim p_t}\sum_{y\neq x_t}\mQ_t(y,x_t)dt + H(p_1, p_{r}), \text{ where } \bar{\ell}(a, b) = 
 \left(
b-a \log b
\right).
\end{equation}

\end{theorem}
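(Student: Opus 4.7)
My plan is to derive the bound by decomposing the cross entropy as $H(p_0, p_0^\theta) = H(p_0) + D_{KL}(p_0 \,\|\, p_0^\theta)$ and then invoking Theorems~\ref{theorem1} and \ref{theorem3} separately. Theorem~\ref{theorem1} supplies an upper bound on $D_{KL}(p_0 \,\|\, p_0^\theta)$ expressed through the loss $\ell(a,b) = b - a\log b + K(a)$, while Theorem~\ref{theorem3} gives an exact expression for $H(p_0)$ involving the same $K$ term. The key observation is that $\ell$ and $\bar{\ell}$ differ precisely by $K(a)$, so summing the two results should cause the $K$ contributions to cancel, leaving only $\bar{\ell}$ terms plus boundary contributions.

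Concretely, I would first write
\begin{equation*}
H(p_0, p_0^\theta) \;\le\; H(p_1) \;-\; \int_0^1\mathbb{E}_{x_t\sim p_t}\sum_{y}\mQ_t(x_t,y)K\!\left(\tfrac{p_t(y)}{p_t(x_t)}\right)dt \;+\; \int_0^1\mathbb{E}_{x_t\sim p_t}\sum_{y\neq x_t}\mQ_t(x_t,y)\,\ell\!\left(\tfrac{p_t(y)}{p_t(x_t)},s_\theta(x_t,t)_y\right) dt \;+\; D_{KL}(p_1 \,\|\, p_r).
\end{equation*}
Then I would split the $\sum_y$ in the $K$-integral into the $y = x_t$ term and the $y \neq x_t$ terms. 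For $y = x_t$ the ratio equals $1$, so $K(1) = -1$, yielding a contribution of $-\mQ_t(x_t,x_t)$. The column-sum-zero property of $\mQ_t$ gives $-\mQ_t(x_t,x_t) = \sum_{y \neq x_t}\mQ_t(y,x_t)$, which is exactly the negative second integral on the right-hand side of the target inequality (after moving across the equality).

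Next, on the $y \neq x_t$ portion, I would use $\ell(a,b) = \bar{\ell}(a,b) + K(a)$ to write the $\ell$-integral as a $\bar{\ell}$-integral plus a $K$-integral (restricted to $y \neq x_t$); the latter cancels identically with the $y \neq x_t$ piece extracted from Theorem~\ref{theorem3}. Finally, I would fold the boundary contributions $H(p_1) + D_{KL}(p_1 \,\|\, p_r)$ into a single cross entropy $H(p_1, p_r)$ via the standard identity $D_{KL}(p \,\|\, q) = H(p, q) - H(p)$. Collecting the surviving terms produces Inequality~(\ref{altupperbound}).

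The main obstacle I anticipate is bookkeeping around the diagonal term in Theorem~\ref{theorem3}: the $\sum_y$ there ranges over all $y$ including $y = x_t$, whereas the loss terms in Theorem~\ref{theorem1} only use $y \neq x_t$. Getting this cancellation right requires the column-sum-zero property of the rate matrix and the specific value $K(1) = -1$. Once that algebraic identification is done cleanly, the rest is simply rearrangement.
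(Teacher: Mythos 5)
Your proposal is correct and takes essentially the same route as the paper: both combine Theorem~\ref{theorem1} (KL bound) with Theorem~\ref{theorem3} (entropy identity) via $H(p_0,p_0^\theta)=H(p_0)+D_{KL}(p_0\|p_0^\theta)$, cancel the $K$ contributions, use $K(1)=-1$ together with the column-sum-zero property to produce the $-\int\mathbb{E}\sum_{y\neq x_t}\mQ_t(y,x_t)\,dt$ term, and fold $H(p_1)+D_{KL}(p_1\|p_r)$ into $H(p_1,p_r)$. The only (inessential) bookkeeping difference is that the paper first extends the $\ell$-sum to all $y$ by setting $s_\theta(x_t,t)_{x_t}=1$ and then peels off the diagonal of the resulting $\bar{\ell}$-sum, whereas you keep the $\ell$-sum over $y\neq x_t$ throughout and instead split the diagonal out of the $K$-sum coming from Theorem~\ref{theorem3}.
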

The second term $-\int_0^1\mathbb{E}_{x_t\sim p_t}\sum_{y\neq x_t}\mQ_t(y,x_t)dt$ and third one $H(p_1, p_r)\approx H(p_{r})$ can be analytically computed as shown in Section \ref{testingspeed}, Proposition \ref{proposition1}. \mbox{Finally, the first term can be rewritten as}
\begin{equation}
\mathbb{E}_{t\sim U(0,1)}\mathbb{E}_{x_0\sim p_0(x_0)}\mathbb{E}_{x_t\sim p_{t |0}(\cdot | x_0)} \sum_{y\neq x_t}\mQ_t(x_t,y)\bar{\ell}\left(\frac{p_{t |0}(y|x_0)}{p_{t |0}(x_t |x_0)},{s_\theta(x_t,t)_y}\right).
\end{equation}
Therefore, due to Theorem \ref{theorem4}, we instead propose to use
\begin{align}
    J_2 &= \frac{1}{L}\left[\mathbb{E}_{t\sim U(0,1)}\mathbb{E}_{x_0\sim p_0(x_0)}\mathbb{E}_{x_t\sim p_{t |0}(\cdot | x_0)}\sum_{y \neq x_t} \mQ_t(x_t, y) \bar{\ell}\left(\frac{p_{t|0}(y|x_0)}{p_{t|0}(x_t|x_0)}, s_\theta(x_t, t)_y\right) \right. \nonumber \\
    &\quad \left. + H(p_r)  -\int_0^1 \mathbb{E}_{x_t\sim p_t}\sum_{y\neq x_t}\mQ_t(y,x_t)dt\right].
\end{align}
In both cases $Perplexity=\exp(\mathbb{E}_{x_0}[-\frac{1}{L}\log(p_0^\theta(x_0))])=\exp(\frac{1}{L}H(p_0, p_0^\theta))\leq e^{J_1},\ e^{J_2}$.

\subsection{Roulette Discrete Diffusion}\label{subsection3.2}
Typically, matrices $\mQ^{tok}$ are defined as $\mQ^{tok}=\mP^{tok}-\mI$, where $\mP^{tok}$ is idempotent, since this implies that $(\mQ^{tok})^2=-\mQ^{tok}$. This last property of $\mQ^{tok}$ greatly simplifies the calculation of $e^{\sigma(t)\mQ^{tok}}$, as by using the Taylor series, 
$ e^{\sigma(t)\mQ^{tok}}=\mI+\mQ^{tok}(1-e^{-\sigma(t)})$.
Usually, the following two matrices $\mP^{tok}$ are chosen: a) $\mP^{tok}_{uniform}(V\times V)$ where each entry is set to $\frac{1}{V}$, and b) $\mP^{tok}_{absorb}(n\times n)$ in which the last row is full of ones while all other elements are $0$, where $n=V+1$.  While absorb diffusion often outperforms the uniform one in standard evaluations, the latter is more practical for some tasks like spelling correction, where refining tokens is crucial. We propose another transition-rate matrix whose exponential can be analytically calculated. To our knowledge this is the only matrix with such a property presented so far apart from the absorb and uniform ones. We refer to this new discrete diffusion process as \emph{roulette} diffusion. The corresponding $\mP^{tok}_{roulette}(n\times n)$ is a matrix, such that $\mP^{tok}_{roulette}(i\neq n,j\neq n)=\frac{1}{V}(1-p_m)$, $\mP^{tok}_{roulette}(n,j\neq n)=p_m$, $\mP^{tok}_{roulette}(i\neq n, n)=0$ and $\mP^{tok}_{roulette}(n,n)=1$. We notice that for $p_m=1$, roulette diffusion coincides  with absorb diffusion, while for $p_m=0$ it coincides with the uniform diffusion. Intuitively, a token can transit from a non-absorb state to a non-absorb state with probability $\frac{1}{V}(1-p_m)$, until it hits the absorb state (with probability $p_m$) and then remains there. While this matrix is not idempotent, one can still calculate its exponential as stated in the following proposition (proved in Appendix \ref{appendixroulettederiv}):
\begin{proposition}\label{proposition2}
    If we denote with $\mY_t$ the matrix exponential of $\sigma_t \mQ^{tok}_{roulette}=\sigma_t \left(\mP^{tok}_{roulette}-\mI\right)$, then $\mY_t(i\not\in\{j,n\}, j\neq n)=e^{-\sigma_t  p_m}\frac{1}{n-1}(1-e^{-(1-p_m)\sigma_t })$, $\mY_t(i\neq n,i\neq n)=e^{-\sigma_t  p_m}(1-\frac{n-2}{n-1}(1-e^{-(1-p_m)\sigma_t }))$, $\mY_t(n,j\neq n)=1-e^{-\sigma_t  p_m}$, $\mY_t(i\neq n, n)=0$, and $\mY_t(n, n)=1$.
\end{proposition}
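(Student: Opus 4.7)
The plan is to avoid a direct power-series manipulation and instead decompose $\mP^{tok}_{roulette}$ as a commuting combination of two idempotents, so that the matrix exponential factors into two pieces that are each easy to exponentiate. Specifically, I will introduce two auxiliary $n\times n$ matrices: $\mA$, with $\mA(n,j)=1$ for every $j$ and zeros elsewhere (``send every state to the absorbing state''), and $\mB$, with $\mB(i,j)=\tfrac{1}{n-1}$ for $i,j\neq n$, $\mB(n,n)=1$, and zeros elsewhere (``uniformly mix the non-absorbed coordinates, keep the absorbed coordinate fixed''). An entry-by-entry check of the four cases defining $\mP^{tok}_{roulette}$ yields
\[
\mP^{tok}_{roulette} = p_m\,\mA + (1-p_m)\,\mB,
\]
so that $\mQ^{tok}_{roulette}=p_m(\mA-\mI)+(1-p_m)(\mB-\mI)$.

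Next I will establish three short algebraic identities by direct summation over a single index: $\mA^2=\mA$, $\mB^2=\mB$, and $\mA\mB=\mB\mA=\mA$. The last one carries the real content of the argument, reflecting the asymmetry of the roulette process around the absorbing state: composing $\mB$ with $\mA$ still collapses everything onto state $n$. These identities imply that $p_m(\mA-\mI)$ and $(1-p_m)(\mB-\mI)$ commute, hence
\[
e^{\sigma_t \mQ^{tok}_{roulette}} = e^{\sigma_t p_m(\mA-\mI)}\,e^{\sigma_t (1-p_m)(\mB-\mI)}.
\]
Idempotency collapses each factor into an affine combination, $e^{\sigma_t p_m(\mA-\mI)}=e^{-\sigma_t p_m}\mI+(1-e^{-\sigma_t p_m})\mA$ and the analogous expression for $\mB$, exactly as in the absorb and uniform cases already treated in the paper.

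Multiplying the two factors and using $\mA\mB=\mA$ to absorb the cross term will produce the closed form
\[
e^{\sigma_t \mQ^{tok}_{roulette}} = e^{-\sigma_t}\mI + (1-e^{-\sigma_t p_m})\mA + e^{-\sigma_t p_m}\bigl(1-e^{-\sigma_t(1-p_m)}\bigr)\mB.
\]
Reading off entry $(i,j)$ using the definitions of $\mA$ and $\mB$ reproduces the four cases claimed in the proposition; matching the off-diagonal coefficient to the stated form requires only the cosmetic rewriting $e^{-\sigma_t p_m}-e^{-\sigma_t}=e^{-\sigma_t p_m}\bigl(1-e^{-(1-p_m)\sigma_t}\bigr)$. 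The main obstacle I anticipate is discovering the correct lifts of the absorb and uniform operators to the $(V+1)$-dimensional space: $\mA$ must be defined on the absorb coordinate so that $\mA\mB=\mA$ holds, and $\mB$ must be extended so as to leave that coordinate invariant. Once those conventions are fixed, the remainder of the argument is a few lines of bookkeeping.
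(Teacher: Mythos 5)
Your proposal is correct, and the closed form $e^{\sigma_t\mQ^{tok}_{roulette}}=e^{-\sigma_t}\mI+(1-e^{-\sigma_t p_m})\mA+e^{-\sigma_t p_m}(1-e^{-\sigma_t(1-p_m)})\mB$ reproduces all five cases of the proposition after routine bookkeeping, including the diagonal entry (which the coefficients of $\mI$ and $\mB$ jointly recombine into the stated form). It does, however, take a genuinely different route from the paper's own proof of this proposition. The paper's Appendix~\ref{appendixroulettederiv} argues probabilistically: it lumps the non-absorbing states into a super-state, computes the absorption probability $1-e^{-\tau p_m}$ by a geometric Riemann-sum limit, then \emph{conditions} on not having been absorbed, derives an effective per-step move probability $\delta(\epsilon)=p(\text{move}\mid\text{not absorbed at }\tau)\to(1-p_m)\epsilon$ via Bayes' rule, and reruns the uniform-case recursion inside the non-absorbing block. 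Your argument replaces all that with an algebraic observation --- $\mP^{tok}_{roulette}=p_m\mA+(1-p_m)\mB$ with $\mA^2=\mA$, $\mB^2=\mB$, $\mA\mB=\mB\mA=\mA$ --- and then exponentiates each commuting idempotent factor separately. This is shorter and arguably cleaner, trading the paper's conditioning insight for the single identity $\mA\mB=\mA$ that encodes the same asymmetry. The paper in fact does use your style of argument, but only later, in Appendix~\ref{EVroulette_appendix}, to prove Proposition~\ref{proposition3} for the time-varying \emph{eroulette} generalization: there it decomposes the rate matrix as $(1-p_m(t))\sigma(t)\mQ^{tok}_{uniform}+p_m(t)\sigma(t)\mQ^{tok}_{absorb}$ (working at the $\mQ$ level rather than the $\mP$ level), verifies commutativity, factors the exponential, and then notes that constant $p_m$ recovers the roulette case. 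So your proof coincides with the paper's \emph{alternative} derivation of this result, not its primary one; it buys you the eroulette generalization for free, while the paper's probabilistic derivation gives more mechanistic intuition (and directly feeds the calculation in Appendix~\ref{nrcorrectedtokens} of how many tokens move before being masked).
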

The noise schedule used is the roulette-loglinear noise $-\frac{1}{p_m}\log(1-(1-\epsilon)t)$. In the reverse process, when a token is unmasked it can still be corrected with probability directly related to $p_m$ as shown in Appendix \ref{nrcorrectedtokens}. A generalization for time-evolving $p_m$ is given in Appendix \ref{EVroulette_appendix}, and the corresponding $\mQ^{tok}_{eroulette}(t)$ is named \emph{eroulette}. Therein (Proposition \ref{proposition3}), it is shown that the exponential matrix can be calculated as in the previous proposition, by substituting $p_m$ with $p_m(t)$.
% Taking the limit $\lim_{t\rightarrow 1} e^{\mQ^{tok}_t} = \lim_{t\rightarrow 1} \mI+\mQ(1-e^{-\sigma(t)})=\mI+\mQ=\mP$. Therefore, for $\mP_{uniform}$, the limiting distribution is $\vz_\infty = \lim_{t\rightarrow\infty} e^{\mQ_t}\vz_0=\mP \vz_0=\frac{1}{S}\1$, while for $\mP_{absorb}$, the equilibrium state $\vz_\infty$ is such that all the mass converges to the last entry of the vector. It is trivial to sample from either for these equilibrium reference distributions.

\subsection{Efficient Estimation of $J_2$}\label{testingspeed}

In this subsection we provide Proposition \ref{proposition1}, which shows that the second and third term on the RHS of Expression (\ref{altupperbound}) can be computed efficiently:
\begin{proposition}\label{proposition1}
    In the case of the roulette diffusion with roulette-loglinear noise, $H(p_r)=0$ and $-\int_0^1\mathbb{E}_{\vx_t\sim p_t}\sum_{\vy\neq \vx_t}\mQ_t(\vy,\vx_t)dt=\left(1-\frac{1-p_m}{n-1} \right)\frac{L}{p_m}(\epsilon-1)$. For the absorb diffusion, we have $H(p_r)=0$ and $-\int_0^1\mathbb{E}_{\vx_t\sim p_t}\sum_{\vy\neq \vx_t}\mQ_t(\vy,\vx_t)dt=L(\epsilon-1)$. Finally, in the case of uniform diffusion, $H(p_r)=L\log(V)$ and $-\int_0^1\mathbb{E}_{\vx_t\sim p_t}\sum_{\vy\neq \vx_t}\mQ_t(\vy,\vx_t)dt=-\left(1-\frac{1}{V} \right)L\int_0^1 \sigma^{'}(t)dt$.
\end{proposition}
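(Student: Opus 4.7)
The plan is to handle the two quantities in the proposition separately. For $H(p_r)$ I just inspect the reference distribution in each case: in absorb and roulette diffusion the reverse chain is initialized from the all-mask sequence, so $p_r$ is a Dirac mass and has entropy $0$; for uniform diffusion $p_r$ is uniform over $V^L$ sequences, giving $H(p_r)=\log(V^L)=L\log V$.

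For the $t$-integral, the key structural observation is that every transition-rate matrix has columns summing to zero, hence $\sum_{\vy\neq\vx_t}\mQ_t(\vy,\vx_t)=-\mQ_t(\vx_t,\vx_t)$. Using the sparse single-token factorization of $\mQ_t$ from Section \ref{difflanguage}, the sequence-level diagonal decomposes into a sum over positions: $-\mQ_t(\vx_t,\vx_t)=\sigma'(t)\sum_{i=1}^L\bigl[-\mQ^{tok}(x_t^i,x_t^i)\bigr]$. Reading off the diagonal of $\mP^{tok}$ in each model, I obtain $-\mQ^{tok}(j,j)=1-\tfrac{1-p_m}{n-1}$ for $j\neq n$ and $0$ for $j=n$ in the roulette case, $1$ for $j\neq n$ in the absorb case (which is the $p_m=1$ specialization), and $1-\tfrac{1}{V}$ uniformly in the uniform case.

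Next I take expectations under $p_t$. For uniform diffusion the summand is deterministic and equal to $L\sigma'(t)(1-1/V)$, so the $t$-integral immediately gives $-L(1-1/V)\int_0^1\sigma'(t)\,dt$. For roulette I need the expected count of currently unmasked positions $\mathbb{E}_{\vx_t\sim p_t}[N_u(\vx_t)]$. Because tokens evolve independently in the forward process and Proposition \ref{proposition2} yields $\mY_t(n,j)=1-e^{-\sigma_t p_m}$ for $j\neq n$, averaging against $p_0$ (whose tokens are never mask) gives $\mathbb{E}[N_u(\vx_t)]=Le^{-\sigma_t p_m}$. The integrand thus becomes $\sigma'(t)\bigl(1-\tfrac{1-p_m}{n-1}\bigr)Le^{-\sigma_t p_m}$.

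Finally I close the integral by exploiting the roulette-loglinear schedule $\sigma_t=-\tfrac{1}{p_m}\log(1-(1-\epsilon)t)$. A short calculation gives $e^{-\sigma_t p_m}=1-(1-\epsilon)t$ and $\sigma'(t)=\tfrac{1-\epsilon}{p_m(1-(1-\epsilon)t)}$, so their product collapses to the constant $\tfrac{1-\epsilon}{p_m}$, the $dt$ integral is trivial, and I recover $\bigl(1-\tfrac{1-p_m}{n-1}\bigr)\tfrac{L(\epsilon-1)}{p_m}$. The absorb formula follows as the $p_m=1$ specialization, producing $L(\epsilon-1)$. The main point worth flagging is that the roulette-loglinear noise is precisely engineered so that the weight $\sigma'(t)e^{-\sigma_t p_m}$ appearing in the integrand is constant in $t$; once that cancellation is spotted, the rest is pure bookkeeping.
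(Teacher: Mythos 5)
Your proof is correct and follows essentially the same route as the paper: decompose the rate sum over positions via the sparse single-token factorization, observe that the outgoing rate vanishes at masked positions (roulette/absorb) and is otherwise the constant $\sigma'(t)\bigl(1-\tfrac{1-p_m}{n-1}\bigr)$, and use $\Pr(x^i_t\neq n\mid x^i_0)=e^{-\sigma_t p_m}$ to reduce to a one-dimensional time integral. The only cosmetic difference is that you close that integral by noticing $\sigma'(t)e^{-\sigma_t p_m}$ is constant under the roulette-loglinear schedule, whereas the paper computes $\int_0^1\sigma'_t e^{-\sigma_t p_m}\,dt$ by antidifferentiation and plugs in the endpoints — same arithmetic, and your version makes the role of the schedule more transparent.
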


\subsection{Modeling Ratios via CEDD}\label{subsection3.4}

For sequences $\vx$, $\vy$ which only differ at some position $i$, we can write
 \begin{equation}
\frac{p_t(\vy)}{p_t(\vx_t)} = \Sigma_{h\in[V]}\frac{p_{t|0}(\vy^i|h)}{p_{t|0}({\vx_t}^i|h)}p^i_{0|t}(h|\vx_t),
\end{equation} where $p^i_{0|t}(\vx_0^i|\vx_t)=\sum_{\{\vx^1_0,... ,\vx^{L}_0\}\setminus \vx^i_0} p_{0|t}(\vx_0|\vx_t)$, as shown in Equation (\ref{ratioexplicitformula}), Appendix \ref{ceddappendix}. Since conditional ratios $\frac{p_{t|0}(\vy^i|h)}{p_{t|0}({\vx_t}^i|h)}$ are known, we can choose to reparametrize the score as
 \begin{equation}\label{reparam_c_to_s}
s^i_\theta(\vx_t, t)_{\vy} = \Sigma_{h\in[V]}\frac{p_{t|0}(\vy^i|h)}{p_{t|0}({\vx_t}^i|h)}f_\theta^i(\vx_t, t)[h],
\end{equation}
where $f_\theta(\vx_t, t)$, is a neural network, with a matrix output of shape $L\times V$, whose elements in each row $i$ add to $1$, that is $\sum_h f_\theta^i(\vx_t, t)[h]=1$. Intuitively, given the current perturbed sequence $\vx_t$, prediciton $f_\theta^i(\vx_t, t)[h]$ gives the probability that the pre-perturbation token at position $i$ used to be $h$.
In Appendix \ref{ceddappendix}, we explain how the loss in Expression (\ref{cse_loss}) can be optimized by minimizing the following cross-entropy loss:
 \begin{equation}
L_{ll}=-\mathbb{E}_{t \sim U(0,1)} \mathbb{E}_{\vx_0\sim p_0(\vx_0)} \mathbb{E}_{\vx_t \sim p_{t|0}(\cdot |\vx_0)} \sum_{i=1}^L w(t)\log f_\theta^i(\vx_t, t)[\vx_0^i].
\end{equation}
Thus, we can learn the ratios via $L_{ll}$ \citep{campbell2022continuous} for any type of diffusion model. This approach is analogous to the one in continuous diffusion models for language modeling used in \citet{dieleman2022continuous}, as shown in Appendix \ref{continousdiffconnection}.  In addition, in Appendix \ref{sedd_scaling_appendix}, we provide our original motivation for the reparametrization given in (\ref{reparam_c_to_s}). 

 In the uniform case, models using the direct reparametrization in (\ref{reparam_c_to_s}) underperform SEDD in terms of perplexity. Indeed, the model is too confident in its predictions when $t\rightarrow 0$  as it does not benefit form the neural network regularization due to the incorporation of the true conditional ratios. Thus, inside the ratios $\frac{p_{t|0}(\vy^i|h)}{p_{t|0}({\vx_t}^i|h)}$ we rescale $\sigma^\theta_t<0.0015$, by setting $\sigma^\theta_t=0.0015$. We also perform rescaling in the case of roulette diffusion dynamics, as when time is close to $0$ most tokens are unmasked. The same strategy is applied, as before, only to ratios corresponding to unmasked token, by rescaling $\sigma^\theta_t$ when $\sigma^\theta_t<0.5$ as follows: $\sigma^{scaled, \theta}_t = \log(1.1\sigma^\theta_t+1.1)$. Naturally, for the sake of rigor, these are also the models we employ to generate samples, whose quality is measured in terms of generative perplexity. Further, in order to evade metric hacking, we take care to not modify the $\sigma_t$ in the metrics $J_1$ and $J_2$ used to evaluate the models. Additional details and motivations for such rescalings are provided in the last paragraph of Appendix \ref{prob_to_score_appendix}. When $w(t)=1$ we refer to the method as CEDD, while when $w(t)=\log{(e+\frac{0.3}{t})}$ we use CEDD*.

\section{Experiments}
We now empirically validate the approaches and theoretical contributions presented in the previous section. In Subsection \ref{subsection4.1}, we compare the generative perplexities of models trained on OpenWebText \citep{Gokaslan2019OpenWeb} with SEDD and SEDD scaled (SEDDs, see Appendix \ref{sedd_scaling_appendix} for details) versus those trained with CEDD and CEDD*. We keep all other variables unchanged for a fair comparison, finding that CEDD outperforms SEDD in all cases. The tests are conducted for the absorb, uniform and roulette diffusion dynamics. In Subsection \ref{subsection4.2}, we evaluate the perplexity of the models, by calculating the upper bound on 5 different datasets, namely: 1BW, LAMBADA, PTB, Wikitext2 and Wikitext103 \citep{chelba2013one, paperno2016lambada, marcus_building_1993, merity2016pointer}. For the sake of reproducing the results of previous work we use $J_1$ as a metric. Finally, having computed the results using $J_1$, in Subsection \ref{subsection4.3} we also re-evaluate the models via $J_2$. Our findings suggest that the bound provided by $J_2$ is slightly tighter. In the last subsection we compare SEDD and CEDD* on a spell-checking task.
\newline
Our model employs the transformer architecture as described by \citet{lou2023discrete}, with no modifications; more details can be found in Appendix \ref{nahp}. 
%During training, sentence packing is utilized to generate blocks of uniform length for the model, a common practice in language modeling tasks.
The algorithms for training via SEDD, and sampling unconditionally and conditionally can be found in the Appendix of \citet{lou2023discrete}. On the other hand, we provide the algorithm for training using CEDD in Algorithm \ref{alg:entropy_training}, Appendix \ref{algorithms_appendix}. In all cases, samples are generated using tau-leaping \citep{gillespie-tau, campbell2022continuous}, which performs an update at each position simultaneously for each reverse time step. All models were trained for 400k parameter updates unless stated otherwise.

\subsection{Generative Perplexity Comparisons}\label{subsection4.1}
We compare the generative perplexities (GenPerp) of identical networks trained using SEDD, SEDDs, CEDD and CEDD*. To evaluate the generative perplexity of a model, we generate samples from that model, and use a GPT-2 large model in order to assess the likelihood of the generated samples. However, this metric can be unreliable, as models such as GPT-2 large are not perfect themselves, and they tend to assign high probability to some unlikely sequences, such as those that contain repetitive tokens. Such biased samples can be generated by increasing the step size, while maintaining the number of reverse steps. To ensure a fair comparison, we evade such approaches. Furthermore, recently \citet{zheng2024masked} showed that the sampling procedure in \citet{lou2023discrete} suffers from numerical precision issues. To address this, they proposed fixing the categorical sampling to 64-bit floating-point precision—a strategy that we have also adopted. In Appendix \ref{additional_res_appendix}, we also provide perplexity results as evaluated by LLama 3.1 8B \citep{dubey2024llama}. 
\newline
Initially, we perform a grid search to find the best $p_m$ for the roulette case. Out of the 4 values, $0.95,\ 0.65,\ 0.35,\ 0.05$, we found $p_m=0.95$ performs best (Appendix \ref{additional_res_appendix}, Table \ref{roulette_gen_tab}), but the optimum is likely reached when $p_m=1$. However, $p_m = 0.95$, enables equipping unmasking dynamics with the correction mechanism, which can be useful for some tasks like spelling correction. Therefore, in what follows, when the roulette case is concerned, it is implied that $p_m=0.95$. In Table \ref{small_perp_tab1}, we provide the results when the sequence length is 128 and the number of reverse steps is 1024, when using the analytic sampler. Generated samples can be found in Appendix \ref{gentext_samples_appendix}.

\subsection{Perplexity Comparisons}\label{subsection4.2}
In this subsection, we compare the performance of identical networks trained using SEDD, SEDDs, CEDD and CEDD*, in terms of $J_1$. We do not shuffle the test set. Results for sequence length $L=128$ are provided in Table \ref{small_perp_tab1}. Plots illustrating cummulative performance trajectories across the testing sets are provided in Appendix \ref{additional_res_appendix}.

\begin{figure}[H]
    \centering
    \begin{minipage}{0.54\textwidth}
        \centering
        \includegraphics[width=\textwidth]{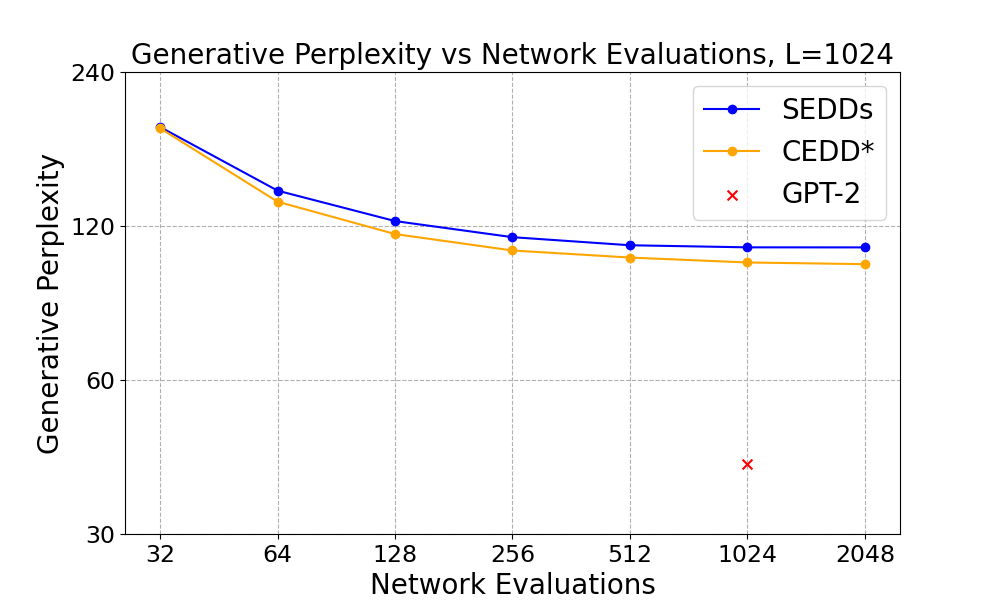}
        \caption{Scaling of Generative Perplexity vs sampling steps for SEDDs (loaded) and CEDD* absorb.}
    \end{minipage}%
    \hfill
    \begin{minipage}{0.44\textwidth}
        \centering
        \begin{adjustbox}{width=1\textwidth}
        \begin{tabular}{|>{\centering\arraybackslash}m{0.5cm}|m{7cm}|}
            \hline
            % First row
            \rotatebox[origin=c]{90}{\ \ SEDDs} & \textbf{Seeing them is a treat,} because I’m coaching. It’s a full week for me. "On Saturday night, Gonzalez spoke to media and said he thinks he could work out the fine fit to make the made \\ 
            \hline
            % Second row
            \rotatebox[origin=c]{90}{\ \ CEDD*} & \textbf{Seeing them is a treat,} though it was only in her presence. Just being able to think about all the soldiers out here helped make the place even look like their countries and the rest of\\
            \hline
            \rotatebox[origin=c]{90}{\ \ GPT-2} & \textbf{Seeing them is a treat,} and ultimately all the practical work should be well underway for everyone to enjoy. All Milan, Italy-based staff members will be given access to the dev portal\\ 
            \hline
        \end{tabular}
        \end{adjustbox}
        \captionof{figure}{Filtered samples from SEDDs and CEDD* absorb, L=1024. The conditional part is highlighted in bold.}
    \end{minipage}
\end{figure}

\begin{table}[H]
\caption{Results comparing SEDD, SEDD scaled (SEDDs), CEDD and CEDD*. Lower is better.}
\label{small_perp_tab1}
\begin{center}
\begin{tabular}{lllllll} % Four columns, as indicated by "llll"
\textbf{Model (L=128)} & \textbf{GenPerp} & \textbf{LAMBADA} & \textbf{WikiText2} & \textbf{PTB}  & \textbf{WikiText103} & \textbf{1BW}\\
\hline
SEDD Absorb &  172.35  & 70.07 & 75.20 & 240.43 & 74.79 & 88.99\\
SEDDs Absorb & 166.35 & 67.05 & 69.37 & 208.69& 69.17& 83.87\\
CEDD Absorb  &  148.21 & 65.18 & 65.66 & 199.69 & 65.62& 79.83\\
CEDD* Absorb & \textbf{143.86}  & \textbf{64.60} & \textbf{65.04} & \textbf{192.99} & \textbf{64.69}& \textbf{79.81}\\
\hline
SEDD Roulette & 178.94  & 72.07 & 80.13 & 230.74& 79.68& 93.45\\
SEDDs Roulette & 172.93 & 69.10 & 74.38 & \textbf{209.12}& 74.16& 88.02\\
CEDD Roulette & 167.67 & 69.77 & 72.91 & 227.16 & 72.49 & \textbf{86.55}\\
CEDD* Roulette & \textbf{158.56} & \textbf{67.84} & \textbf{70.54} & 216.91 & \textbf{70.18}& 86.76\\
\hline
SEDD Uniform & 169.66   & 80.74 & 91.79 & 252.81 & 91.40 & 102.75\\
SEDDs Uniform & 163.88 & 81.13 & 89.21 & \textbf{228.37} & 88.56 & 100.80\\
CEDD Uniform  & \textbf{161.84}& \textbf{80.27} &  \textbf{87.91} & 279.65 & \textbf{87.46} & \textbf{99.34}\\
CEDD* Uniform & 175.42 & 82.54 & 89.68 & 289.09 & 88.90 & 106.32\\
\hline
DFM $k_t=t$ & \textbf{145.48} & \textbf{71.90} & \textbf{71.20} & 221.15 & \textbf{70.84} & \textbf{82.63}\\
DFM $k_t=t^2$ & 152.70 & 72.31 & 72.87 & \textbf{215.30} & 72.55 & 85.82 \\
\hline
\end{tabular}
\end{center}
\end{table}

% \begin{table}[H]
% \caption{Results comparing SEDD, CEDD and CEDD*. Lower is better.}
% \label{small_perp_tab1}
% \begin{center}
% \begin{tabular}{lllllll} % Four columns, as indicated by "llll"
% \textbf{Model (L=128)} & \textbf{GenPerp} & \textbf{LAMBADA} & \textbf{WikiText2} & \textbf{PTB}  & \textbf{WikiText103} & \textbf{1BW}\\
% \hline
% SEDD Absorb & 79.74 & 70.99 & 77.19 & 226.33& 76.89& 88.40\\
% CEDD Absorb & 74.19 & 69.33 & 73.43 & 217.17 & 72.95& \textbf{84.37}\\
% CEDD* Absorb & \textbf{72.13} & \textbf{68.46} & \textbf{72.80} & \textbf{209.33} & \textbf{72.32}& 84.40\\
% \hline
% SEDD Roulette & 87.81 & 76.48 & 89.25 & 251.11& 88.82& 98.68\\
% CEDD Roulette & 76.71 & 73.73 & 82.19 & 247.34 & 81.84& \textbf{91.69}\\
% CEDD* Roulette & \textbf{72.31} & \textbf{71.94} & \textbf{79.74} & \textbf{236.17} & \textbf{79.30}& 91.86\\
% \hline
% SEDD Uniform & 177.59  & 84.62 & 100.81 & \textbf{271.35} & 100.30 & 106.93\\
% CEDD Uniform & \textbf{171.96}& \textbf{83.29} & \textbf{98.12} & 299.79 & \textbf{97.70} & \textbf{103.40}\\
% CEDD* Uniform & 182.02 & 85.77 & 100.46 & 310.70 & 99.95 & 110.66\\
% \hline
% \end{tabular}
% \end{center}
% \end{table}

We also compare our approach against models trained utilizing Discrete Flow matching (DFM) \citep{gat2024discrete}. In Table \ref{small_perp_tab1}, we present results when comparing against flows with convex interpolants, where we chose schedules $k_t=t$, as in \citet{campbell2024generative}, as well as $k_t=t^2$. The perplexity bound for these models is calculated using Expression (24) in \citet{haxholli2024minibatch}.

It can be seen that CEDD* absorb performs best overall, thus we compare this model, against CEDDT, that is, CEDD with the scaling loss used in the SOTA discrete diffusion model \citep{sahoo2024simple}. Interestingly, our scaling CEDD* outperforms that of CEDDT, despite its theoretical support with regards to the score entropy loss. The results can be found in Table \ref{ceddstar_vs_ceddt}.

We also train 3 absorb models, namely SEDDs, CEDD*, CEDDT, as well as GPT2, with a sequence length of $1024$. Results are provided in Table \ref{ceddstar_vs_ceddt}, where it can be seen that overall GPT-2 performs best. The gaps between SEDDs, CEDD*, CEDDT are reduced, likely since by seeing more tokens they all approach their optimal performance. However, models trained with CEDD/CEDD* converge faster to the optimum in terms of number of parameter updates. In Appendix \ref{additional_res_appendix}, Table \ref{seddvscedd_20k} and Figure \ref{cvs20k}, we show the difference in performance between CEDD* and SEDDs absorb during and after training for 20k parameter updates. In addition, training with CEDD (and its variants) is roughly $15\%$ faster per iteration, due to the simplified loss function. Furthermore, by incorporating the $f_\theta^i(\vx_t, t)$ to $s^i_\theta(\vx_t, t)$ scaling in the timestep, absorb models trained with CEDD/CEDD* generate sequences $2\%$ faster than those trained with SEDDs.
\begin{table}[H]
\caption{Results comparing SEDDs (retrained), CEDD*, CEDDT and GPT-2 (retrained) in terms of generative perplexity, and perplexity on 5 test sets. Number of generation steps is 1024.}
\label{ceddstar_vs_ceddt}
\begin{center}
\begin{tabular}{lllllll} % Four columns, as indicated by "llll"
\textbf{Model (Absorb)}  & \textbf{GenPerp} & \textbf{LAMBADA} & \textbf{WikiText2} & \textbf{PTB}  & \textbf{WikiText103} & \textbf{1BW}\\
\hline
SEDDs L=128 & 166.35 & 67.05 & 69.37 & 208.69& 69.17& 83.87\\
CEDD* L=128 & \textbf{143.86} & \textbf{64.60} & \textbf{65.04} & \textbf{192.99} & \textbf{64.69}& \textbf{79.81}\\
CEDDT L=128 & 154.04 & 68.24 & 68.61 & 204.76 & 68.10& 81.81\\
\hline
SEDDs L=1024 & 105.27 & \textbf{52.18}& 42.02 & 117.00 & 41.83 & 80.79 \\
CEDD* L=1024 & \textbf{101.83} & 52.70 & \textbf{41.57} & \textbf{115.99*} & \textbf{41.31} & \textbf{77.96}\\
CEDDT L=1024 & 108.88 & 53.20 & 42.24 & 121.05 & 42.07& 78.10\\
D3PM L=1024& - & 93.47 & 77.28 & 200.82 & 75.16 & 138.92\\
PLAID L=1024& - & 57.28 & 51.80 & 142.60 & 50.86 & 91.12\\
\hline
GPT-2 L=1024& \textbf{41.02}* & \textbf{49.02*} & \textbf{37.68*} & 134.13 & \textbf{37.55*} & \textbf{58.92*}\\
\hline
\end{tabular}
\end{center}
\end{table}

\subsection{Comparing the Two Upper Bounds}\label{subsection4.3}

Lastly, we compare the two upper bounds $J_1$ and $J_2$. The bound $J_2$ is shown empirically to be slightly tighter than $J_1$, supporting the importance of Theorem \ref{theorem4} and Proposition \ref{proposition1}. We estimate the integral with respect to time in both $J_1$ and $J_2$ by randomly sampling time points from a uniform distribution in the interval $(e^{-4}, 1-e^{-4})$, and the averaging the loss. The procedure of comparing these bounds is explained in detail in Appendix \ref{compare_proc_appendix}. We present the results for several models in Table \ref{main_bound_tab}, while we show the testing plots for CEDD* absorb (L=1024) in Figure \ref{cumulative_eval_fig}.  The proposed bound $J_2$ gives a lower bound in every single case, as it can also be seen in Table \ref{small_perp_tab1_J2}, Appendix \ref{additional_res_appendix}, where we provide the equivalent of Table \ref{small_perp_tab1} when $J_2$ is utilized. 

\begin{table}[H]
\caption{Results comparing $J_1$ and $J_2$ for the best performing models of each category.}
\label{main_bound_tab}
\begin{center}
\begin{tabular}{llllll} % Four columns, as indicated by "llll"
\textbf{Model/L/Perplexity-Bound}\ \ & \textbf{LAMBADA} & \textbf{WikiText2} & \textbf{PTB}  & \textbf{WikiText103} & \textbf{1BW}\\
\hline
SEDDs absorb/1024/ $\exp(J_1)$ & 52.18& 42.02 & 117.00 & 41.83 & 80.79\\
SEDDs absorb/1024/ $\exp(J_2)$ & \textbf{51.78} & \textbf{41.76} & \textbf{115.97} & \textbf{41.51} & \textbf{80.53}\\
\hline
CEDD* absorb/1024/ $\exp(J_1)$ & 52.70 & 41.57 & 115.99 & 41.31 & 77.96\\
CEDD* absorb/1024/ $\exp(J_2)$ & \textbf{52.10} & \textbf{41.18} & \textbf{115.03} & \textbf{40.98}& \textbf{77.28}\\
\hline
CEDD* absorb/128/ $\exp(J_1)$ & 64.60 & 65.04 & 192.99 & 64.69 & 79.81\\
CEDD* absorb/128/ $\exp(J_2)$ & \textbf{64.11} & \textbf{64.54} & \textbf{191.38} & \textbf{64.30}& \textbf{79.17}\\
\hline
% CEDD roulette/128/ $\exp(J_1)$ & 69.77 & 72.91 & 227.16 & 72.49 & 86.55\\
% CEDD roulette/128/ $\exp(J_2)$ & \textbf{68.92} & \textbf{72.16} & \textbf{224.18} & \textbf{71.52}& \textbf{85.37}\\
% \hline
CEDD* roulette/128/ $\exp(J_1)$ & 67.84 & 70.54 & 216.91 & 70.18 & 86.76\\
CEDD* roulette/128/ $\exp(J_2)$ & \textbf{67.27} & \textbf{69.61} & \textbf{213.90} & \textbf{69.45}& \textbf{85.64}\\
\hline
CEDD uniform/128/ $\exp(J_1)$ & 80.27 &  87.91 & 279.65 & 87.46 & 99.34\\
CEDD uniform/128/ $\exp(J_2)$ & \textbf{79.46} &  \textbf{86.82} & \textbf{276.61} & \textbf{86.52} & \textbf{98.44}\\
\hline
\end{tabular}
\end{center}
\end{table}
\begin{figure}[H]
\centering
\begin{subfigure}[b]{0.25\textwidth}
    \centering
    \includegraphics[width=\textwidth]{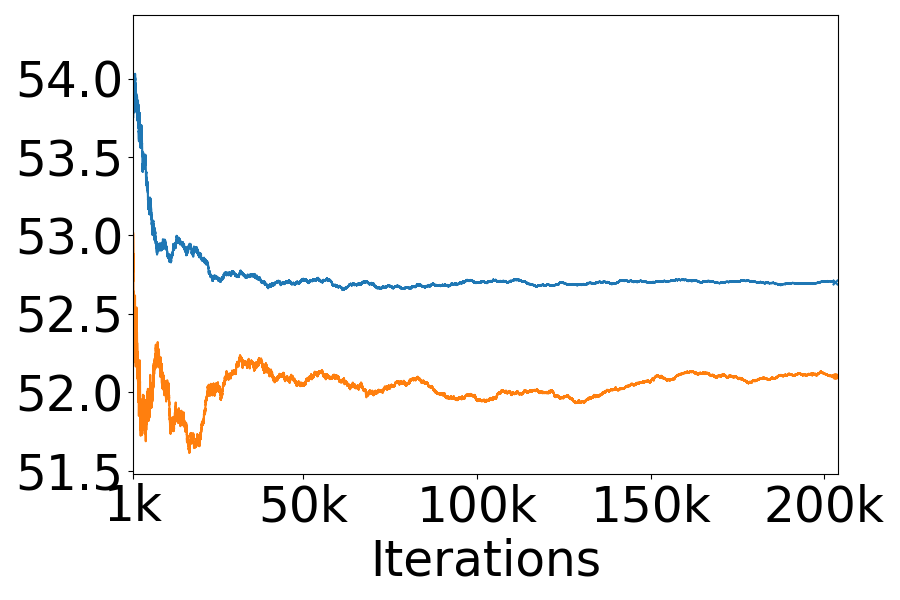}
    \caption{LAMBADA}
    \label{fig:image1}
\end{subfigure}%
\hfill%
\begin{subfigure}[b]{0.25\textwidth}
    \centering
    \includegraphics[width=\textwidth]{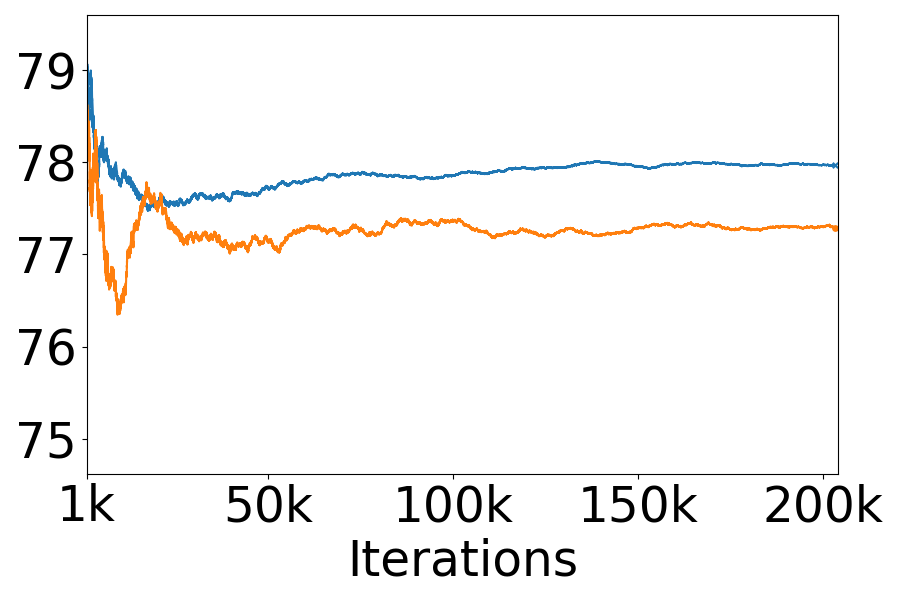}
    \caption{1BW}
    \label{fig:image2}
\end{subfigure}%
\hfill%
\begin{subfigure}[b]{0.25\textwidth}
    \centering
    \includegraphics[width=\textwidth]{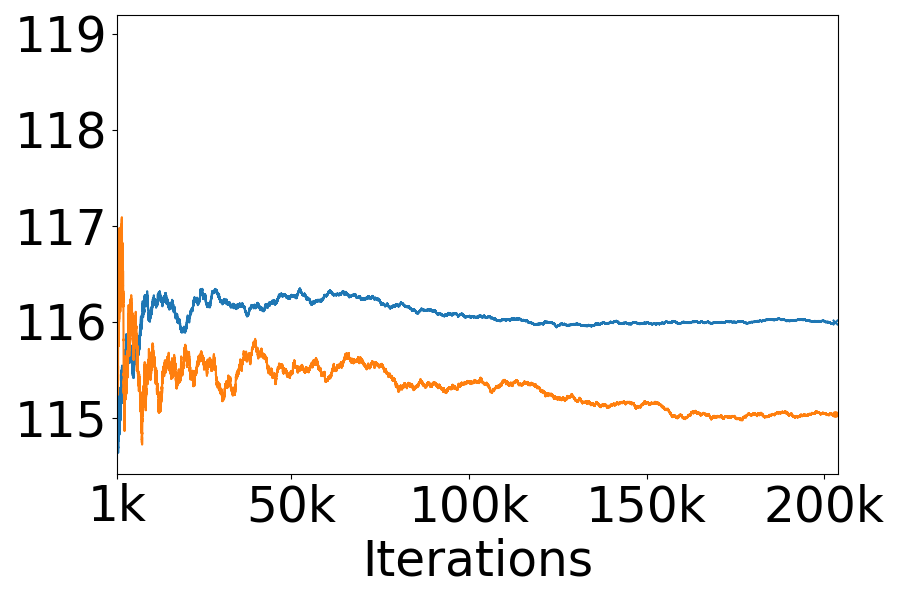}
    \caption{PTB}
    \label{fig:image3}
\end{subfigure}%
\hfill%
\begin{subfigure}[b]{0.25\textwidth}
    \centering
    \includegraphics[width=\textwidth]{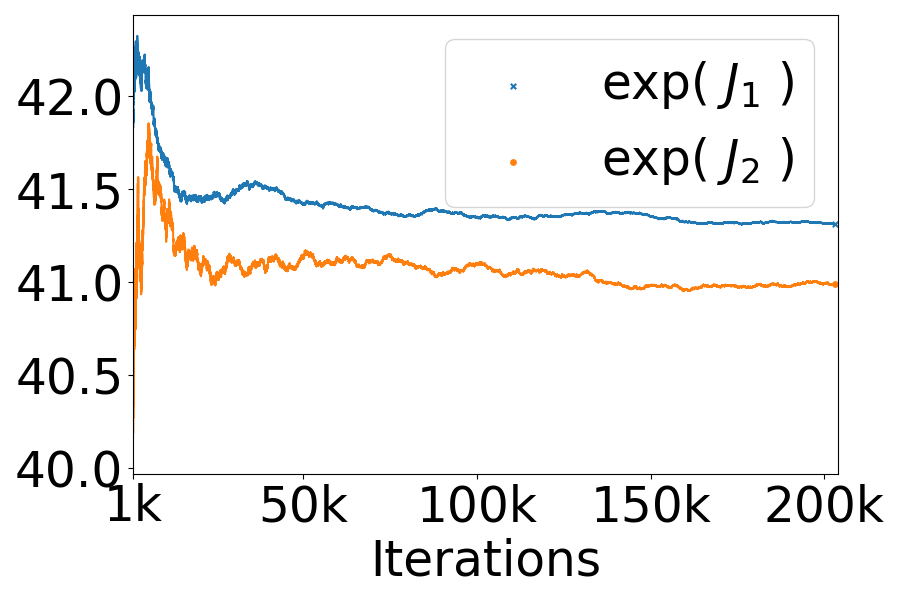}
    \caption{WikiText103}
    \label{fig:image4}
\end{subfigure}
\caption{Upper bounds $J_1$ and $J_2$ of CEDD* absorb L=1024 for different testing sets.}
\label{cumulative_eval_fig}
\end{figure}

\subsection{Spelling Correction}

We evaluate our uniform and roulette diffusion models on a character-level unsupervised spell-checking task \citep{hoogeboom2021argmax}.  We train CEDD* and SEDD models on 'War and Peace' (3.3M tokens), and test on 'Crime and Punishment' (CAP) and 'Pride and Prejudice' (PAP). The test set is contaminated with mistakes (5\% of characters), and the model chooses the most likely correction. Results when training for 25k/50k iterations are provided in Table \ref{spc} and Appendix \ref{additional_res_appendix}.

\begin{table}[H]
\caption{Correction accuracy percentages. 25k training iterations, batch size of 32 and $L=128$.}
\label{spc}
\begin{center}
\begin{tabular}{lllll} % Four columns, as indicated by "llll"
\textbf{Model (L=128)} & CEDD* Uniform & SEDD Uniform & CEDD* Roulette & SEDD Roulette\\
\hline
PAP & 89.5 & 86.9 & $\boldsymbol{89.7}$ & 85.1\\
CAP & 89.5 & 87.5 & $\boldsymbol{90.3}$ & 85.8\\
\hline
\end{tabular}
\end{center}
\end{table}

\section{Related Work and Future Outlook}
% here we review the work most related to our. We differ from then in that...
% emphasis roulette q matrix as well here

\textbf{The continuous diffusion}  approach has demonstrated excellent performance in modeling continuous data distributions \citep{song2020denoising, song2021scorebased, DBLP:journals/corr/abs-2107-00630, pmlr-v139-nichol21a, saharia2022image, ramesh2022hierarchical}, leading to numerous efforts to adapt it for language modeling tasks  \citep{chen2022analog, gulrajani2024likelihood, li2022diffusion, dieleman2022continuous, strudel2022self, gong2022diffuseq, mahabadi2023tess}. Although initial results were not competitive, recent advancements have reduced the performance gap with autoregressive models.
\textbf{Discrete diffusion models} offer an alternative approach for modeling categorical distributions like language data. Originally proposed by \citet{hoogeboom2021argmax, austin2021structured}, the framework was extended to continuous time by \citet{campbell2022continuous}. Both strands of work employ a combination of the variational lower bound and cross entropy loss, the latter being central to our training approach and corresponding to the strategy employed by \citet{dieleman2022continuous} in the continuous case. The cross-entropy loss is also derived in\citet{sahoo2024simple, shi2024simplified, ou2025your}, but only for the absorb transition-rate matrix.
 In addition, the cross entropy loss is similar to the loss employed in \citet{sun2022score}, however in their case, one conditions on the current sequence $\vx_t^{-i}$ without the token at position $i$, and attempts to maximize the likelihood of $\vx^i_t$. In contrast, \citet{lou2023discrete} proposed modeling ratios of probabilities \citep{meng2022concrete} directly using score entropy. 
 \newline
\textbf{The evaluation} of such models can be performed by using Theorem \ref{theorem2}, as originally derived and used in \citet{lou2023discrete}. Inspired by this result, we formulate and prove the discrete version of the rest of the theorems in \citet{song2021maximum}, which provide important information regarding the KL divergence between the data and learned distributions in CTMCs, and which justify the usage of $J_2$.
\newline
\textbf{Transition-rate matrices} are an essential component in CTMCs, as demonstrated by the performance discrepancy between the absorb and uniform matrices \citep{lou2023discrete, campbell2022continuous}. We introduced roulette matrix, an interpolation between the two and derived its exponential.
\newline
\textbf{Regarding future work}, numerous avenues remain open for optimization within the diffusion model framework such as the exploration of the Eroulette matrix. Additionally, the exploration of noise schedules in discrete diffusion models remains relatively nascent, as similar to \citet{lou2023discrete}, we did not systemically explore noise schedules. In our experiments, cross entropy loss is modulated by heuristically chosen time-dependent weighting. Investigating and establishing a general optimal weighting schedule could further refine performance metrics. Finally, studying scaling laws for discrete diffusion models and establishing their practical utility in downstream tasks is crucial.
\section{Conclusion}
In this work, we provided three new theorems concerning the KL divergence between the data and the learned distribution, improving model evaluation through the bound presented in Theorem 4. We also introduced a new transition-rate matrix that allows for token correction after unmasking in the reverse process, and derived its exponential matrix to enable efficient training/sampling.  Finally, we proposed favoring denoising cross entropy loss over score entropy for training discrete diffusion models due to the findings in the experiments we conducted.
\section*{Ethics Statement}
This paper introduces research that progresses the field of natural language generation. Beyond the pre-existing ethical concerns associated with this domain, such as e.g. bias, toxicity, and the generation of deceptive content, our methodology poses no unique risks. This is because our work is primarily theoretical and not of a magnitude that could cause any distinct issues.

% References
% -----------
\bibliography{library}
\bibliographystyle{iclr2025_conference}

% Supplementary
% --------------
\newpage
\appendix
\DoToC
\setcounter{theorem}{0}
\newpage
\section{Theoretical Results}
\subsection{KL Divergence Theorems}\label{threekl}
\begin{lemma}\label{lemma1}Let $q^{*}_{1}$ and $\hat{q}_{1}$ denote two initial distributions and define two Continuous-Time Markov Chains (CTMCs) running from time $1$ to $0$, with transition-rate matrices $\mQ^{*}_{t}(y,x)=\mQ_{t} (x,y)r(x,y)$ and $\hat{\mQ}_{t}(y,x)=\mQ_{t} (x,y)\hat{r}(x,y)$, where $r(x,y)$ and $\hat{r}(x,y)$ are chosen such that they ensure $\mQ^{*}_{t}$ and $\hat{\mQ}_{t}$ are indeed transition-rate matrices. When the first process (with matrix $\mQ^{*}_{t}(y,x)$) is applied to initial distribution $q^{*}_{1}$ and the second (with matrix $\hat{\mQ}_{t}(y,x)$) to initial distribution $\hat{q}_{1}$, they define distributions $q_t^{*}$ and $\hat{q}_t$ at time $t$, and $q_0^{*}$, $\hat{q}_0$ at the end of the diffusion, for which
\begin{equation*}
    D_{KL}(q_0^{*}||\hat{q}_0)
\end{equation*}
\begin{equation}\label{lemmaineq}
\leq \int_0^1\mathbb{E}_{x_t\sim q^{*}_t}\sum_{y\neq x_t}\mQ_{t} (x_t,y)\ell\left(r(x_t,y), \hat{r}(x_t,y)\right)dt + D_{KL}(q_1^{*}||\hat{q}_1),
\end{equation}
where $\ell(a, b) = 
 \left(
b-a \log b + K (a)
\right)$ and $K(a) = a(\log a - 1)$.
\end{lemma}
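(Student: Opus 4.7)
The plan is to lift the bound to a path-space comparison and then apply the data processing inequality. Let $\mathbb{P}$ denote the law on trajectories $\{X_t\}_{t\in[0,1]}$ produced by running the CTMC with transition-rate matrix $\mQ^*_t$ backward from initial distribution $q^*_1$ at time $1$, and let $\mathbb{Q}$ be the analogous path measure for the $(\hat{q}_1,\hat{\mQ}_t)$ process. Since $q^*_0$ and $\hat{q}_0$ are the terminal marginals of $\mathbb{P}$ and $\mathbb{Q}$, the data processing inequality gives $D_{KL}(q^*_0\|\hat{q}_0)\leq D_{KL}(\mathbb{P}\|\mathbb{Q})$, reducing the lemma to computing this path-space KL and identifying its integrand with the kernel in (\ref{lemmaineq}).

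For the path-space KL I would invoke the Radon--Nikodym derivative (Girsanov-type formula) between two CTMCs sharing the same jump graph. Writing $\log(d\mathbb{P}/d\mathbb{Q})$ as an initial-distribution log-ratio plus a sum over jumps minus a compensator, taking expectation under $\mathbb{P}$, and using the L\'evy system identity to rewrite the expected jump sum as an integral against the intensity, gives
\begin{equation*}
D_{KL}(\mathbb{P}\|\mathbb{Q}) = D_{KL}(q^*_1\|\hat{q}_1) + \int_0^1 \mathbb{E}_{x_t\sim q^*_t}\sum_{y\neq x_t}\Bigl\{\mQ^*_t(y,x_t)\log\tfrac{\mQ^*_t(y,x_t)}{\hat{\mQ}_t(y,x_t)} - \mQ^*_t(y,x_t) + \hat{\mQ}_t(y,x_t)\Bigr\}\,dt.
\end{equation*}
Substituting $\mQ^*_t(y,x_t)=\mQ_t(x_t,y)\,r(x_t,y)$ and $\hat{\mQ}_t(y,x_t)=\mQ_t(x_t,y)\,\hat{r}(x_t,y)$, the common factor $\mQ_t(x_t,y)$ pulls out and the rate ratio reduces to $r/\hat{r}$; with $a=r(x_t,y)$ and $b=\hat{r}(x_t,y)$ the bracketed kernel becomes $a\log a - a\log b - a + b = b - a\log b + K(a) = \ell(a,b)$, which is precisely the integrand in (\ref{lemmaineq}). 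Combining with the DPI step and the initial term $D_{KL}(q^*_1\|\hat{q}_1)$ recovers the claim.

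The main technical obstacle is the rigorous Girsanov step: one needs $\hat{\mQ}$ absolutely continuous with respect to $\mQ^*$ on the jump graph and a bounded-rate condition for the path measures to be well-defined, both of which follow from the finite state space of Section \ref{gen_inst} together with the implicit assumption that $r,\hat{r}>0$ share a common support. Readers who prefer to avoid path-space machinery can obtain the same inequality purely analytically by differentiating $D_{KL}(q^*_t\|\hat{q}_t)$ in $t$ using the Kolmogorov forward equations for the two reverse processes; after a symmetric re-indexing of the double sum, the derivative decomposes as the claimed integrand $\mQ_t(x_t,y)\,\ell(r,\hat{r})$ minus a non-negative Fisher-type remainder of the form $r\cdot[v - 1 - \log v]$ with $v = \hat{r}\,q^*_t(y)\hat{q}_t(x)/(r\,q^*_t(x)\hat{q}_t(y))$, so integrating from $t=1$ down to $t=0$ again yields (\ref{lemmaineq}) and simultaneously exhibits the tightness condition ($v\equiv 1$) needed for Theorem \ref{theorem3}.
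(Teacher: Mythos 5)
Your proof follows the paper's route exactly: the data processing inequality reduces the marginal KL to the path-space KL, the closed-form expression for the KL divergence between two CTMC path measures supplies the integral, and substituting $\mQ^*_t(y,x)=\mQ_t(x,y)\,r(x,y)$ and $\hat{\mQ}_t(y,x)=\mQ_t(x,y)\,\hat{r}(x,y)$ factors the integrand into $\mQ_t(x_t,y)\,\ell(r,\hat{r})$, which is precisely the chain of steps in the paper's proof. The time-differentiation alternative you sketch at the end would be a genuinely different, self-contained argument, but since the primary argument already matches the paper it is not needed here.
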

\begin{proof}
From the data processing inequality applied to the reverse of the processes defined in the Lemma, we have $D_{KL}(q_0^{*}||\hat{q}_0) \leq D_{KL}(q^{*}||\hat{q})  $ where $q^{*}$ is the measure over the space of paths generated by the first process, while $\hat{q}$ is the measure over the space of paths generated by second one. Using the expression for the KL divergence between path measures of two CTMCs found in \citep[Section 2.1]{NIPS2007_735b90b4}, and by substituting $q_t(x):=q^{*}(x_t)$, $g(x'|x)=\mQ^{*}_{t}(y,x_t)$ and $f(x'|x)=\hat{\mQ}_{t}(y,x_t)$, we have:
\begin{equation*}
D_{KL}(q^{*}||\hat{q}) = \int_0^T\mathbb{E}_{x_t\sim q^{*}_t}\sum_{y\neq x_t}\left(\mQ^{*}_{t}(y,x_t)\log{\frac{\mQ^{*}_{t}(y,x_t)}{\hat{\mQ}_{t}(y,x_t)}}+\hat{\mQ}_{t}(y,x_t)-\mQ^{*}_{t}(y,x_t)\right)dt
\end{equation*}
\begin{equation}
+ D_{KL}(q_1^{*}||\hat{q}_1).
\end{equation}
Now we focus on the expression inside the sum:
\begin{equation}
\mQ^{*}_{t}(y,x_t)\log{\frac{\mQ^{*}_{t}(y,x_t)}{\hat{\mQ}_{t}(y,x_t)}}+\hat{\mQ}_{t}(y,x_t)-\mQ^{*}_{t}(y,x_t)=
\end{equation}
\begin{equation}
\mQ_{t} (x_t,y)r(x_t,y)\log{\frac{\mQ_{t} (x_t,y)r(x_t,y)}{\mQ_{t} (x_t,y)\hat{r}(x_t,y)}}+\mQ_{t} (x_t,y)\hat{r}(x_t,y)-\mQ_{t} (x_t,y)r(x_t,y)=
\end{equation}
\begin{equation}
\mQ_{t} (x_t,y)r(x_t,y)\left(\log{r(x_t,y)}-\log{\hat{r}(x_t,y)}\right)+\mQ_{t} (x_t,y)\hat{r}(x_t,y)-\mQ_{t} (x_t,y)r(x_t,y)=
\end{equation}
\begin{equation}
\mQ_{t} (x_t,y)r(x_t,y)\left(\log{r(x_t,y)}-1\right)-\mQ_{t} (x_t,y)r(x_t,y) \log{\hat{r}(x_t,y)}+\mQ_{t}(x_t,y)\hat{r}(x_t,y)=
\end{equation}
\begin{equation}
\mQ_{t}(x_t,y)\left[K\left(r(x_t,y)\right)+\left(\hat{r}(x_t,y)-r(x_t,y)\log{\hat{r}(x_t,y)}\right)\right]=\mQ_{t}(x_t,y)\ell\left(r(x_t,y), \hat{r}(x_t,y)\right),
\end{equation}
which concludes the proof. We highlight that the CTMCs defined in the Lemma are completely arbitrary and not necessarily related to one-another, as the choices of  $r(x,y)$ and $\hat{r}(x,y)$ can completely overwrite the matrix $\mQ_{t} (x,y)$.
\end{proof}
\textbf{Theorem \ref{theorem1}.} \emph{Define a CTMC with transition matrix ${\mQ}_{t}$ that runs from time $0$ to $1$. The true reverse process defines a probability evolution $p_t$ from $p_1$ to the data distribution $p_0$, while the learned reverse process induces the evolution $p^{\theta}_t$ from the reference distribution $p^{\theta}_1=p_{r}$ to the approximation of the data distribution $p^{\theta}_0$. In this setting, the following KL divergence bound holds 
\begin{equation}
    D_{KL}(p_0||p_0^\theta)\leq \int_0^1\mathbb{E}_{x_t\sim p_t}\sum_{y\neq x_t}\mQ_t(x_t,y)\ell\left(\frac{p_{t}(y)}{p_{t}(x_t)}, s_\theta(x_t, t)_y\right)dt + D_{KL}(p_1||p_{r}).
\end{equation}}
\begin{proof}
In Lemma \ref{lemma1}, we set $r(x, y)=\frac{p_t(y)}{p_t(x)}$ and $\hat{r}(x_t,y) = s_\theta(x_t,t)_y$. This implies that $q_t^{*}=p_t$ and $\hat{q}_t=p_t^\theta$ . In particular, $q_0^{*}=p_0$, $\hat{q}_0=p_0^\theta$ and $q_1^{*}=p_1$, $\hat{q}_1=p_1^\theta:=p_{r}$. Plugging everything in Expression (\ref{lemmaineq}) gives the desired result.
\end{proof}

\textbf{Theorem \ref{theorem2}.} \emph{Let $p_0^\theta$ denote the learned distribution from which the reverse process samples. The negative log-probability of a state $x_0$ being sampled by the reverse process can be bounded from above as follows,}
\begin{equation*}
    -\log p_0^\theta(x_0)\leq \int_0^1 \mathbb{E}_{x_t \sim p_{t|0}(\cdot|x_0)}  \sum_{y \neq x_t} \mQ_t(x_t, y) \ell\left(\frac{p_{t|0}(y|x_0)}{p_{t|0}(x_t|x_0)}, s_\theta(x_t, t)_y\right) dt
\end{equation*}
\begin{equation}
+ D_{KL}(p_{1|0}(\cdot|x_0)||p_{r}).
\end{equation}

\begin{proof}
In Lemma \ref{lemma1} we set $r(x,y)=\frac{p_{t|0}(y|x_0)}{p_{t|0}(x_t|x_0)}$ and $\hat{r}(x_t,y) = s_\theta(x_t,t)_y$, where $p_{t|0}$ is the probability over states at time $t$ determined by a CTMC with transition-rate matrix $\mQ_{t}$ applied to initial distribution $\delta(x=x_0)$. 

As such, we have $q_t^{*}=p_{t|0}$ and $\hat{q}_t=p_t^\theta$. In particular, $q_0^{*}=\delta(x=x_0)$, $\hat{q}_0=p_0^\theta$ and $q_1^{*}=p_{1|0}$ , $\hat{q}_1=p_1^\theta:=p_r$. Plugging everything in Expression (\ref{lemmaineq}):
\begin{align*}
&D_{KL}(\delta(x=x_0)||p_0^\theta)\leq \int_0^1 \mathbb{E}_{x_t \sim p_{t|0}(\cdot|x_0)} \sum_{y \neq x_t} \mQ_t(x_t, y) \left[ K\left( \frac{p_{t|0}(y|x_0)}{p_{t|0}(x_t|x_0)} \right) \right. \quad\quad\quad\quad\quad\quad\quad\quad\quad\quad \\
&\quad\quad\quad\quad\quad\quad\quad\quad\quad\quad\quad+ \left. \left( s_\theta(x_t, t)_y - \frac{p_{t|0}(y|x_0)}{p_{t|0}(x_t|x_0)} \log{s_\theta(x_t, t)_y} \right) \right] dt+ D_{KL}(p_{1|0}(x_0)||p_{r}).
\end{align*}
This concludes the proof as $D_{KL}(\delta(x=x_0)||p_0^\theta) = -\log(p_0^\theta(x_0))$.
\end{proof}

\textbf{Theorem \ref{theorem3}.} \emph{Denote the intermediate distributions at time $t$ determined by the true reverse process, and by the learned reverse process with $p_t$ and $p_t^\theta$, respectively. We can write the entropy of the data distribution $H(p_0)$ as
\begin{equation}
H(p_0) =  H(p_1) - \int_0^1  \mathbb{E}_{x_t \sim p_t} \sum_{y}\mQ_t(x_t,y) K\left(\frac{p_t(y)}{p_t(x_t)}\right)  dt.
\end{equation}
In addition, if the learned ratios ${s_\theta(x_t, t)_y}$ equal $\frac{p^\theta_t(y)}{p^\theta_t(x_t)}$ and $p_1=p_1^\theta:=p_r$, then the inequality in Theorem \ref{theorem1}, becomes an equality.}
\begin{proof}
The cross entropy between the true data distribution, and the modeled data distribution satisfies the following: 
\begin{equation}
H(p_0, p_0^\theta) - H(p_1, p_1^\theta) =  \int_1^0 \frac{\partial}{\partial t} H(p_t, p_t^\theta)dt
\end{equation}
To keep things clear we focus on $\frac{\partial}{\partial t} H(p_t, p_t^\theta)$.
\begin{equation}
\frac{\partial}{\partial t} H(p_t, p_t^\theta)=\frac{\partial}{\partial t} \int -p_t(x_t)\log(p^\theta_t(x_t))dx_t= \int -\frac{\partial}{\partial t} [p_t(x_t)\log(p^\theta_t(x_t))]dx_t
\end{equation}
\begin{equation}
 = - \left( \int \frac{\partial p_t(x_t)}{\partial t} \log(p^\theta_t(x_t))dx_t+\int \frac{ p_t(x_t)}{p^\theta_t(x_t)} \frac{\partial p^\theta_t(x_t)}{\partial t}dx_t \right).
\end{equation}
We can use the Kolmogorov forward equations (Equation \ref{ode}), that is,  $\sum_{y}\mQ_t(x_t,y)p_t(y)=\frac{\partial p_{t}(x_t)}{\partial t}$ and $\frac{\partial p^{\theta}_{t}(x_t)}{\partial t} = \sum_{y}\mQ_t(x_t,y)p^{\theta}_t(y)$ to get
\begin{equation}
 = - \left( \int \sum_{y}\mQ_t(x_t,y)p_t(y)\log(p^\theta_t(x_t))dx_t+\int \frac{ p_t(x_t)}{p^\theta_t(x_t)} \sum_{y}\mQ_t(x_t,y)p^{\theta}_t(y)dx_t \right)
\end{equation}
\begin{equation}
 = - \left( \int \sum_{y}\mQ_t(x_t,y)\frac{p_t(y)}{p_t(x_t)}\log(p^\theta_t(x_t))p(x_t)dx_t+\int p_t(x_t)\sum_{y}\mQ_t(x_t,y)\frac{p^{\theta}_t(y)}{p^\theta_t(x_t)} dx_t \right)
\end{equation}
\begin{equation}
 = - \mathbb{E}_{x_t \sim p_t} \sum_{y}\mQ_t(x_t,y) \left( \frac{p_t(y)}{p_t(x_t)}\log(p^\theta_t(x_t))+\frac{p^{\theta}_t(y)}{p^\theta_t(x_t)} \right)
\end{equation}
\begin{equation}
 = - \mathbb{E}_{x_t \sim p_t} \sum_{y}\mQ_t(x_t,y) \left( \frac{p^{\theta}_t(y)}{p^\theta_t(x_t)} - \frac{p_t(y)}{p_t(x_t)}\log\frac{p^\theta_t(y)}{p^\theta_t(x_t)}\right)
\end{equation}
\begin{equation}
+ \mathbb{E}_{x_t \sim p_t} \sum_{y}\mQ_t(x_t,y)[\frac{p_t(y)}{p_t(x_t)}\log p^\theta_t(y)].
\end{equation}
The second term above $\mathbb{E}_{x_t \sim p_t} \sum_{y}\mQ_t(x_t,y)[\frac{p_t(y)}{p_t(x_t)}\log p^\theta_t(y)]$ is zero. Indeed,
\begin{equation}
\mathbb{E}_{x_t \sim p_t} \sum_{y}\mQ_t(x_t,y)[\frac{p_t(y)}{p_t(x_t)}\log(p^\theta_t(y))] = \sum_{x_t}\sum_{y}\mQ_t(x_t,y)p_t(x_t)[\frac{p_t(y)}{p_t(x_t)}\log(p^\theta_t(y))]
\end{equation}
\begin{equation}
= \sum_{x_t}\sum_{y}\mQ_t(x_t,y)p_t(y)\log(p^\theta_t(y))=\sum_{y}\left(\sum_{x_t}\mQ_t(x_t,y)\right)p_t(y)\log(p^\theta_t(y))=0,
\end{equation}
since for any column of a transition-rate matrix the sum of that columns elements is 0, therefore $\sum_{x}\mQ_t(x_t,y)=0$. In total we have:
\begin{equation}
H(p_0, p_0^\theta) - H(p_1, p_1^\theta) =  \int_1^0  - \mathbb{E}_{x_t \sim p_t} \sum_{y}\mQ_t(x_t,y) \left( \frac{p^{\theta}_t(y)}{p^\theta_t(x_t)} - \frac{p_t(y)}{p_t(x_t)}\log\frac{p^\theta_t(y)}{p^\theta_t(x_t)}\right)  dt
\end{equation}
\begin{equation}\label{general_crossentropy_int}
=  \int_0^1  \mathbb{E}_{x_t \sim p_t} \sum_{y}\mQ_t(x_t,y) \left( \frac{p^{\theta}_t(y)}{p^\theta_t(x_t)} - \frac{p_t(y)}{p_t(x_t)}\log\frac{p^\theta_t(y)}{p^\theta_t(x_t)}\right)  dt.
\end{equation}
This allows the derivation of the expression for entropy of the data distribution, by taking $p_t^\theta=p_t$
\begin{equation}
H(p_0)  =  H(p_1)+\int_0^1  \mathbb{E}_{x_t \sim p_t} \sum_{y}\mQ_t(x_t,y) \left( \frac{p_t(y)}{p_t(x_t)}- \frac{p_t(y)}{p_t(x_t)}\log\frac{p_t(y)}{p_t(x_t)}\right)  dt,
\end{equation}
thus
\begin{equation}\label{data_entropy}
H(p_0)  =  H(p_1)-\int_0^1  \mathbb{E}_{x_t \sim p_t} \sum_{y}\mQ_t(x_t,y) K\left(\frac{p_t(y)}{p_t(x_t)}\right)  dt.
\end{equation}
If we assume $\frac{p^\theta_t(y)}{p^\theta_t(x_t)}=s_{\theta}(x_t,t)_y$ and $p_1=p_1^\theta:=p_b$, from Equation (\ref{general_crossentropy_int}) we get 
\begin{equation}\label{perfect_cross}
H(p_0, p_0^\theta) - H(p_1) = \int_0^1  \mathbb{E}_{x_t \sim p_t} \sum_{y}\mQ_t(x_t,y) \left( s_{\theta}(x_t,t)_y- \frac{p_t(y)}{p_t(x_t)}\log(s_{\theta}(x_t,t)_y)\right)  dt,
\end{equation}
and furthermore, from Equation (\ref{data_entropy}) it is trivial that
\begin{equation}
-H(p_0) + H(p_1) =  \int_0^1\mathbb{E}_{x_t \sim p_t} \sum_{y}\mQ_t(x_t,y) K\left(\frac{p_t(y)}{p_t(x_t)}\right)  dt,
\end{equation}
thus adding this last equation and Equation (\ref{perfect_cross}) we get 
\begin{equation}
    D_{KL}(p_0||p_0^\theta)
\end{equation}
\begin{equation}
= \int_0^1\mathbb{E}_{x_t\sim p_t}\sum_{y}\mQ_t(x_t,y)\left[K\left(\frac{p_t(y)}{p_t(x_t)}\right)+\left(s_\theta(x_t,t)_y-\frac{p_t(y)}{p_t(x_t)}\log{s_\theta(x_t,t)_y}\right)\right]dt.
\end{equation}
Since we know that $\frac{p_t(x_t)}{p_t(x_t)}=1$, and $s_\theta(x_t,t)_{x_t}=\frac{p^\theta_t(x_t)}{p^\theta_t(x_t)}=1$, we have
\begin{equation}
\left[K\left(\frac{p_t(x_t)}{p_t(x_t)}\right)+\left(s_\theta(x_t,t)_{x_t}-\frac{p_t({x_t})}{p_t(x_t)}\log{s_\theta(x_t,t)_{x_t}}\right)\right]=-1+1=0
\end{equation}
so we can change the sums $\sum_{y}$ to simply $\sum_{y\neq x_t}$, as follows 
\begin{equation}
    D_{KL}(p_0||p_0^\theta)
\end{equation}
\begin{equation}
= \int_0^1\mathbb{E}_{x_t\sim p_t}\sum_{y\neq x_t}\mQ_t(x_t,y)\left[K\left(\frac{p_t(y)}{p_t(x_t)}\right)+\left(s_\theta(x_t,t)_y-\frac{p_t(y)}{p_t(x_t)}\log{s_\theta(x_t,t)_y}\right)\right]dt.
\end{equation}
Finally, since $p_1=p_1^\theta:=p_b$, then $D_{KL}(p_1||p_{r})=0$, therefore we can add it to the right side, finishing the proof.
\end{proof}
We remark that as $s_\theta(x_t,t)_y\rightarrow \frac{p_t(y)}{p_t(x_t)}$, then $p_t(i)^\theta\rightarrow p_t(i)$, and thus $\frac{p_t^\theta (y)}{p_t^\theta (x_t)}\rightarrow \frac{p_t(y)}{p_t(x_t)}$, as such $s_\theta(x_t,t)_y\rightarrow \frac{p_t^\theta (y)}{p_t^\theta (x_t)}$. Therefore, the bound in Theorem \ref{theorem1} becomes tighter as the model improves.

\textbf{Theorem \ref{theorem4}.}\emph{ Under the conditions stated in Theorem \ref{theorem1}, the following inequality for the cross entropy between the data and the learned distribution holds:
\begin{equation*}
    H(p_0, p_0^\theta) \leq \int_0^1\mathbb{E}_{x_t\sim p_t}\sum_{y\neq x_t}\mQ_t(x_t,y)\bar{\ell}\left(\frac{p_t(y)}{p_t(x_t)}, s_\theta(x_t,t)_y \right) dt
\end{equation*}
\begin{equation}
-\int_0^1\mathbb{E}_{x_t\sim p_t}\sum_{y\neq x_t}\mQ_t(y,x_t)dt+ H(p_1, p_{r}),
\end{equation}
where $\bar{\ell}(a, b) = 
 \left(
b-a \log b
\right)$.
}
\begin{proof}
    From Theorem \ref{theorem1}, we have 
\begin{equation*}
    D_{KL}(p_0||p_0^\theta)-D_{KL}(p_1||p_{r})
\end{equation*}
\begin{equation}
\leq \int_0^1\mathbb{E}_{x_t\sim p_t}\sum_{y\neq x_t}\mQ_t(x_t,y)\left[K\left(\frac{p_t(y)}{p_t(x_t)}\right)+\left(s_\theta(x_t,t)_y-\frac{p_t(y)}{p_t(x_t)}\log{s_\theta(x_t,t)_y}\right)\right]dt.
\end{equation}

Since we know that $\frac{p_t(x_t)}{p_t(x_t)}=1$, we can manually set $s_\theta(x_t,t)_{x_t}=1$, and we get 
\begin{equation}
\left[K\left(\frac{p_t(x_t)}{p_t(x_t)}\right)+\left(s_\theta(x_t,t)_{x_t}-\frac{p_t({x_t})}{p_t(x_t)}\log{s_\theta(x_t,t)_{x_t}}\right)\right]=-1+1=0,
\end{equation}
therefore
\begin{equation*}
    H(p_0, p_0^\theta) - H(p_0) - D_{KL}(p_1||p_{r})
\end{equation*}
\begin{equation}
\leq \int_0^1\mathbb{E}_{x_t\sim p_t}\sum_{y}\mQ_t(x_t,y)\left[K\left(\frac{p_t(y)}{p_t(x_t)}\right)+\left(s_\theta(x_t,t)_y-\frac{p_t(y)}{p_t(x_t)}\log{s_\theta(x_t,t)_y}\right)\right]dt.
\end{equation} from which we deduce,
\begin{equation*}
    H(p_0, p_0^\theta)
\end{equation*}
\begin{equation*}
\leq  H(p_0) + \int_0^1\mathbb{E}_{x_t\sim p_t}\sum_{y}\mQ_t(x_t,y) K\left(\frac{p_t(y)}{p_t(x_t)}\right)dt
\end{equation*}
\begin{equation}
+ \int_0^1\mathbb{E}_{x_t\sim p_t}\sum_{y}\mQ_t(x_t,y)\left(s_\theta(x_t,t)_y-\frac{p_t(y)}{p_t(x_t)}\log{s_\theta(x_t,t)_y}\right)dt + D_{KL}(p_1||p_{r}).
\end{equation}
Since from Theorem \ref{theorem3} we have that 
\begin{equation}
H(p_0) +\int_0^1  \mathbb{E}_{x_t \sim p_t} \sum_{y}Q_t(x,y) K\left(\frac{p_t(y)}{p_t(x_t)}\right)  dt =  H(p_1),
\end{equation}
hence we can write
\begin{equation*}
    H(p_0, p_0^\theta)
\end{equation*}
\begin{equation*}
\leq \int_0^1\mathbb{E}_{x_t\sim p_t}\sum_{y}\mQ_t(x_t,y)\left(s_\theta(x_t,t)_y-\frac{p_t(y)}{p_t(x_t)}\log{s_\theta(x_t,t)_y}\right)dt + H(p_1)+ D_{KL}(p_1||p_{r}),
\end{equation*}
and therefore 
\begin{equation*}
H(p_0, p_0^\theta) \leq \int_0^1\mathbb{E}_{x_t\sim p_t}\sum_{y\neq x_t}\mQ_t(x_t,y)\left(s_\theta(x_t,t)_y-\frac{p_t(y)}{p_t(x_t)}\log{s_\theta(x_t,t)_y}\right)dt
\end{equation*}
\begin{equation}
+ \int_0^1\mathbb{E}_{x_t\sim p_t}\mQ_t(x_t,x_t)dt+ H(p_1, p_{r}).
\end{equation}
The fact that $\mQ_t(x_t,x_t) = - \sum_{y\neq x_t}\mQ_t(y, x_t)$ concludes the proof.
\end{proof}

\textbf{Proposition \ref{proposition1}.}
    \emph{In the case of the roulette diffusion with roulette-loglinear noise, $H(p_r)=0$ and $-\int_0^1\mathbb{E}_{\vx_t\sim p_t}\sum_{\vy\neq \vx_t}\mQ_t(\vy,\vx_t)dt=\left(1-\frac{1-p_m}{n-1} \right)\frac{L}{p_m}(\epsilon-1)$. For the absorb diffusion, that is when $p_m\rightarrow1$, we have $-\int_0^1\mathbb{E}_{\vx_t\sim p_t}\sum_{\vy\neq \vx_t}\mQ_t(\vy,\vx_t)dt=L(\epsilon-1)$. Finally, for the uniform diffusion, $H(p_r)=L\log(V)$ and $-\int_0^1\mathbb{E}_{\vx_t\sim p_t}\sum_{\vy\neq \vx_t}\mQ_t(\vy,\vx_t)dt=-\left(1-\frac{1}{V} \right)L\int_0^1 \sigma^{'}(t)dt$.}
\begin{proof}
In all cases, 
\begin{equation}
    -\int_0^1\mathbb{E}_{\vx_t\sim p_t}\sum_{\vy\neq \vx_t}\mQ_t(\vy,\vx_t)dt  = -\mathbb{E}_{t\sim U(0,1)}\mathbb{E}_{\vx_t\sim p_t(\vx_t)} \sum_{i=1}^L \sum_{\vy^i\neq \vx^i_t}\mQ_t^{tok}(\vy^i,\vx^i_t)
\end{equation}    
\begin{equation}
   = -\mathbb{E}_{t\sim U(0,1)}\sum_{i=1}^L \mathbb{E}_{\vx_t\sim p_t(\vx_t)} \sum_{\vy^i\neq \vx^i_t}\mQ_t^{tok}(\vy^i,\vx^i_t)
\end{equation}  
\begin{equation}
   = -\mathbb{E}_{t\sim U(0,1)}\sum_{i=1}^L \sum_{\vx^i_t} p_{t}(\vx^i_t) \sum_{\vy^i\neq \vx^i_t}\mQ_t^{tok}(\vy^i,\vx^i_t)
\end{equation}  
\begin{equation}\label{proposition6_base}
   =-\mathbb{E}_{t\sim U(0,1)}\sum_{i=1}^L \sum_{\vx^i_0}p_{0}(\vx^i_0) \sum_{\vx^i_t}p_{t|0}(\vx^i_t|\vx^i_0)\sum_{\vy^i\neq \vx^i_t}\mQ_t^{tok}(\vy^i,\vx^i_t)
\end{equation}  
For the roulette case, if $\vx^i_t$ is the masked token, that is, $\vx^i_t=n$, then $\sum_{\vy^i\neq \vx^i_t}\mQ_t^{tok}(\vy^i,\vx^i_t)=0$. Otherwise, $\sum_{\vy^i\neq \vx^i_t}\mQ_t^{tok}(\vy^i,\vx^i_t)=\sigma_t^{'}\left(1-\frac{1-p_m}{n-1} \right)$. Thus, 
\begin{equation}
    -\int_0^1\mathbb{E}_{\vx_t\sim p_t}\sum_{\vy\neq \vx_t}\mQ_t(\vy,\vx_t) = 
\end{equation}  
\begin{equation}
    =-\left(1-\frac{1-p_m}{n-1} \right)\mathbb{E}_{t\sim U(0,1)}\sum_{i=1}^L \sum_{\vx^i_0}p_{0}(\vx^i_0)  \sum_{\vx^i_t\neq n}p_{t|0}(\vx^i_t|\vx^i_0)\sigma_t^{'}=
\end{equation}  
\begin{equation}
 =-\left(1-\frac{1-p_m}{n-1} \right)\mathbb{E}_{t\sim U(0,1)}\sum_{i=1}^L \sum_{\vx^i_0}p_{0}(\vx^i_0)  \sigma_t^{'}\sum_{\vx^i_t\neq n}p_{t|0}(\vx^i_t|\vx^i_0)
\end{equation}  
\begin{equation}
 =-\left(1-\frac{1-p_m}{n-1} \right)\mathbb{E}_{t\sim U(0,1)}\sum_{i=1}^L \sum_{\vx^i_0}p_{0}(\vx^i_0)  \sigma_t^{'}\left(1-p(\vx^i_t=n|\vx^i_0)\right)
\end{equation}  
\begin{equation}
 =-\left(1-\frac{1-p_m}{n-1} \right)\mathbb{E}_{t\sim U(0,1)}\sum_{i=1}^L \sum_{\vx^i_0}p_{0}(\vx^i_0)  \sigma_t^{'}e^{-\sigma_t p_m}=-\left(1-\frac{1-p_m}{n-1} \right)L\int_0^1 \sigma_t^{'}e^{-\sigma_t p_m}dt
\end{equation} 
\begin{equation}
=\left(1-\frac{1-p_m}{n-1} \right)\frac{L}{p_m}(e^{-\sigma_1 p_m}-e^{-\sigma_0 p_m})
\end{equation} 
Since $\sigma_t$ is roulette-loglinear, that is $\sigma_t=-\frac{1}{p_m}\log{\left(1-(1-\epsilon)t\right)}$, we get the result stated in the Proposition. 

In the Uniform case, we notice that Equation (\ref{proposition6_base}) can be rewritten as 
\begin{equation}
   \mathbb{E}_{t\sim U(0,1)}\sum_{i=1}^L \sum_{\vx^i_0}\sum_{\vx^i_t}p(\vx^i_t,\vx^i_0)\mQ_t^{tok}(\vx^i_t,\vx^i_t)
\end{equation}  
\begin{equation}
=\mathbb{E}_{t\sim U(0,1)}\sum_{i=1}^L \sum_{\vx^i_0}\sum_{\vx^i_t}p(\vx^i_t,\vx^i_0)\sigma^{'}(t)\left(\frac{1}{V}-1\right)=-L\left(1-\frac{1}{V}\right)\int_0^1 \sigma^{'}(t) dt.
\end{equation}  
\end{proof}

\subsection{Cross Entropy Discrete Diffusion (CEDD)}\label{rederive}
\subsubsection{Rederiving the Denosing Score Entropy Loss}
In what follows, we rederive the Denosing Score Entropy Loss for completeness \citep{lou2023discrete}. Training a model using ratio matching is performed by minimizing the error of a network that receives as input $\vx_t$ and tries to predict the ratios of $\frac{p_t(\vy)}{p_t(\vx_t)}$, where $\vy$ is a neighbour of $\vx_t$ with Hamming distance of 1. In other words, one tries to minimize
\begin{equation}
    \bar{\ell} \left(\frac{p_t(\vy)}{p_t(\vx_t)}, s_\theta(\vx_t, t)_\vy\right),
\end{equation}
for all $\vx_t$ following distribution $p_t(\vx_t)$, that is: 
\begin{equation}
    \mathbb{E}_{\vx_t\sim p_t(\vx_t)} \bar{\ell}\left(\frac{p_t(\vy)}{p_t(\vx_t)}, s_\theta(\vx_t, t)_\vy\right).
\end{equation}
The main bottleneck is that the ratios $\frac{p_t(\vy)}{p_t(\vx_t)}$ are unknown, as $p_t(\vx_t)=\int p_{t|0}(\vx_t|\vx_0)p_0(\vx_0)d \vx_0$ and $p_0(\vx_0)$ is what we are trying to model in the first place.
Luckily, the denoising trick can be employed: 
\begin{equation}\label{eqghj}
    \mathbb{E}_{\vx_t\sim p_t(\vx_t)} \bar{\ell}\left(\frac{p_t(\vy)}{p_t(\vx_t)}, s_\theta(\vx_t, t)_\vy\right)=\mathbb{E}_{\vx_t\sim p_t(\vx_t)} \bar{\ell}\left(\Sigma_{\vx_0}p_{t|0}(\vy|\vx_0)\frac{p_0(\vx_0)}{p_t(\vx_t)}, s_\theta(\vx_t, t)_\vy\right)=
\end{equation}
\begin{equation}
=\mathbb{E}_{\vx_t\sim p_t(\vx_t)} \bar{\ell}\left(\Sigma_{\vx_0}\frac{p_{t|0}(\vy|\vx_0)}{p_{t|0}(\vx_t|\vx_0)}\frac{p_{t|0}(\vx_t|\vx_0)p_0(\vx_0)}{p_t(\vx_t)}, s_\theta(\vx_t, t)_\vy\right)
\end{equation}
\begin{equation}\label{eqgdj}
=\mathbb{E}_{\vx_t\sim p_t(\vx_t)} \bar{\ell}\left(\Sigma_{\vx_0}\frac{p_{t|0}(\vy|\vx_0)}{p_{t|0}(\vx_t|\vx_0)}p_{0|t}(\vx_0|\vx_t), s_\theta(\vx_t, t)_\vy\right)
\end{equation}
\begin{equation}
=\mathbb{E}_{\vx_t\sim p_t(\vx_t)} \bar{\ell}\left(\mathbb{E}_{\vx_0 \sim p_{0|t}(\vx_0|\vx_t)}\frac{p_{t|0}(\vy|\vx_0)}{p_{t|0}(\vx_t|\vx_0)}, s_\theta(\vx_t, t)_\vy\right).
\end{equation}
When $\bar{\ell}$ is linear with respect to ratios, like in the case of Score Entropy \citep{lou2023discrete}, then we can pull the inner expectation (sum) outside and have 
\begin{equation}
    \mathbb{E}_{\vx_t\sim p_t(\vx_t)} \bar{\ell}\left(\frac{p_t(\vy)}{p_t(\vx_t)}, s_\theta(\vx_t, t)_\vy\right) = \mathbb{E}_{\vx_t\sim p_t(\vx_t)} \mathbb{E}_{\vx_0 \sim p_{0|t}(\vx_0|\vx_t)} \bar{\ell}\left(\frac{p_{t|0}(\vy|\vx_0)}{p_{t|0}(\vx_t|\vx_0)}, s_\theta(\vx_t, t)_\vy\right),
\end{equation}
which of course is equal to 
 \begin{equation}\label{denoising}
\mathbb{E}_{\vx_0\sim p_0(\vx_0)} \mathbb{E}_{\vx_t \sim p_{t|0}(\vx_t|\vx_0)} \bar{\ell}\left(\frac{p_{t|0}(\vy|\vx_0)}{p_{t|0}(\vx_t|\vx_0)}, s_\theta(\vx_t, t)_\vy\right).
\end{equation}

\subsubsection{Deriving Cross-Entropy from Ratio Matching}\label{ceddappendix}

In order to go from Equation (\ref{eqghj}) to (\ref{eqgdj}) we used the fact that 
 \begin{equation}
\frac{p_t(\vy)}{p_t(\vx_t)} = \Sigma_{\vx_0}\frac{p_{t|0}(\vy|\vx_0)}{p_{t|0}(\vx_t|\vx_0)}p_{0|t}(\vx_0|\vx_t).
\end{equation}
Now, we select position $i$ on the sequence with length $L$. We want to find an expression of the ratios $\frac{p_t(\vy)}{p_t(\vx_t)}$ for all sequences $\vy$, which differ with $\vx_t$ only on position $i$. From above we have 
 \begin{equation}
\frac{p_t(\vy)}{p_t(\vx_t)} = \Sigma_{\vx_0}\frac{p_{t|0}(\vy|\vx_0)}{p_{t|0}(\vx_t|\vx_0)}p_{0|t}(\vx_0|\vx_t),
\end{equation} where

\begin{equation}
\frac{p_{t|0}(\vy|\vx_0)}{p_{t|0}(\vx_t|\vx_0)}=\frac{p_{t|0}\left((\vy^{(0)}, \vy^{(1)}, ..., \vy^{(i)}, ..., \vy^{(V-1)})|({\vx_0}^{(0)}, {\vx_0}^{(1)}, ..., {\vx_0}^{(i)}, ..., {\vx_0}^{(V-1)})\right)}{p_{t|0}\left(({\vx_t}^{(0)}, {\vx_t}^{(1)}, ..., {\vx_t}^{(i)}, ..., {\vx_t}^{(V-1)})|({\vx_0}^{(0)}, {\vx_0}^{(1)}, ..., {\vx_0}^{(i)}, ..., {\vx_0}^{(V-1)})\right)}
\end{equation}
and due to independence between entries in the forward process we have
\begin{equation}
\frac{p_{t|0}(\vy|\vx_0)}{p_{t|0}(\vx_t|\vx_0)}=\frac{\prod p_{t|0}(\vy^j|{\vx_0}^j)}{\prod p_{t|0}({\vx_t}^j|{\vx_0}^j)}=\prod \frac{p_{t|0}(\vy^j|{\vx_0}^j)}{p_{t|0}({\vx_t}^j|{\vx_0}^j)}=\frac{p_{t|0}(\vy^i|{\vx_0}^i)}{p_{t|0}({\vx_t}^i|{\vx_0}^i)},
\end{equation}
as the rest of these ratios are 1, since $\vy$ differs with $\vx_t$ only on position $i$. Therefore we have 
 \begin{equation}
\frac{p_t(\vy)}{p_t(\vx_t)} = \Sigma_{\vx_0}\frac{p_{t|0}(\vy^i|{\vx_0}^i)}{p_{t|0}({\vx_t}^i|{\vx_0}^i)}p_{0|t}(\vx_0|\vx_t),
\end{equation} 
where $p_{0|t}(\vx_0|\vx_t)=p_{0|t}({\vx_0}^{(0)}, {\vx_0}^{(1)}, ..., {\vx_0}^{(i)}, ..., {\vx_0}^{(V-1)}|\vx_t)$ and where the expression inside the sum $\frac{p_{t|0}(\vy^i|{\vx_0}^i)}{p_{t|0}({\vx_t}^i|{\vx_0}^i)}$, depends only on ${\vx_0}^i$ and not on the rest of ${\vx_0}^j$ for $j\neq i$. Thus,
 \begin{equation}
\Sigma_{\vx_0}\frac{p_{t|0}(\vy^i|{\vx_0}^i)}{p_{t|0}({\vx_t}^i|{\vx_0}^i)}p_{0|t}(\vx_0|\vx_t)=\Sigma_{{\vx_0}^i}\frac{p_{t|0}(\vy^i|{\vx_0}^i)}{p_{t|0}({\vx_t}^i|{\vx_0}^i)}\Sigma_{{\vx_0}^{(0)}, ..., {\vx_0}^{(i-1)}, {\vx_0}^{(i+1)}, ..., {\vx_0}^{(V-1)}}p_{0|t}(\vx_0|\vx_t),
\end{equation} 
which implies that 
 \begin{equation}
\frac{p_t(\vy)}{p_t(\vx_t)} = \Sigma_{{\vx_0}^i}\frac{p_{t|0}(\vy^i|{\vx_0}^i)}{p_{t|0}({\vx_t}^i|{\vx_0}^i)}p^i_{0|t}({\vx_0}|\vx_t),
\end{equation}
or more clearly 
 \begin{equation}\label{ratioexplicitformula}
\frac{p_t(\vy)}{p_t(\vx_t)} = \Sigma_{h\in[V]}\frac{p_{t|0}(\vy^i|h)}{p_{t|0}({\vx_t}^i|h)}p^i_{0|t}(h|\vx_t).
\end{equation}
If we learn the $V$ probabilities $[p^i_{0|t}({0}|\vx_t), p^i_{0|t}({1}|\vx_t), ..., p^i_{0|t}({(V-1)}|\vx_t)]$, since we analytically have $\frac{p_{t|0}(\vy^i|{h}^i)}{p_{t|0}({\vx_t}^i|{h}^i)}$ (from the matrix exponential), then we we will have the ratio  $\frac{p_t(\vy)}{p_t(\vx_t)}$. We can choose another sequence $\vz$ which also differs only at position $i$ from $\vx_t$. Then we will have 
 \begin{equation}\label{eq:ratio_distilled}
\frac{p_t(\vz)}{p_t(\vx_t)} = \Sigma_{h\in[V]}\frac{p_{t|0}(\vz^i|h)}{p_{t|0}({\vx_t}^i|h)}p^i_{0|t}(h|\vx_t),
\end{equation}
which highlights the fact that even though the ratios might change, the same $[p^i_{0|t}({0}|\vx_t), p^i_{0|t}({1}|\vx_t), ..., p^i_{0|t}({(V-1)}|\vx_t)]$ appear as long as the selected position $i$ in the sequence remains unchanged. Therefore if we learn these probabilities for a position, we will have the ratios of all neighbours that differ only in that position, that is, we have $V$ ratios, by modeling these $V$ probabilities.
\newline
There is nothing special about position $i$ and we can choose to model $V$ probabilities for each $L$ positions. In order to do so, we define a neural network $f_\theta(\vx_t, t)$ whose output is $L\times V$, where entry $(i,h)$ of the output (that is $f_\theta^i(\vx_t, t)[h]$), predicts $p^i_{0|t}(h|\vx_t)$ the probability of the entry $i$ of $\vx_t$ (position $i$ of the current sequence), having being perturbed from token with id $h$ in the original sequence $\vx_0$, given that we are at sequence $\vx_t$ right now. That is, the model is directly trying to predict from where each of the tokens in the current sequence came from. Therefore we will reparametrize our ratio model as follows 
 \begin{equation}
s^i_\theta(\vx_t, t)[h]=s^i_\theta(\vx_t, t)_h = \Sigma_{h\in[V]}\frac{p_{t|0}(\vz^i|h)}{p_{t|0}({\vx_t}^i|h)}f_\theta^i(\vx_t, t)[h],
\end{equation}
where $f_\theta^i(\vx_t, t)$ are the outputs of the softmax at the end, and where for each $i$, distribution $f_\theta^i(\vx_t, t)$ should match $p_{0|t}^i({\cdot|\vx_t)}$. This happens when $D_{KL} (p_{0|t}^i({\cdot|\vx_t)}|f_\theta^i(\vx_t, t))=0$, thus we minimize,
 \begin{equation}
\sum_{i=1}^L \mathbb{E}_{t \sim U(0,1)}w(t)\mathbb{E}_{\vx_t\sim p_t(\vx_t)}  D_{KL} (p_{0|t}^i({\cdot|\vx_t)}||f_\theta^i(\vx_t, t)).
\end{equation}
Since $D_{KL} (p_{0|t}^i({\cdot|\vx_t)}||f_\theta^i(\vx_t, t)) = H (p_{0|t}^i({\cdot|\vx_t)}, f_\theta^i(\vx_t, t))-C=\mathbb{E}_{p_{0|t}^i({h|\vx_t)}}\log {f_\theta^i(\vx_t, t)[h]}-C$ the loss function above has the same gradients with regards to network parameters as 
 \begin{equation}
-\sum_{i=1}^L \mathbb{E}_{t \sim U(0,1)}w(t)\mathbb{E}_{\vx_t\sim  p_t(\vx_t)} \mathbb{E}_{h\sim p_{0|t}^i({h|\vx_t)}}\log {f_\theta^i(\vx_t, t)[h]},
\end{equation}
which is equal to 
 \begin{equation}
-\sum_{i=1}^L \mathbb{E}_{t \sim U(0,1)}\mathbb{E}_{\vx_t\sim  p_t(\vx_t)} \mathbb{E}_{\vx_0\sim p_{0|t}({\vx_0|\vx_t)}}w(t)\log {f_\theta^i(\vx_t, t)[\vx_0^i]}.
\end{equation}
Thus, as in  \citep{campbell2022continuous}, the loss we utilize in our training is $L_{ll}$,
 \begin{equation}
L_{ll}=-\mathbb{E}_{t \sim U(0,1)} \mathbb{E}_{\vx_0\sim p_0(\vx_0)} \mathbb{E}_{\vx_t \sim p_{t|0}(\vx_t|\vx_0)} \sum_{i=1}^L w(t)\log f_\theta^i(\vx_t, t)[\vx_0^i].
\end{equation}

We can also motivate this loss from the score entropy in Equation (\ref{denoising}). Indeed, since $\ell$ and $\bar{\ell}$ differ by a constant, its gradients are the same as those of
 \begin{equation}
\mathbb{E}_{\vx_0\sim p_0(\vx_0)} \mathbb{E}_{\vx_t \sim p_{t|0}(\vx_t|\vx_0)} \ell\left(\frac{p_{t|0}(\vy|\vx_0)}{p_{t|0}(\vx_t|\vx_0)}, \Sigma_{h\in[V]}\frac{p_{t|0}(\vy^i|h)}{p_{t|0}({\vx_t}^i|h)}f_\theta^i(\vx_t, t)[h] \right),
\end{equation} and as before since $\vy$ and $\vx_t$ differ at only one token we get 
 \begin{equation}
=\mathbb{E}_{\vx_0\sim p_0(\vx_0)} \mathbb{E}_{\vx_t \sim p_{t|0}(\vx_t|\vx_0)} \ell\left(\frac{p_{t|0}(\vy^i|{\vx_0}^i)}{p_{t|0}({\vx_t}^i|{\vx_0}^i)}, \Sigma_{h\in[V]}\frac{p_{t|0}(\vy^i|h)}{p_{t|0}({\vx_t}^i|h)}f_\theta^i(\vx_t, t)[h] \right).
\end{equation}
The function $\ell(a,b)$ is clearly minimized when $a=b$, in which case $\ell(a,b)=0$. In our case,
 \begin{equation}
\ell\left(\frac{p_{t|0}(\vy^i|{\vx_0}^i)}{p_{t|0}({\vx_t}^i|{\vx_0}^i)}, \Sigma_{h\in[V]}\frac{p_{t|0}(\vy^i|h)}{p_{t|0}({\vx_t}^i|h)}f_\theta^i(\vx_t, t)[h] \right)=0,
\end{equation}
for
 \begin{equation}
\frac{p_{t|0}(\vy^i|{\vx_0}^i)}{p_{t|0}({\vx_t}^i|{\vx_0}^i)}=\Sigma_{h\in[V]}\frac{p_{t|0}(\vy^i|h)}{p_{t|0}({\vx_t}^i|h)}f_\theta^i(\vx_t, t)[h],
\end{equation}
which happens when $f_t^i(\vx_t, \theta)[h\neq {\vx_0}^i]=0$ and $f^i_t(\vx_t, \theta)[{\vx_0}^i]=1$, that is when our probability prediction matches the one hot encoding used in cross entropy. Thus we train our model with cross entropy, which simplifies the job of the network of learning complex ratios, as the conditional ratios now do not participate in the loss. We only add them during sampling to the weighted sum in order to predict the marginal ratios, since we have analytic expressions for the conditional ratios. Therefore, we train the model by minimizing
 \begin{equation}
-\mathbb{E}_{t \sim U(0,1)} \mathbb{E}_{\vx_0\sim p_0(\vx_0)} \mathbb{E}_{\vx_t \sim p_{t|0}(\vx_t|\vx_0)} \sum_{i=1}^L w(t)\log f_\theta^i(\vx_t, t)[\vx_0^i].
\end{equation}
\subsubsection{Relation to Cross Entropy in Continuous Diffusion}\label{continousdiffconnection}
Cross entropy has also been used in continuous diffusion models applied to Language Modelling \citep{dieleman2022continuous}. Here, we draw parallels between the two approaches. We denote a sequence of $L$ tokens at time $t$ as $\vx_t = (\vx_t^1, ..., \vx_t^L)$, where each token is embedded in a $D-$dimensional space $\vx_t^i\in\mathbb{R}^D$. That is, each sequence can be seen as an $L\times D$ vector, created from concatenating the embeddings of each token. The score that generates the reverse process, is therefore a $L\times D$ vector $s_\theta(x_t, t)$, which approximates:
\begin{equation}
    \nabla_{\vx_t}\log p_t(\vx_t)=\int \nabla_{\vx_t}\log p_t(\vx_t|\vx_0) p_t(\vx_0|\vx_t)d\vx_0=
\end{equation}
\begin{equation}
    \int \nabla_{\vx_t}\log e^\frac{-||\vx_t-\vx_0||^2}{2\sigma_t^2} p_t(\vx_0|\vx_t)d\vx_0=\int \frac{\vx_0-\vx_t}{\sigma_t^2} p_t(\vx_0|\vx_t)d\vx_0=\int \frac{\vx_0}{\sigma_t^2} p_t(\vx_0|\vx_t)d\vx_0-\frac{\vx_t}{\sigma_t^2}.
\end{equation}
From above, it is clear that
\begin{equation}
    \nabla_{\vx_t^i}\log p_t(\vx_t)=\int \frac{\vx_0^i}{\sigma_t^2} p_t(\vx_0|\vx_t)d\vx_0-\frac{\vx_t^i}{\sigma_t^2}=\int \frac{\vx_0^i}{\sigma_t^2} \left(\int p_t(\vx_0|\vx_t)d\vx_0^{-i}\right)d\vx_0^i  -\frac{\vx_t^i}{\sigma_t^2},
\end{equation}
where $\vx_0^{-i}$ denotes all entries of $\vx_0$ without the ones of $\vx_0^{i}$. Therefore
\begin{equation}
    s_\theta(\vx_t, t)_{(i1, i2, .., iD)} \approx \nabla_{\vx_t^i}\log p_t(\vx_t) =\int \frac{\vx_0^i}{\sigma_t^2} p_t(\vx_0^i|\vx_t) d\vx_0^i  -\frac{\vx_t^i}{\sigma_t^2}.
\end{equation}
Clearly, all that need to be learned in order to model the score for the dimensions of the token at position $i$ are the $V$ probabilities $p_t(\vx_0^i|\vx_t)$ . For $L$ such tokens (to model the score of the entire sequence) one simply needs to model $L\times V$ probabilities using cross entropy as in the discrete case.

\newpage
\subsection{Roulette Discrete Diffusion}
The roulette transition-rate matrix is 
\begin{equation}\label{eqQt4}
\mQ_{roulette}^{tok} = \begin{pmatrix}
\frac{1}{n-1}(1-p_m)-1 & \frac{1}{n-1}(1-p_m) &... &\frac{1}{n-1}(1-p_m)& 0\\
\frac{1}{n-1}(1-p_m) & \frac{1}{n-1}(1-p_m)-1 &... &\frac{1}{n-1}(1-p_m)& 0\\
... & ... &... &...&...\\
\frac{1}{n-1}(1-p_m) & \frac{1}{n-1}(1-p_m) &... &\frac{1}{n-1}(1-p_m)-1& 0\\
p_m & p_m &... &p_m& 1-1\
\end{pmatrix}_{n\times n},
\end{equation}
where $n-1=V$ is the number of tokens in our vocabulary. We add a special token for the absorb (mask) state, therefore increasing the number of total states to $n$, with the token id $n$ corresponding to the absorbed state. Unfortunately, we cannot derive the matrix exponential as in the case of absorb and uniform diffusion, since the corresponding matrix $\mP$ of $\mQ$ is not idempotent. Therefore, we will first derive the exponential matrix of the uniform and absorb diffusion manually, in order to motivate the manual derivation of the exponential matrix in the roulette case. 
\subsubsection{Deriving the exponential matrix of the absorb diffusion}\label{appendixabsorbexp}
The transition matrix in the absorb case is 
\begin{equation}
\mQ_{abs}^{tok} = \begin{pmatrix}
0-1 & 0 &... &0& 0\\
0 & 0-1 &... &0& 0\\
... & ... &... &...&...\\
0 & 0 &... &0-1& 0\\
1 & 1 &... &1& 1-1\
\end{pmatrix}_{n\times n}.
\end{equation}
Clearly this is $\mQ_{roulette}^{tok}$ when $p_m=1$.
\newline
If we are at state $i$, the probability of moving at state $n$ (the absorb/mask state) for a time-step of size $\epsilon$ is $\epsilon$.
We discretize the time interval into discrete steps that are multiples of $\epsilon$, that is $[0,\tau]\rightarrow{\{\epsilon,...,j\epsilon,...\lfloor \frac{\tau}{\epsilon} \rfloor\epsilon\}}$ and denote the event of jumping from $i\neq n$ to $n$ at time $j\epsilon$ as $a_j$. The intersection of any of these two events is empty and their union is the space of all possibilities, thus the probability of being at $n$ at time $\tau$ is
\begin{equation}
    p(x_\tau=n)=p(\{x_\tau=n\}\cap \Omega)=p(\{x_\tau=n\}\cap \{\cup a_j\})=p(\cup_{j\epsilon<\tau}\{ a_j\})
\end{equation}
\begin{equation}
    p(x_\tau=n)=\Sigma_{j<\frac{\tau}{\epsilon} }p( a_j)=\epsilon+\epsilon(1-\epsilon)+\epsilon(1-\epsilon)^2+...+\epsilon(1-\epsilon)^{\lfloor \frac{\tau}{\epsilon} \rfloor-1}
\end{equation}
\begin{equation}
    p(x_\tau=n)=\epsilon\frac{1-(1-\epsilon)^{\lfloor \frac{\tau}{\epsilon} \rfloor}}{1-(1-\epsilon)}=1-(1-\epsilon)^{\lfloor \frac{\tau}{\epsilon} \rfloor}
\end{equation}
Taking the limit $\epsilon \rightarrow{0}$, we get $p(x_\tau=n)=1-e^{-\tau}$.
Thus, assuming that we start at state $i$, we have $p_{\tau|0}(j\not\in \{n,i\}|i\neq n)=0$, $p_{\tau|0}(j=i|i\neq n)=e^{-\tau}$ and $p_{\tau|0}(j=n|i\neq n)=1-e^{-\tau}$. This defines the column $i\neq n$ of the exponential $e^{\tau\mQ^{tok}}$. Iterating though different $i$ we construct the entire exponential matrix, except its last column which is a simple one hot encoding at $n$. Setting $\tau=\sigma_t$ finishes the derivation.
\subsubsection{Deriving the exponential of the uniform diffusion}\label{appendixuniformexp}
\begin{equation}
\mQ_{unif}^{tok} = \begin{pmatrix}
\frac{1}{n-1}-1 & \frac{1}{n-1} &... &\frac{1}{n-1}\\
\frac{1}{n-1}& \frac{1}{n-1}-1 &... &\frac{1}{n-1}\\
... & ... &... &...\\
\frac{1}{n-1} & \frac{1}{n-1} &... &\frac{1}{n-1}-1\\
\end{pmatrix}_{V\times V}
\end{equation}
Clearly this is $\mQ_{roulette}^{tok}$ when $p_m= 0$, without the absorption row and column, as our vocabulary size is $V=n-1$ since we do not have the special token.
As before we discretize the time interval into discrete steps that are multiples of $\epsilon$, that is $[0,\tau]\rightarrow{\{\epsilon,...,j\epsilon,...\lfloor \frac{\tau}{\epsilon} \rfloor\epsilon\}}$. We assume that initially (at time $0$) we start at position $i$. We wish to find the probability of being at state $k\neq i$ at time $\lfloor \frac{\tau}{\epsilon} \rfloor\epsilon$. For the sake of simplicity, we abuse notation by denoting $p_{(\lfloor \frac{\tau}{\epsilon} \rfloor\epsilon-m\epsilon)|0}(j| i)$ as $\bar{p}_{\tau-m\epsilon}(j|i)$. By symmetry we know that for every $j\neq i$ the probability $\bar{p}_{\tau-\epsilon}(j|i)$ is the same.  Thus the probability of $\bar{p}_{\tau}(j|i)$ is the sum of the following components:
\begin{itemize}
    \item probability of being at $j\not\in\{i, k\}$ at time $\tau-\epsilon$, i.e, ($\bar{p}_{\tau-\epsilon}(j|i)$), times $n-3=V-2$ such states, times probability of moving ($\epsilon$), times probability of hitting $k$ on that move ($\frac{1}{n-1}$).
    \item probability of being at $j=i$ at time $\tau-\epsilon$, i.e, ($1-(n-2)\cdot \bar{p}_{\tau-\epsilon}(j|i)$), times probability of moving ($\epsilon$), times probability of hitting $k$ on that move ($\frac{1}{n-1}$).
    \item probability of being at $j=k$ at time $\tau-\epsilon$, i.e, ($\bar{p}_{\tau-\epsilon}(j|i)$), times of staying there ($1-\epsilon$).
    \item probability of being at $j=k$ at time $\tau-\epsilon$, i.e, ($\bar{p}_{\tau-\epsilon}(j|i)$), times of moving ($\epsilon$), times of hitting itself on this move ($\frac{1}{n-1}$).
\end{itemize}
We write them down mathematically and get:
\begin{equation}
   \bar{p}_{\tau}(k|i)= \bar{p}_{\tau-\epsilon}(j|i)\epsilon\frac{1}{n-1}(n-3)+(1-(n-2)\bar{p}_{\tau-\epsilon}(j|i))\epsilon\frac{1}{n-1}+
\end{equation}
\begin{equation}
   +\bar{p}_{\tau-\epsilon}(j|i)(1-\epsilon)+\bar{p}_{\tau-\epsilon}(j|i)\frac{1}{n-1}\epsilon,
\end{equation}
thus 
\begin{equation}
   \bar{p}_{\tau}(k|i)= \frac{1}{n-1}\epsilon+\bar{p}_{\tau-\epsilon}(j|i)(1-\epsilon).
\end{equation}
Since by definition $k\neq i$ and $j\neq i$, then by symmetry we have $\bar{p}_{\tau}(k|i)=\bar{p}_{\tau}(j|i)$.
Therefore we get the recursion: 
\begin{equation}
   p_{\lfloor \frac{\tau}{\epsilon} \rfloor\epsilon|0}(k| i)= \frac{1}{n-1}\epsilon+p_{(\lfloor \frac{\tau}{\epsilon} \rfloor\epsilon-\epsilon)|0}(k| i),
\end{equation}
which if we fully develop becomes:
\begin{equation}
   p_{\lfloor \frac{\tau}{\epsilon} \rfloor\epsilon|0}(k| i)= \frac{1}{n-1}\epsilon(1+(1-\epsilon)+(1-\epsilon)^2+...+(1-\epsilon)^{\lfloor \frac{\tau}{\epsilon} \rfloor-1}),
\end{equation}
which is equal to:
\begin{equation}
   p_{\lfloor \frac{\tau}{\epsilon} \rfloor\epsilon|0}(k| i)= \frac{1}{n-1}\epsilon\frac{1-(1-\epsilon)^{\lfloor \frac{\tau}{\epsilon} \rfloor}}{1-(1-\epsilon)}= \frac{1}{n-1}(1-(1-\epsilon)^{\lfloor \frac{\tau}{\epsilon} \rfloor}).
\end{equation}
Taking the limit $\epsilon\rightarrow 0$ gives $p_{\tau|0}(k\neq i|i)=\frac{1}{n-1}(1-e^{-\tau})$ and $p_{\tau|0}(i|i)=1-\frac{n-2}{n-1}(1-e^{-\tau})$, which gives the $i$th column of the exponential matrix. Setting $\tau=\sigma_t$ finishes the derivation.
\subsubsection{Deriving the exponential of the roulette diffusion}\label{appendixroulettederiv}
First, we notice that we must start at a state that is different from the absorb state (there are no masked tokens in the training data). We can split the states into two groups, the non-absorbing states, and the absorbing state. We consider the non-absorbing states as a single super-state and the absorbing state as the other super-state. From our derivations in Appendix \ref{appendixabsorbexp}, we know how to derive the probability of being at the absorb super-state at time $\tau$. The only difference is that previously, if we moved, it would be certain we would move to the absorption super-state, while in our case, we can move to a different state in our non-absorb super-state. The possibility of moving to the absorb super-state in a $\epsilon$ time step, changes from $\epsilon$ to $\epsilon p_m$, and the formula becomes
\begin{equation}
    p(x_\tau=n)=\Sigma_{j<\frac{\tau}{\epsilon} }p( a_j)=\epsilon p_m+\epsilon p_m(1-\epsilon p_m)+\epsilon p_m(1-\epsilon p_m)^2+...+\epsilon p_m(1-\epsilon p_m)^{\lfloor \frac{\tau}{\epsilon} \rfloor-1}
\end{equation}
\begin{equation}
    p(x_\tau=n)=\epsilon p_m\frac{1-(1-\epsilon p_m)^{\frac{1}{\epsilon p_m} \lfloor \frac{\tau}{\epsilon} \rfloor \epsilon p_m}}{1-(1-\epsilon p_m)}=1-(1-\epsilon p_m)^{\frac{1}{\epsilon p_m} \lfloor \frac{\tau}{\epsilon} \rfloor \epsilon p_m}
\end{equation}
which implies that $p(x_\tau=n)=1-e^{-\tau p_m}$. This means that at time $\tau$ we are in one of the non-absorption states with a probability of $e^{-\tau p_m}$. Now, as before (Appendix \ref{appendixuniformexp}), we want to find the probability of being at each non-absorption state at time $\tau$ given that we stated a state $i$. We can construct these probabilities using a similar approach as before, but with two key differences. First, because we are conditioning on being in the non-absorption super-state, all probabilities must be multiplied by  $e^{-\tau p_m}$. Second, this conditioning implicitly provides some information about the probability of moving when going from time $\tau-\epsilon$ to time $\tau$. In the uniform case, for a time step $\epsilon$ this probability was $\epsilon$, but now conditioning on the fact that we are not at the absorption state at time $\tau$ provides extra information on whether we moved when going from $\tau-\epsilon$ to $\tau$. Indeed, given this extra information, we expect the probability of having moved when going form $\tau-\epsilon$ to $\tau$ to be reduced. This becomes obvious when $p_m=1$, as then we are in the case of the absorb diffusion, and saying that at time $\tau$ we are not at the absorption state, immediately implies that we did not move, as there are no other possibilites. As $p_m$ decreases, the magnitude of this information decreases. So instead of writing the probability of moving (during an $\epsilon$ time step) with $\epsilon$ as we did previously, we write this probability with $\delta(\epsilon)$, and as before we can derive that 
\begin{equation}
   \bar{p}_{\tau}(k|i)= \bar{p}_{\tau-\epsilon}(j|i)\delta\frac{1}{n-1}(n-3)+(1-(n-2)\bar{p}_{\tau-\epsilon}(j|i))\delta\frac{1}{n-1}+
\end{equation}
\begin{equation}
   +\bar{p}_{\tau-\epsilon}(j|i)(1-\delta)+\bar{p}_{\tau-\epsilon}(j|i)\frac{1}{n-1}\delta
\end{equation}
thus 
\begin{equation}
   \bar{p}_{\tau}(k|i)= \frac{1}{n-1}\delta+\bar{p}_{\tau-\epsilon}(k|i)(1-\delta)
\end{equation}
and by the same recursion trick as before,
\begin{equation}
   p_{\lfloor \frac{\tau}{\epsilon} \rfloor\epsilon|0}(k| i)= \frac{1}{n-1}\delta\frac{1-(1-\delta)^{\lfloor \frac{\tau}{\epsilon} \rfloor}}{1-(1-\delta)}= \frac{1}{n-1}(1-(1-\delta)^{\lfloor \frac{\tau}{\epsilon} \rfloor}).
\end{equation}
Now we derive the expression of $\delta$. We write the move event when going from $\tau-\epsilon$ to $\tau$ with $A$ and not going at the absorbed state at time $\tau$ with $B$
\begin{equation}
   \delta=p(A|B)=\frac{p(A)}{p(B)}p(B|A)=\frac{p(B|A)p(A)}{p(B|A)p(A)+p(B|A^C)p(A^C)}=
\end{equation}
\begin{equation}
   =\frac{(1-p_m)\epsilon}{(1-p_m)\epsilon+1(1-\epsilon)}=\alpha\epsilon
\end{equation}
where $\alpha=\frac{(1-p_m)}{(1-p_m)\epsilon+1(1-\epsilon)}$ goes to $1-p_m$ when $\epsilon$ goes to 0,
therefore finally
\begin{equation}
   p_{\lfloor \frac{\tau}{\epsilon} \rfloor\epsilon|0}(k| i)=\frac{1}{n-1}(1-(1-\delta)^{\lfloor \frac{\tau}{\epsilon} \rfloor})=\frac{1}{n-1}(1-(1-\delta)^{\frac{1}{\delta}\delta {\lfloor \frac{\tau}{\epsilon} \rfloor}})=\frac{1}{n-1}(1-(1-\delta)^{\frac{1}{\delta}\alpha\epsilon {\lfloor \frac{\tau}{\epsilon} \rfloor}}).
\end{equation}
Therefore $p_{\tau|0}(j\not\in\{i,n\}|i)=e^{-\tau p_m}\frac{1}{n-1}(1-e^{(1-p_m)\tau})$, and $p_{\tau|0}(n|i)=1-e^{-\tau p_m}$, and $p_{\tau|0}(i|i)=e^{-\tau p_m}(1-\frac{n-2}{n-1}(1-e^{(1-p_m)\tau}))$, which gives the $i$th column of the exponential matrix, when $i\neq n$. In case $i=n$, the column of the exponential matrix is simply the one hot encoding at $n$. Setting $\tau=\sigma_t$ finishes the derivation.

\textbf{Proposition 5}\emph{
    If we denote with $\mY_t$ the matrix exponential of $\sigma_t \mQ^{tok}_{roulette}=\sigma_t \left(\mI-\mP^{tok}_{roulette}\right)$, then $\mY_t(i\not\in\{j,n\}, j\neq n)=e^{-\sigma_t  p_m}\frac{1}{n-1}(1-e^{-(1-p_m)\sigma_t })$, $\mY_t(i\neq n,i\neq n)=e^{-\sigma_t  p_m}(1-\frac{n-2}{n-1}(1-e^{-(1-p_m)\sigma_t }))$, $\mY_t (n,j\neq n)=1-e^{-\sigma_t  p_m}$, $\mY_t (i\neq n, n)=0$, and $\mY_t (n, n)=1$.}

\subsubsection{Controlling the number of the corrected tokens in the reverse process}\label{nrcorrectedtokens}
We wish to see how the choice of $p_m$ corresponds to the probability of a token moving uniformly at least once before being masked, given some diffusion interval from $[0, T]$. This should correspond to the probability of a token being corrected after it is unmasked in the reverse process, therefore enabling us to control the expected number of tokens to be corrected. 
\newline
As before, we discretize the time interval into subintervals of length $\epsilon$. The event of a token moving before getting masked will be denoted by $X$, and the event of a token being masked at time $j\epsilon$ is denoted with $A_j$. Therefore
\begin{equation}
    p(X)=p(X\cap \Omega)=p(X\cap \{\cup A_j\})=p(\cup_{j\epsilon<T}X\cap A_j)
\end{equation}
\begin{equation}
    p(X)=\Sigma_{j<\lfloor \frac{T}{\epsilon} \rfloor}p(X|A_j)p(A_j).
\end{equation}
$p(A_j)$ is the probability of being at the absorbtion state at $j\epsilon$ and not being there at $(j-1)\epsilon$. This probability is $p(A_j)=(1-e^{-j\epsilon p_m})-(1-e^{-(j-1)\epsilon p_m})=e^{-(j-1)\epsilon p_m}-e^{-j\epsilon p_m}$
therefore $p(A_j)=e^{-(j-1)\epsilon p_m}(1-e^{-\epsilon p_m})\approx e^{-(j-1)\epsilon p_m}\epsilon p_m$.
We write $\tau_i=i\epsilon$, and thus we have 
\begin{equation}
    p(X)=\Sigma_{j<\lfloor \frac{T}{\epsilon} \rfloor}p(X|A_j)p(A_j)=\Sigma_{j<\lfloor \frac{T}{\epsilon} \rfloor}\frac{n-2}{n-1}(1-e^{-(1-p_m)\tau_{j-1}}) \cdot e^{-\tau_{j-1}p_m}\epsilon p_m
\end{equation}
which is a Riemann sum. Taking the limit we get the following integral
\begin{equation}
    p(X)=p_m\frac{n-2}{n-1}\int_0^T (1-e^{-(1-p_m)x}) \cdot e^{-x p_m}dx ,
\end{equation}
whose solution is 
\begin{equation}
    p(X)=p_m\frac{n-2}{n-1}(e^{-T}-\frac{1}{p_m}e^{-T p_m}+\frac{1}{p_m}-1).
\end{equation}
Setting $t=\sigma_\tau$, we get 
\begin{equation}
    p(X)=\frac{n-2}{n-1}(e^{-\sigma_1}p_m-e^{-\sigma_1 p_m}+1-p_m).
\end{equation}
We can see that when $\sigma_1$ and $\sigma_1 p_m$ are relatively large then $p(X)\approx 1-p_m$. That is, the ratio of tokens moving before absorption is $p(X)\approx 1-p_m$, or in other words the probability of tokens being masked without ever moving is $p_m$ which is precisely the probability of a token being masked in our transition-rate matrix $\mQ^{tok}$. Thus the last row of our matrix directly controls the percentage of corrected tokens in the reverse process for large enough diffusion times.

\subsubsection{Time Evolving Roulette}\label{EVroulette_appendix}

Here, we generalize the results of the previous subsections for the case that $p_m$ varies with respect to time. We start by defining 
\begin{equation}\label{eroulette_def}
    \mQ_{eroulette}^{tok}(t)= [(1-p_m(t))\sigma(t)]^{'}_{t}\mQ_{uniform}^{tok}+[p_m(t)\sigma(t)]^{'}_{t} \mQ_{absorb}^{tok}.
\end{equation}

First, we point out that since the size of $\mQ_{eroulette}$ is $(n\times n)$ and that of $\mQ_{uniform}$ is $(V\times V)$, where $V=n-1$, in order for them to have the same size, we add a row of zeros and a column of zeros to the bottom and right of $\mQ_{uniform}$ respectively. Clearly, by definition, these are transition rate matrices, and therefore so is $\mQ_{eroulette}^{tok}(t)$. Indeed, the elements in each of its columns add to $0$, and the only negative elements are in the diagonals. If we define, $p_m(t)$ such that $p_m(0)=0$ and $p_m(1)=1$, then the limiting distribution will be the one-hot encoding at the absorb state. Furthermore, we can compute the exponential matrix of $\mQ_{eroulette}^{tok}(t)$. It is easy to prove that 
$(1-p_m(t))\sigma(t)\mQ_{uniform}^{tok}$ and $p_m(t)\sigma(t) \mQ_{absorb}^{tok}(t)$ commute with each other, thus
\begin{equation}
    e^{\mQ_{eroulette}^{tok}(t)}= e^{(1-p_m(t))\sigma(t)\mQ_{uniform}^{tok}+p_m(t)\sigma(t) \mQ_{absorb}^{tok}}.
\end{equation}
\begin{equation}
 = e^{p_m(t) \sigma_t \mQ_{absorb}^{tok}}e^{(1-p_m(t))\sigma_t\mQ_{uniform}^{tok}}.
\end{equation}
Writing $\alpha_t = p_m(t) \sigma_t$ and $\beta_t = (1-p_m(t))\sigma_t$, shows that we can calculate each of these exponential matrices using the strategy below
\begin{equation}\label{good_exponential}
    e^{c(t)\mQ^{tok}}=\mI+\sum_{k=1}^\infty \frac{(\mQ^{tok})^k c(t)^k}{k!}=\mI+\sum_{k=1}^\infty \frac{(-1)^{k+1} c(t)^k\mQ^{tok}}{k!}=\mI+\mQ^{tok}(1-e^{-c(t)}).
\end{equation}
Indeed,
\begin{equation}
    e^{\alpha_t \mQ_{absorb}^{tok}}=\mI+\mQ_{absorb}^{tok}(1-e^{-\alpha_t}) \text{ and } e^{\beta_t \mQ^{tok}}=\mI+\mQ_{uniform}^{tok}(1-e^{-\beta_t}).
\end{equation}
Multiplying them together, we get the following proposition:
\setcounter{theorem}{6} 
\begin{proposition}\label{proposition3}
    If we denote with $\mY_t$ the matrix exponential of $\sigma_t \mQ^{tok}_{eroulette}$, then $\mY_t(i\not\in\{j,n\}, j\neq n)=e^{-\sigma_t  p_m(t)}\frac{1}{n-1}(1-e^{-(1-p_m(t))\sigma_t })$, $\mY_t(i\neq n,i\neq n)=e^{-\sigma_t  p_m(t)}(1-\frac{n-2}{n-1}(1-e^{-(1-p_m(t))\sigma_t }))$, $\mY_t(n,j\neq n)=1-e^{-\sigma_t  p_m(t)}$, $\mY_t(i\neq n, n)=0$, and $\mY_t(n, n)=1$.
\end{proposition}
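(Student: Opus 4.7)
The plan is to reduce the time-evolving case to the constant case by exploiting commutativity of the two constituent transition-rate matrices, then reuse the computation already performed for Proposition~\ref{proposition2}.

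First, I would verify explicitly that $\mQ_{uniform}^{tok}$ (viewed as an $n\times n$ matrix by appending a zero row and zero column for the absorb state) and $\mQ_{absorb}^{tok}$ commute. This is a direct block-matrix check: $\mQ_{absorb}^{tok}$ is zero outside its diagonal and last row, and multiplying either side by $\mQ_{uniform}^{tok}$ only affects the non-absorb block, where $\mQ_{uniform}^{tok}$ is invariant under left/right multiplication by the identity-on-non-absorb component of $\mQ_{absorb}^{tok}$. Given this, $[(1-p_m(t))\sigma(t)]'_t\mQ_{uniform}^{tok}$ and $[p_m(t)\sigma(t)]'_t\mQ_{absorb}^{tok}$ commute at every time, and hence $\mQ_{eroulette}^{tok}(s)$ and $\mQ_{eroulette}^{tok}(s')$ commute for all $s,s'\in[0,1]$.

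Next, because the family $\{\mQ_{eroulette}^{tok}(s)\}$ is mutually commuting, the transition matrix $\mY_t$ satisfying $\frac{d\mY_t}{dt}=\mQ_{eroulette}^{tok}(t)\mY_t$ with $\mY_0=\mI$ admits the closed form $\mY_t=\exp\bigl(\int_0^t \mQ_{eroulette}^{tok}(s)\,ds\bigr)$. Using the definition of $\mQ_{eroulette}^{tok}$ in Equation~\eqref{eroulette_def}, together with $\sigma(0)=0$ and the fundamental theorem of calculus, the integrand telescopes to
\begin{equation*}
\int_0^t \mQ_{eroulette}^{tok}(s)\,ds = (1-p_m(t))\sigma(t)\,\mQ_{uniform}^{tok} + p_m(t)\sigma(t)\,\mQ_{absorb}^{tok}.
\end{equation*}
By commutativity we may factor the exponential as $\mY_t = e^{p_m(t)\sigma(t)\,\mQ_{absorb}^{tok}}\cdot e^{(1-p_m(t))\sigma(t)\,\mQ_{uniform}^{tok}}$.

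Now I would apply the idempotent trick from Equation~\eqref{good_exponential} to each factor separately (using that both $\mQ_{absorb}^{tok}$ and $\mQ_{uniform}^{tok}$ satisfy $(\mQ^{tok})^2=-\mQ^{tok}$), obtaining $\mI + \mQ_{absorb}^{tok}(1-e^{-p_m(t)\sigma(t)})$ and $\mI + \mQ_{uniform}^{tok}(1-e^{-(1-p_m(t))\sigma(t)})$ respectively. Finally, multiplying these two matrices out is the only remaining step, and it is essentially the same computation performed to derive Proposition~\ref{proposition2}, with the constant $p_m$ replaced everywhere by the function $p_m(t)$. Reading off entries case-by-case ($i\notin\{j,n\}, j\neq n$; $i=j\neq n$; $i=n, j\neq n$; $j=n$) reproduces exactly the claimed formulas.

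The only step that requires genuine care is the commutativity check in Step~1, since the paddings can obscure it; every other step is mechanical once commutativity is established. The main conceptual obstacle is recognizing that the time-dependence can be absorbed entirely into the scalar coefficients $(1-p_m(t))\sigma(t)$ and $p_m(t)\sigma(t)$ precisely because the ODE can be integrated in closed form under commutativity, so no Magnus/time-ordered-exponential machinery is needed.
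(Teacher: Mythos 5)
Your proposal follows the paper's argument essentially step for step: note that the padded $\mQ_{uniform}^{tok}$ and $\mQ_{absorb}^{tok}$ commute, factor the transition matrix as a product $e^{p_m(t)\sigma_t\,\mQ_{absorb}^{tok}}\,e^{(1-p_m(t))\sigma_t\,\mQ_{uniform}^{tok}}$, apply the $(\mQ^{tok})^2=-\mQ^{tok}$ trick to each factor, and multiply out as in Proposition~\ref{proposition2}. The one place you are actually more careful than the paper is in explicitly integrating the Kolmogorov ODE (invoking $\sigma(0)=0$ and mutual commutativity of the whole family $\{\mQ_{eroulette}^{tok}(s)\}_s$) to justify passing from the derivative-form generator in Equation~\eqref{eroulette_def} to the exponential of $(1-p_m(t))\sigma(t)\,\mQ_{uniform}^{tok}+p_m(t)\sigma(t)\,\mQ_{absorb}^{tok}$, a step the paper glosses over notationally.
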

One can see that this is almost identical to Proposition \ref{proposition2}, with the only difference being that $p_m$ varies with $t$.
Indeed, if we fix $p_m$ so that it is constant with respect to time, Equation (\ref{eroulette_def}) becomes 
\begin{equation}
    \mQ_{eroulette}^{tok}(t)= p_m \mQ_{absorb}^{tok}(t) + (1-p_m)\mQ_{uniform}^{tok}(t)=\mQ_{roulette}^{tok}(t),
\end{equation}
meaning that $\mQ_{eroulette}^{tok}(t)$, coincides with $\mQ_{roulette}^{tok}(t)$ roulette in this case. This highlights that the roulette diffusion is an interpolation between the roulette and uniform diffusion.

One possible choice of $p_m(t)$ is $t^{\frac{1}{a t}}$ for a positive constant $a$.

\section{Experimental Details}

\subsection{Algorithms}\label{algorithms_appendix}
\begin{algorithm}[H]
\caption{Cross Entropy Training Algorithm}
\label{alg:entropy_training}
\begin{algorithmic}
\Require Network $f_\theta$, (total) noise schedule $\sigma_t$, data distribution $p_{\text{data}}$, token transition matrix $\mQ^{tok}$, and time $t\in [0, 1]$.
\State \textbf{Sample} $\vx_0 \sim p_0$, $t \sim \mathcal{U}([0, 1])$.
\State \textbf{Construct} $\vx_t$ from $\vx_0$. In particular, $\vx_t^i \sim p_{t|0}(\cdot|\vx_0^i) = \exp(\sigma_t \mQ^{tok})_{[:, \vx_0^i]}$.
\State \textbf{Compute} $L_{ll}=-\sum_{i=1}^L \log f_\theta^i(\vx_t, t)[\vx_0^i].$
\State \textbf{Backpropagate} $\nabla_\theta L_{ll}$. 
\State \textbf{Run} optimizer.
\end{algorithmic}
\end{algorithm}

\subsection{Network Architecture and Hyper-parameters}\label{nahp}
This core model is grounded in the diffusion transformer architecture introduced by \cite{Peebles_2023_ICCV}, which integrates time conditioning into the conventional encoder-only transformer framework as established by \citet{vaswani2017attention, devlin-etal-2019-bert}. However, it includes minor adjustments, such as the use of rotary positional encoding \citep{su2024roformer}. The model contains approximately 5\% more parameters than a standard transformer (utilized in the case of GPT-2), attributed to the incorporation of time conditioning. Additionally, the same tokenizers and data splits as in prior work are employed to avoid introducing artifacts. 

The network is configured with 12 transformer blocks, each featuring 12 attention heads and a hidden size of 768, aligning with the "small" variant of GPT-2. It includes conditioning dimensions set at 128 to facilitate the diffusion process by encoding time-dependent features. Notably, the architecture excludes masking, typical of generative models that generate all tokens simultaneously rather than sequentially. It uses standard scaled dot-product attention mechanisms and incorporates a dropout rate of 0.1 to mitigate overfitting.

In terms of hyperparameters, the model was trained on a single H100 when the sequence length is set at 128, while in the case of sequence lengths of 1024 the model is trained using 8$\times$H100 with a vocabulary size of 50,257 tokens. Training involves a batch size of 512. The training regime is designed for a total of 400,000 iterations.

Training utilizes the OpenWebText dataset, while evaluation is conducted on WikiText-103, with data managed locally to speed up access times. The noise schedule for the diffusion process is log-linear (uniform, absorb), and roulette log-linear (roulette) controlling the variance of noise added incrementally. In both cases we set $\epsilon=0.001$ as in \citep{lou2023discrete}. Sampling for evaluation during training employs an Euler predictor over 128 (and 1024 when $L=1024$) steps, with noise removal enabled.

For optimization, the model uses the AdamW optimizer with a learning rate of 0.0003, beta parameters of 0.9 and 0.999, and epsilon set to 1e-8. It features no weight decay, focusing on adapting learning without additional regularization. The optimizer includes a warm-up phase of 2,500 steps to stabilize learning dynamics, and employs gradient clipping at a threshold of 1 to prevent gradients from exploding during training. The log-linear noise schedule was used in the absorb and uniform case, while the roulette log-linear one was used in the roulette case.

\subsection{Efficient Implementation in Practice}

\subsubsection{Efficient Estimation of the Score in Practice}\label{efficient_score_matching}

For our choice of sparse matrices $\mQ_t$ which can only modify one position at each step, the loss function
\begin{equation}
    \mathbb{E}_{t\sim U(0,1)} \mathbb{E}_{\vx_0 \sim p_{0}(\vx_0)}\mathbb{E}_{\vx_t \sim p_{t|0}(\cdot|\vx_0)} \sum_{\vy \neq \vx_t} \mQ_t(\vx_t, \vy) \ell\left(\frac{p_{t|0}(\vy|\vx_0)}{p_{t|0}(\vx_t|\vx_0)}, s_\theta(\vx_t, t)_\vy\right),
\end{equation}
becomes 
\begin{equation}
    \mathbb{E}_{t\sim U(0,1)} \mathbb{E}_{\vx_0 \sim p_{0}(\vx_0)}\mathbb{E}_{\vx_t \sim p_{t|0}(\cdot|\vx_0)} \sum_{i=1}^L \sum_{\vy^i\neq \vx^i_t}\mQ_t^{tok}(\vx^i_t,\vy^i)\ell\left(\frac{p_{t|0}(\vy^i|\vx^i_0)}{p_{t|0}(\vx^i_t|\vx^i_0)}, s^i_\theta(\vx_t, t)[\vy^i]\right).
\end{equation}
Now we focus on calculating the loss at position $i$, as the total loss across the sequence, will be simply the sum of such individual losses at each position. 
We notice that the term $\sum_{\vy^i\neq \vx^i_t}\mQ_t^{tok}(\vx^i_t,\vy^i) s^i_\theta(\vx_t, t)[\vy^i]   $ in the expression above is trivial to calculate. We simply add the exponentiated outputs of the neural network across the last dimension weighted by $\mQ_t^{tok}(\vx^i_t,\vy^i)$, and then substract $s_\theta(\vx_t, t)_{\vx_t^i}^i \cdot \mQ_t^{tok}(\vx^i_t,\vx^i_t)$. Since in all cases the terms of $\mQ_t^{tok}$ are either mostly 0 or mostly the same, this can be done efficiently. The expression
\begin{equation}
    \sum_{\vy^i\neq \vx^i_t}\mQ_t^{tok}(\vx^i_t,\vy^i) \frac{p_{t|0}(\vy^i|\vx^i_0)}{p_{t|0}(\vx^i_t|\vx^i_0)}\log s^i_\theta(\vx_t, t)[\vy^i]
\end{equation}
is more challenging to be calculated efficiently. We follow the approach of \citep{lou2023discrete}, when using SEDD for training. To reiterate, for each position $i$, if we use SEDD for training, we need to calculate the sum of the product $\frac{p_{t|0}(\vy^i|\vx^i_0)}{p_{t|0}(\vx^i_t|\vx^i_0)}\log s^i_\theta(\vx_t, t)[\vy^i]$ over $n$ ratios, where $n=V$ if $\mQ^{tok}=\mQ^{tok}_{uniform}$ and $n=V+1$ otherwise. Luckily, for the choices of $\mQ^{tok}$ in this paper (uniform, absorb and roulette), such ratios have relatively simple form and they are almost all the same. For example, in the case of the \textbf{absorb diffusion}, there are two cases, either $\vx_t^i$ has not moved ($\vx^i_t=\vx_0^i$) or it has been masked ($\vx^i_t=n\neq \vx^i_0$), which can be seen in Figure \ref{absorb_ratios_fig}.

\begin{figure}[H]
    \centering
    \includegraphics[width=1\textwidth]{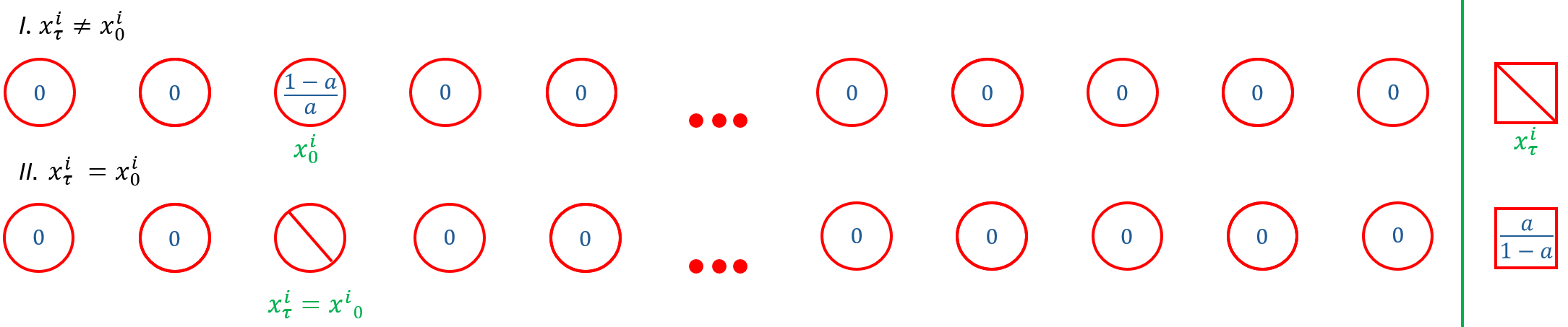}
    \caption{The conditional ratios at position $i$ over the vocabulary in the two cases. The square represents the absorb state. The value of $a$ is   $1-e^{-\sigma_t}$.}%MISTAKE
    \label{absorb_ratios_fig}
\end{figure}

To simplify notation, when iterating through different tokens in the vocabulary we write $\vx^i_0=k$, $\vy^i=j$ and $\log s^i_\theta(\vx_t, t)[\vy^i]=s_j$. 
In the first case, when $\vx^i_t=n\neq \vx^i_0$ we get 
\begin{equation}
    \sum_{\vy^i\neq \vx^i_t}\mQ_t^{tok}(\vx^i_t,\vy^i) \frac{p_{t|0}(\vy^i|\vx^i_0)}{p_{t|0}(\vx^i_t|\vx^i_0)} \log s^i_\theta(\vx_t, t)[\vy^i] = \frac{1-a}{a}s_k.
\end{equation}
On the other hand, in the second case ($\vx_t^i= \vx_0^i$) one has
\begin{equation}
    \sum_{\vy^i\neq \vx^i_t}\mQ_t^{tok}(\vx^i_t,\vy^i) \frac{p_{t|0}(\vy^i|\vx^i_0)}{p_{t|0}(\vx^i_t|\vx^i_0)} \log  s^i_\theta(\vx_t, t)[\vy^i] =
\end{equation}
\begin{equation}
    \sum_{j\not\in \{\vx^i_t, n\}} \mQ_t^{tok}(\vx^i_t, j)0s_j+\mQ_t^{tok}(\vx^i_t, n)\frac{a}{1-a}s_n=0,
\end{equation}
as $\mQ_t^{tok}(\vx^i_t, n)=0$. Thus when $\vx_t^i=\vx_0^i\neq n$, we can simply not calculate the loss. 
\newline
For the \textbf{uniform diffusion}, there are also two cases, namely $\vx^i_t=\vx_0^i$ and $\vx^i_t\neq \vx^i_0$ as illustrated in Figure \ref{uniform_ratios_fig}.

\begin{figure}[H]
    \centering
    \includegraphics[width=1\textwidth]{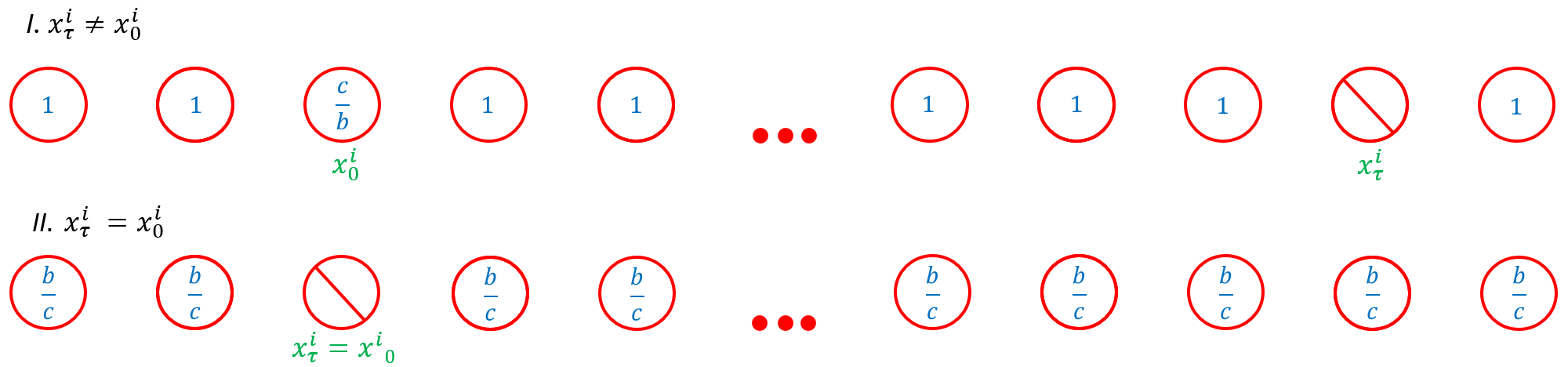}
    \caption{The conditional ratios at position $i$ over the vocabulary in the two cases. We define $b=\frac{1}{n=V}(1-e^{-\sigma_t})$ and $c=1-(n-1)b$.}
    \label{uniform_ratios_fig}
\end{figure}

 If $\vx^i_t\neq \vx^i_0$, then 
 \begin{equation}
     \sum_{\vy^i\neq \vx^i_t}\mQ_t^{tok}(\vx^i_t,\vy^i) \frac{p_{t|0}(\vy^i|\vx^i_0)}{p_{t|0}(\vx^i_t|\vx^i_0)} s^i_\theta(\vx_t, t)[\vy^i] = \sum_{j\not\in \{\vx^i_t,k\}} \mQ_t^{tok}(\vx^i_t, j)s_j+\mQ_t^{tok}(\vx^i_t, k)\frac{c}{b}s_{k}
 \end{equation}
 \begin{equation}
  =\sum_{j\neq \vx^i_t} \mQ_t^{tok}(\vx^i_t, j)s_j+\mQ_t^{tok}(\vx^i_t, k)\left(\frac{c}{b}-1\right)s_k=\sum_{j\neq \vx^i_t} \frac{1}{n}s_j+\frac{1}{n}\left(\frac{c}{b}-1\right)s_k.
 \end{equation}
 Otherwise for $\vx^i_t= \vx^i_0$, we have 
 \begin{equation}
     \sum_{\vy^i\neq \vx^i_t}\mQ_t^{tok}(\vx^i_t,\vy^i) \frac{p_{t|0}(\vy^i|\vx^i_0)}{p_{t|0}(\vx^i_t|\vx^i_0)} s^i_\theta(\vx_t, t)[\vy^i] =\frac{b}{c} \sum_{j\not\in \{\vx^i_t\}} \mQ_t^{tok}(\vx^i_t, j)s_j=\frac{1}{n}\frac{b}{c}\sum_{j\neq \vx^i_t} s_j.
 \end{equation}
 Finally writing $S^i=\sum_j s_j$, we get $ \frac{1}{n}S^i +\frac{1}{n}\left(\frac{c}{b}-1\right)s_k-\frac{1}{n}s_{\vx^i_t}$ in the first case, and $\frac{1}{n}\frac{b}{c}S^i-\frac{1}{n}\frac{b}{c}s_{\vx^i_t}$ in the second one.
 \newline
For the \textbf{roulette diffusion} we proceed similarly. Figure \ref{roulette_ratios_fig} shows the ratios in all three possible cases.
\begin{figure}[H]
    \centering
    \includegraphics[width=1\textwidth]{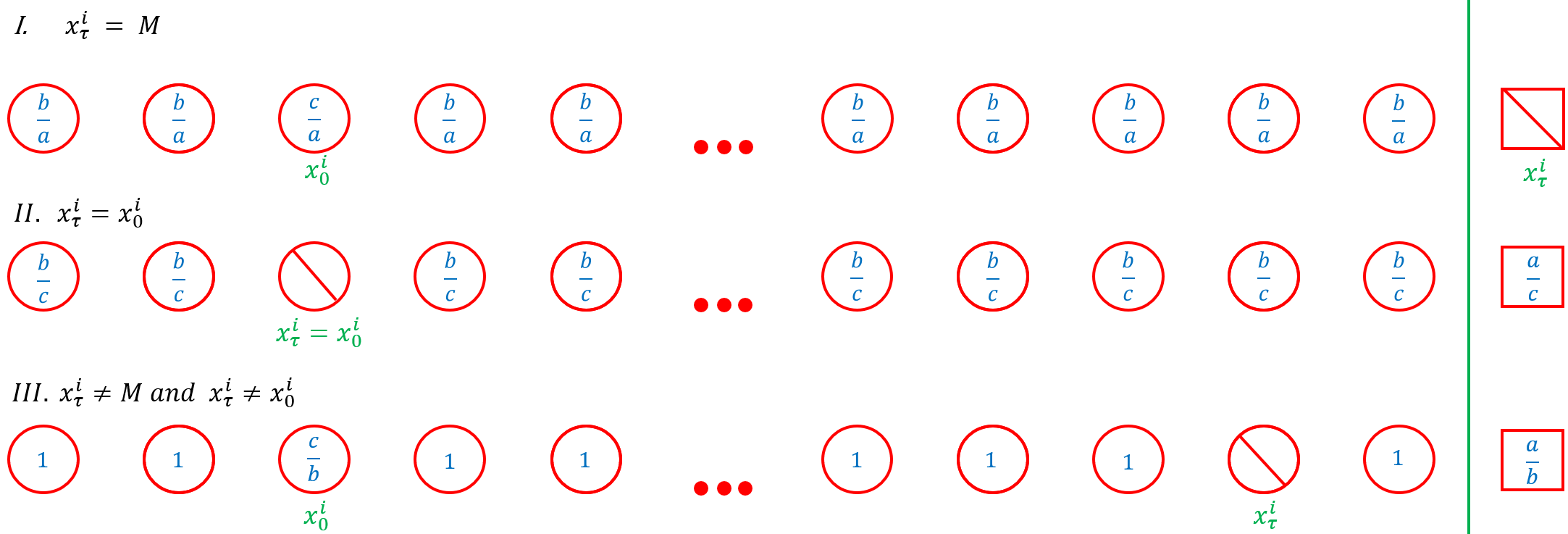}
    \caption{The conditional ratios at position $i$ over the vocabulary in the three cases. Since $\vx^i_0$ is a token from the data it cannot be masked. The square represents the absorb state. We define $a=1-e^{-p_m\sigma_t}$, $b =e^{-p_m\sigma_t} \cdot \frac{1-e^{-\sigma_t(1-p_m)}}{n-1}$ and $c=e^{-p_m\sigma_t}\left[\frac{1}{n-1}+\left(1-\frac{1}{n-1}\right) e^{-\sigma_t(1-p_m)} \right]$.} 
    \label{roulette_ratios_fig}
\end{figure}
In the first case, the sum at position $i$ is $p_m\left[\frac{b}{a}\left(S^i -s_k-s_{\vx^i_t}\right)+\frac{c}{a}s_k\right]$. In the second it is $\frac{1-p_m}{V}\frac{b}{c} \left(S^i-s_k-s_n \right),$ while in the third case we have $\frac{1-p_m}{V}\left(S^i -s_k -s_n-s_{\vx^i_t}\right)+\frac{1-p_m}{V}s_k \frac{c}{b}$.
\newline
If one chooses to incorporate the constant term $K(a)$, in the loss, then a similar approach is followed. However, considering Theorem \ref{theorem4} for evaluation, and the fact that a constant term does not affect the gradient during training, implementing this term is not necessary.
\subsubsection{SEDD Scaling}\label{sedd_scaling_appendix}

During training, some scores $\frac{p_{t|0}(\vy^i|\vx^i_0)}{p_{t|0}(\vx^i_t|\vx^i_0)}$, on expectation with respect to $\vx^i_0$, can vary significantly for different $\vy^i$. This makes the task of the network more complex and reduces performance. In order to alleviate this issue, in the case of the \textbf{absorb diffusion}, \citep{lou2023discrete} add $\log \sum_{\vx^i_0}\frac{1}{n}\frac{p_{t|0}(\vy^i|\vx^i_0)}{p_{t|0}(\vx^i_t|\vx^i_0)}$ to the output of the network $\log s^i_\theta(\vx_t, t)[\vy^i]$. In this case, since we only need to calculate scores when $\vx^i_t=n$, this sum equals 
\begin{equation}
    \sum_{\vx^i_0}\frac{1}{n-1}\frac{p_{t|0}(\vy^i|\vx^i_0)}{p_{t|0}(\vx^i_t|\vx^i_0)}= \sum_{\vx^i_0}\frac{1}{n-1}\frac{p_{t|0}(\vy^i\neq n|\vx^i_0)}{p_{t|0}(n|\vx^i_0)}=0+\frac{1}{n-1}\frac{p_{t|0}(\vy^i\neq n|\vy^i\neq n)}{p_{t|0}(n|\vy^i\neq n)}
\end{equation}
\begin{equation}
    =\frac{1}{n-1}\frac{e^{-\sigma_t}}{1-e^{-\sigma_t}}=\frac{1}{n-1}\frac{1}{e^{\sigma_t}-1}.
\end{equation}
We notice that in effect, this changes the prediction of the score, from  $\log s^i_\theta(\vx_t, t)[\vy^i]$ to $\log s^i_\theta(\vx_t, t)[\vy^i]-\log (n-1) -\log{(e^{\sigma_t}-1)}$.

We extend this strategy for the uniform and the roulette diffusion. In the case of the \textbf{uniform diffusion}, we get  
\begin{equation}
    \sum_{\vx^i_0}\frac{1}{V}\frac{p_{t|0}(\vy^i|\vx^i_0)}{p_{t|0}(\vx^i_t|\vx^i_0)} = \sum_{\vx^i_0\not\in\{\vy^i, \vx^i_t\}}\frac{1}{V}\frac{p_{t|0}(\vy^i|\vx^i_0)}{p_{t|0}(\vx^i_t|\vx^i_0)}+\frac{1}{V}\frac{p_{t|0}(\vy^i|\vy^i)}{p_{t|0}(\vx^i_t|\vy^i)}+\frac{1}{V}\frac{p_{t|0}(\vy^i|\vx^i_t)}{p_{t|0}(\vx^i_t|\vx^i_t)}.
\end{equation}

Now,
\begin{equation}
    \frac{p_{t|0}(\vy^i|\vy^i)}{p_{t|0}(\vx^i_t|\vy^i)}=\frac{1-\frac{V-1}{V}(1-e^{-\sigma_t})}{\frac{1}{V}(1-e^{-\sigma_t})}=1+\frac{V}{e^{\sigma_t}-1},
\end{equation}
and 
\begin{equation}
    \frac{1}{V}\frac{p_{t|0}(\vy^i|\vx^i_t)}{p_{t|0}(\vx^i_t|\vx^i_t)}=\frac{\frac{1}{V}(1-e^{-\sigma_t})}{1-\frac{V-1}{V}(1-e^{-\sigma_t})}=1-\frac{V}{e^{\sigma_t}-1+V}. 
\end{equation}
Since $\frac{p_{t|0}(\vy^i|\vx^i_0\neq \vy^i)}{p_{t|0}(\vx^i_t|\vx^i_0\neq \vx^i_t )}=1$, we conclude
\begin{equation*}
    \sum_{\vx^i_0}\frac{1}{V}\frac{p_{t|0}(\vy^i|\vx^i_0)}{p_{t|0}(\vx^i_t|\vx^i_0)}  = \frac{V-2}{V}+\frac{1}{V}+\frac{1}{e^{\sigma_t}-1}+\frac{1}{V}-\frac{1}{e^{\sigma_t}-1+V}=1+\frac{1}{e^{\sigma_t}-1}-\frac{1}{e^{\sigma_t}-1+V}.
\end{equation*}

We then modify the output of the network $\log s^i_\theta(\vx_t, t)[\vy^i]$ accordingly 
\begin{equation}
    \log s^i_\theta(\vx_t, t)[\vy^i]+\log{(\sum_{\vx^i_0}\frac{1}{V}\frac{p_{t|0}(\vy^i|\vx^i_0)}{p_{t|0}(\vx^i_t|\vx^i_0)})},\text{ where } n:=V.
\end{equation}
\newline
Similarly, we can calculate this scaling factor for the \textbf{roulette diffusion}. There are two cases, either $\vx^i_t=n$ or $\vx^i_t\neq n$. In the first case, with little modifications from before we can show that 
\begin{equation}
    \sum_{\vx^i_0}\frac{1}{n-1}\frac{p_{t|0}(\vy^i\neq n|\vx^i_0)}{p_{t|0}(n|\vx^i_0)}=\frac{1}{n-1}\frac{1-(1-e^{-\sigma_t p_m})}{1-e^{-\sigma_t p_m}}=\frac{1}{n-1}\frac{1}{e^{\sigma_t p_m}-1}.
\end{equation}
In the second case,
\begin{equation}
    \sum_{\vx^i_0}\frac{1}{n-1}\frac{p_{t|0}(\vy^i\neq n|\vx^i_0)}{p_{t|0}(\vx^i_t\neq n|\vx^i_0)}=1+\frac{1}{e^{(1-p_m)\sigma_t}-1}-\frac{1}{e^{(1-p_m)\sigma_t}-1+n-1}.
\end{equation}
We then modify the output of the network $\log s^i_\theta(\vx_t, t)[\vy^i]$ accordingly 
\begin{equation}
    \log s^i_\theta(\vx_t, t)[\vy^i]+\log{(\sum_{\vx^i_0}\frac{1}{n}\frac{p_{t|0}(\vy^i|\vx^i_0)}{p_{t|0}(\vx^i_t|\vx^i_0)})}.
\end{equation}
We take a moment to recall that if $\vy$ and $\vx_t$ differ only at one position $i$ then 
\begin{equation}
    \frac{p_t(\vy)}{p_t(\vx_t)}=\sum_{\vx^i_0}p^i(\vx_0^i|\vx_t)\frac{p_{t|0}(\vy^i|\vx^i_0)}{p_{t|0}(\vx^i_t|\vx^i_0)}.
\end{equation}
Thus, above we are scaling the output by a naive estimation of this expectation where $p^i(\vx_0^i|\vx_t)$ is assumed to be $\frac{1}{n-1}$. In fact, the study of the scaling approach and the realization that the entity $\frac{1}{n-1}$ can be more properly estimated, is what originally motivated the derivation and usage of CEDD in this paper.

\subsubsection{Score Estimation Through CEDD in Practice}\label{prob_to_score_appendix}

In this subsection, similarly to Section \ref{efficient_score_matching}, we show how to convert the learned probabilities into learned ratios in practice. Suppose that the perturbed token at position $i$ at time $t$ is $\vx^i_t$. If $\vy$ is a sequence that differs from $\vx_t$ only at position $i$ then from before one can write:
\begin{equation}
    \frac{p_t(\vy)}{p_t(\vx_t)}=\sum_{\vx^i_0}p^i(\vx_0^i|\vx_t)\frac{p_{t|0}(\vy^i|\vx^i_0)}{p_{t|0}(\vx^i_t|\vx^i_0)}.
\end{equation}
We point out that since $\vx_t$ and $\vy$ only differ at position $i$, then $\vx^i_t\neq \vy^i$. Thus, 
\begin{equation}\label{cedd_efficient}
    \frac{p_t(\vy)}{p_t(\vx_t)}=\sum_{\vx^i_0\not\in\{\vx^i_t, \vy^i\}}p^i(\vx_0^i|\vx_t)\frac{p_{t|0}(\vy^i|\vx^i_0)}{p_{t|0}(\vx^i_t|\vx^i_0)}+p^i(\vy^i|\vx_t)\frac{p_{t|0}(\vy^i|\vy^i)}{p_{t|0}(\vx^i_t|\vy^i)}+p^i(\vx_t^i|\vx_t)\frac{p_{t|0}(\vy^i|\vx^i_t)}{p_{t|0}(\vx^i_t|\vx^i_t)} .
\end{equation}

In the case of the \textbf{absorb diffusion}, from Section \ref{efficient_score_matching}, we can see that we do not need the ratio when $\vy^i=n$, and furthermore, we only need the ratios when $\vx_t^i=n$. Since also the data is not masked, then $\vx^i_0\neq n$. Therefore, the first term (the sum) in Equation (\ref{cedd_efficient}) is $0$, since $\frac{p_{t|0}(\vy^i\neq n|\vx^i_0\not\in\{n, \vy^i\})}{p_{t|0}(n|\vx^i_0\not\in\{n, \vy^i\})}=0$. Hence 
\begin{equation}
    \frac{p_t(\vy)}{p_t(\vx_t)}=p^i(\vy^i|\vx_t)\frac{p_{t|0}(\vy^i\neq n|\vy^i\neq n)}{p_{t|0}(n|\vy^i\neq n)}+p^i(\vx_t^i|\vx_t)\frac{p_{t|0}(\vy^i\neq n|n)}{p_{t|0}(n|n)},
\end{equation}
where $p_{t|0}(\vy^i\neq n|n)=0$ since a masked token cannot be unmasked in the forward process. Thus our approximation of the score is simply 
\begin{equation}
    s^i_\theta(\vx_t, t)[\vy^i]=f^i_\theta(\vx_t, t)[\vy^i]\frac{e^{-\sigma_t}}{1-e^{-\sigma_t}}.
\end{equation}
To conclude $s_\theta(\vx_t, t)=f_\theta(\vx_t, t)\frac{1}{e^{\sigma_t}-1}$.
\newline
\newline
In the case of the \textbf{uniform diffusion}, we write $b=\frac{1}{n=V}(1-e^{-\sigma_t})$ and $c=1-(n-1)b$ as in Section \ref{efficient_score_matching}, and Equation (\ref{cedd_efficient}) becomes
\begin{equation}
    \frac{p_t(\vy)}{p_t(\vx_t)}=\sum_{\vx^i_0\not\in\{\vx^i_t, \vy^i\}}p^i(\vx_0^i|\vx_t)+p^i(\vy^i|\vx_t)\frac{c}{b}+p^i(\vx_t^i|\vx_t)\frac{b}{c} .
\end{equation}
\begin{equation}
    \frac{p_t(\vy)}{p_t(\vx_t)}=\sum_{\vx^i_0}p^i(\vx_0^i|\vx_t)+p^i(\vy^i|\vx_t)\left(\frac{c}{b}-1\right)+p^i(\vx_t^i|\vx_t)\left(\frac{b}{c}-1\right) .
\end{equation}Therefore, our approximation of the score is simply 
\begin{equation}
    s^i_\theta(\vx_t, t)[\vy^i]=1+f^i_\theta(\vx_t, t)[\vy^i]\left(\frac{c}{b}-1\right)+f^i_\theta(\vx_t, t)[\vx_t^i]\left(\frac{b}{c}-1\right),
\end{equation} or 
\begin{equation}
    s^i_\theta(\vx_t, t)=\1+f^i_\theta(\vx_t, t)\left(\frac{c}{b}-1\right)+f^i_\theta(\vx_t, t)[\vx_t^i]\left(\frac{b}{c}-1\right)\1,
\end{equation} in vector form.

Finally, in the case of \textbf{roulette diffusion}, we can discern two possibilities, namely $\vx^i_t=n$ and $\vx^i_t\neq n$.
In the first one, writing as in Section \ref{efficient_score_matching},  $a=1-e^{-p_m\sigma_t}$, $b =e^{-p_m\sigma_t} \cdot \frac{1-e^{-\sigma_t(1-p_m)}}{n-1}$ and $c=e^{-p_m\sigma_t}\left[\frac{1}{n-1}+\left(1-\frac{1}{n-1}\right) e^{-\sigma_t(1-p_m)} \right]$, Equation \ref{cedd_efficient} becomes
\begin{equation}
    \frac{p_t(\vy)}{p_t(\vx_t)}=\sum_{\vx^i_0\not\in\{\vx^i_t, \vy^i\}}p^i(\vx_0^i|\vx_t)\frac{b}{a}+p^i(\vy^i|\vx_t)\frac{c}{b}=\frac{b}{a}+p^i(\vy^i|\vx_t)\left(\frac{c}{b}-\frac{b}{a}\right).
\end{equation}
Hence, if the perturbed token $\vx_t^i$ at position $i$ is masked, the predicted ratios over the vocabulary are:
\begin{equation}
    s^i_\theta(\vx_t, t)=\1\frac{b}{a}+f^i_\theta(\vx_t, t)\left(\frac{c}{b}-\frac{b}{a}\right).
\end{equation}
Very similarly, if the perturbed token $\vx_t^i$ at position $i$ is not masked, the predicted ratios over the vocabulary are:
\begin{equation}
    s^i_\theta(\vx_t, t)=\1+f^i_\theta(\vx_t, t)\left(\frac{c}{b}-1\right)+f^i_\theta(\vx_t, t)[\vx_t^i]\left(\frac{b}{c}-1\right)\1,
\end{equation}
This shows that we can calculate the ratios by scaling and translating the learned probabilities through CEDD.

\textbf{Modification of the reparametrization in (\ref{reparam_c_to_s}): Re-scaling conditional ratios when $t\rightarrow 0$.}

In the case of the uniform diffusion dynamics, when $t\rightarrow 0$ then $\frac{b}{c}\rightarrow 0$ and $\frac{c}{b}\rightarrow \infty$. This does not cause any issues when sampling, however, it does negatively impact the perplexity bound. We explain informally the issue below. We fix position $i$ and assume we are at $\vx_t^i$ and that originally we were at $\vx_0^i$. When the time $t$ is close to $0$, the model $f^i_\theta(\vx_t, t)[\vy^i]$ is typically very confident and can predict that some token $\vy^i$ is $\vx_0^i$, that is, $f^i_\theta(\vx_t, t)[\vy^i]\approx 1-\epsilon$ and $f^i$ is $\approx \frac{\epsilon}{V-1}$ everywhere else. Furthermore, most of the time $\vx_t^i = \vx_0^i$. This means the score $s^i_\theta(\vx_t, t)[\vy^i]\approx \frac{c}{b}$ and is $\approx \frac{b}{c}$ everywhere else. If the prediction is correct then the loss becomes $\approx \frac{1}{V-1}(V-2)[-\frac{b}{c}\log(\frac{b}{c})+\frac{b}{c}+\frac{b}{c}(\log(\frac{b}{c})-1)]+[-\frac{c}{b}\log(\frac{c}{b})+\frac{c}{b}+\frac{c}{b}(\log(\frac{c}{b})-1)]\approx 0$. If it misses however, then  we have $\approx \frac{1}{V-1}(V-3)[-\frac{b}{c}\log(\frac{b}{c})+\frac{b}{c}+\frac{b}{c}(\log(\frac{b}{c})-1)]+[-\frac{b}{c}\log(\frac{c}{b})+\frac{c}{b}+\frac{b}{c}(\log(\frac{b}{c})-1)]+[-\frac{c}{b}\log(\frac{b}{c})+\frac{b}{c}+\frac{c}{b}(\log(\frac{c}{b})-1)]\approx 2\frac{c}{b}\log \frac{c}{b}$. Considering the large magnitude of $\frac{c}{b}$ when $t\approx 0$, this loss is extremely punitive. SEDD does not suffer from this issue, as the conditional ratios $\frac{c}{b}$ are implicitly learned by the network, which due to its limited flexibility likely acts as a regularizer, not allowing the values of the ratios to rise steeply as $t\rightarrow 0$. Motivated by this, in the uniform case, we rescale $\sigma_t<0.0015$, by setting $\sigma_t=0.0015$. This significantly reduces the magnitude of $\frac{c}{b}$ as $t\rightarrow 0$.  We highlight that only the $\sigma_t$ that is used to calculate $\frac{c}{b}$ and $\frac{b}{c}$ are scaled while the $\sigma_t$ that is fed to the model is not touched. Naturally, for the sake of rigor, this is also the model we employ to generate samples. Finally, we note that this problem also appears in the case of the roulette diffusion dynamics, as when time is close to $0$ most tokens are unmasked. The same strategy is applied, as before, only to ratios $\frac{c}{b}$ and $\frac{b}{c}$, by rescaling $\sigma_t$ when $\sigma_t<0.5$ as follows: $\sigma^{scaled}_t = \log(1.1\sigma_t+1.1)$.

\subsubsection{Analytic Sampling in Practice}

In \citep{lou2023discrete} an alternative sampling scheme (Equation (18)) is provided. This method is called the analytic method and it performs better than Euler sampling, in particular when the number of sampling steps is small:

\begin{equation}
    p_{t-\epsilon|t}(\vx_{t-\epsilon}^i|\vx^i_t) = \left(e^{\sigma_t^{\Delta t} \mQ^{tok}}(\vx_t^i, \vx_{t-\epsilon}^i)\right)\sum_{\vy^i=1}^n \left(e^{-\sigma_t^{\Delta t} \mQ^{tok}}(\vx_{t-\epsilon}^i, \vy^i)\right)s^i_\theta(\vx_t, t)[\vy^i],
\end{equation}
% or in vector form 
% \begin{equation}
%     p_{t-\epsilon|t}(\cdot|\vx^i_t) = \left(e^{\sigma_t^{\Delta t} \mQ^{tok}}(\vx_t^i, \cdot)\right)\sum_{\vy^i=1}^n \left(e^{-\sigma_t^{\Delta t} \mQ^{tok}}(\vx_{t-\epsilon}^i, \vy^i)\right)s^i_\theta(\vx_t, t)[\vy^i],
% \end{equation}
where $\sigma_t^{\Delta t}=\sigma_t-\sigma_{t-\epsilon}$.

Since we have an analytic expression of the matrix exponential $e^{\sigma_t^{\Delta t} \mQ^{tok}}$, we can easily derive the expression of $e^{-\sigma_t^{\Delta t} \mQ^{tok}}$, by simply substituting $\sigma_t^{\Delta t}$ with $-\sigma_t^{\Delta t}$ in each entry. One can efficiently calculate the sum above by following the strategy of \citep{lou2023discrete} as below. 
\newline
\newline
In the case of the \textbf{absorb diffusion}, $e^{-\sigma_t^{\Delta t} \mQ^{tok}}(j\neq n, j\neq n)=1-\bar{a}$,  $e^{-\sigma_t^{\Delta t} \mQ^{tok}}( n, n)=1$, $e^{-\sigma_t^{\Delta t} \mQ^{tok}}( n, j\neq n)=\bar{a}$, with all other entries being $0$, where $\bar{a}=1-e^{\sigma_t^{\Delta t}}$. As such if $\vx_{t-\epsilon}\neq n$
\begin{equation}
    \sum_{\vy^i=1}^n \left(e^{-\sigma_t^{\Delta t} \mQ^{tok}}(\vx_{t-\epsilon}^i, \vy^i)\right)s^i_\theta(\vx_t, t)[\vy^i]=(1-\bar{a})s^i_\theta(\vx_t, t)[\vx_{t-\epsilon}^i],
\end{equation}
and if $\vx_{t-\epsilon}= n$
\begin{equation}
    \sum_{\vy^i=1}^n \left(e^{-\sigma_t^{\Delta t} \mQ^{tok}}(\vx_{t-\epsilon}^i, \vy^i)\right)s^i_\theta(\vx_t, t)[\vy^i]=\bar{a}\sum_{\vy^i=1}^{n-1}s^i_\theta(\vx_t, t)[\vy^i]+s^i_\theta(\vx_t, t)[n]=
\end{equation}
\begin{equation}
    =\bar{a}\sum_{\vy^i=1}^{n}s^i_\theta(\vx_t, t)[\vy^i]+(1-\bar{a})s^i_\theta(\vx_t, t)[n]=\bar{a} S^i_\theta(\vx_t, t)+(1-\bar{a})s^i_\theta(\vx_t, t)[n],
\end{equation} where $S^i_\theta(\vx_t, t)=\sum_{\vy^i=1}^{n}s^i_\theta(\vx_t, t)[\vy^i]$.
\newline
In the case of the \textbf{uniform diffusion} $e^{-\sigma_t^{\Delta t} \mQ^{tok}}(j, j)=\bar{c}$ while the rest of the entries are $\bar{b}$, where  $\bar{b}=\frac{1}{n=V}(1-e^{\sigma_t^{\Delta t}})$ and $\bar{c}=1-(n-1)\bar{b}$. Therefore 
\begin{equation}
    \sum_{\vy^i=1}^n \left(e^{-\sigma_t^{\Delta t} \mQ^{tok}}(\vx_{t-\epsilon}^i, \vy^i)\right)s^i_\theta(\vx_t, t)[\vy^i]=(\bar{c}-\bar{b})s^i_\theta(\vx_t, t)[\vx_{t-\epsilon}^i]+\bar{b} S^i_\theta(\vx_t, t)=
\end{equation}
\begin{equation}
    =\frac{s^i_\theta(\vx_t, t)[\vx_{t-\epsilon}^i]}{ e^{-\sigma_t^{\Delta t}}}+\frac{e^{-\sigma_t^{\Delta t}}-1}{n e^{-\sigma_t^{\Delta t}}} S^i_\theta(\vx_t, t).
\end{equation}
\newline
Finally, in the case of the \textbf{roulette diffusion}, we have $e^{-\sigma_t^{\Delta t} \mQ^{tok}}(j\neq n, j\neq n)=\bar{c}$, $e^{-\sigma_t^{\Delta t} \mQ^{tok}}(j = n, j\neq n)=\bar{a}$, $e^{-\sigma_t^{\Delta t} \mQ^{tok}}(j = n, j = n)=1$, $e^{-\sigma_t^{\Delta t} \mQ^{tok}}(j \neq n, j = n)=0$ and the rest of the entries are $\bar{b}$, where $\bar{a}=1-e^{p_m\sigma_t^{\Delta t} }$, $\bar{b} =e^{p_m\sigma_t^{\Delta t} } \cdot \frac{1-e^{\sigma_t^{\Delta t} }}{n-1}$ and $\bar{c}=e^{p_m\sigma_t^{\Delta t} }\left[\frac{1}{n-1}+\left(1-\frac{1}{n-1}\right) e^{\sigma_t^{\Delta t} (1-p_m)} \right]$. As in the absorb case, there are two cases, the first one being $\vx_{t-\epsilon}\neq n$:
\begin{equation}
    \sum_{\vy^i=1}^n \left(e^{-\sigma_t^{\Delta t} \mQ^{tok}}(\vx_{t-\epsilon}^i, \vy^i)\right)s^i_\theta(\vx_t, t)[\vy^i]=(\bar{c}-\bar{b})s^i_\theta(\vx_t, t)[\vx_{t-\epsilon}^i]+\bar{b}S^i_\theta(\vx_t, t)-\bar{b}s^i_\theta(\vx_t, t)[n],
\end{equation}
while for $\vx_{t-\epsilon}= n$, one derives:
\begin{equation}
    \sum_{\vy^i=1}^n \left(e^{-\sigma_t^{\Delta t} \mQ^{tok}}(\vx_{t-\epsilon}^i, \vy^i)\right)s^i_\theta(\vx_t, t)[\vy^i]=\bar{a}S^i_\theta(\vx_t, t)+(1-\bar{a})s^i_\theta(\vx_t, t)[n]=
\end{equation}
\begin{equation}
(\bar{c}-\bar{b})s^i_\theta(\vx_t, t)[n]+\bar{b}S^i_\theta(\vx_t, t)-\bar{b}s^i_\theta(\vx_t, t)[n] +(\bar{a}-\bar{b})S^i_\theta(\vx_t, t) + s^i_\theta(\vx_t, t)[n] (-\bar{a}+2\bar{b}-\bar{c}+1).
\end{equation}

It is important to remark that in the case of the absorb and roulette diffusion, $s^i_\theta(\vx_t, t)[n]$ plays a role in the quantities above only when $\vx_t^i=n$. In the previous section, it is mentioned that $s^i_\theta(\vx_t, t)[n]$ is not learned, but in this case when $\vx_t^i=n$ is needed. However, this particular case is not problematic as $s^i_\theta(\vx_t, t)[n]=s^i_\theta(\vx_t, t)[\vx_t^i]=1$, thus we set $s^i_\theta(\vx_t, t)[n]$ to $1$ manually.

\subsection{Procedure for generating the plots in Figure \ref{absorb_ratios_fig} }\label{compare_proc_appendix}

 We sample a batch of $16$ test points, and we perturb each datapoint with respect to a different $t$. Then, for example in the absorb case, we calculate the expressions
\begin{equation}\label{J1_sum}
    \sum_{\vy \neq \vx_t} \mQ_t(\vx_t, \vy) \ell\left(\frac{p_{t|0}(\vy|\vx_0)}{p_{t|0}(\vx_t|\vx_0)}, s_\theta(\vx_t, t)_\vy\right),
\end{equation}
and 
\begin{equation}
    \sum_{\vy \neq \vx_t} \mQ_t(\vx_t, \vy) \bar{\ell}\left(\frac{p_{t|0}(\vy|\vx_0)}{p_{t|0}(\vx_t|\vx_0)}, s_\theta(\vx_t, t)_\vy\right)-1+\epsilon,
\end{equation} respectively for $J_1$ and $J_2$, where $\epsilon=10^{-4}$. In each case, this returns a single point, therefore we repeat this procedure $64N$ times, where $N$ is the number of testing points. This ensures that in all cases ($J_1$ and $J_2$), we are estimating the loss in the same number of perturbed points. However, when $N$ is small, we keep testing $J_1$ and $J_2$ (extending their plots) for more points in order to provide to the reader more information about their (limiting) behaviour.

\begin{table}[H]
\caption{Results comparing the roulette transition-matrices for different $p_m$ in terms of generative perplexity.}
\label{roulette_gen_tab}
\begin{center}
\begin{tabular}{lllll} % Four columns, as indicated by "llll"
& $p_m=0.95$ & $p_m=0.65$ & $p_m=0.35$  & $p_m=0.05$\\
\hline
Roulette & 72.31 & 85.72 & 124.19 & 284.55\\
\hline
\end{tabular}
\end{center}
\end{table}

\subsection{Per sequence estimation of $J_1$}\label{Naive-J_1}

Here we explain a different method of calculating $J_1$. When applying this method one first samples a point (sequence) from the test set, and perturbs it for (say) 1024 time values $t$. Then the 1024 perturbed points are separated into (e.g.) 64 batches of size 16, and Expression \ref{J1_sum} is computed for each batch which returns 64 values. This process is then repeated for the next test point (sequence). If the test set has N points then the process produces 64N values. These values can be averaged, divided by $L$ and exponentiated.

\subsection{Additional results}\label{additional_res_appendix}

\begin{figure}[H]
\centering

\begin{subfigure}[b]{0.25\textwidth}
    \centering
    \includegraphics[scale=0.24]{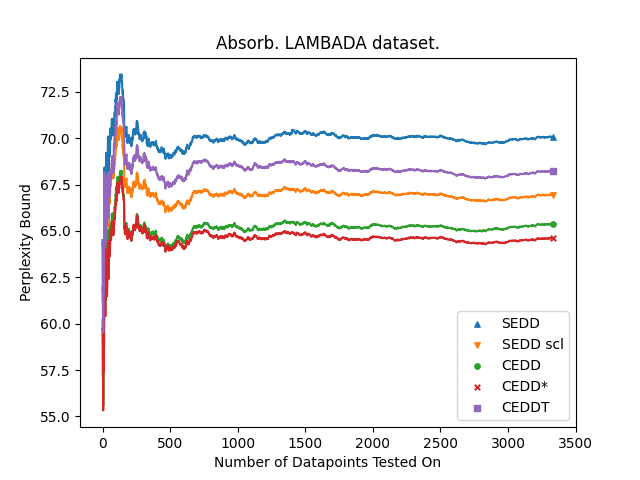}
    %\caption{Caption 1}
    %\label{fig:image1}
\end{subfigure}%
\hfill%
\begin{subfigure}[b]{0.25\textwidth}
    \centering
    \includegraphics[scale=0.24]{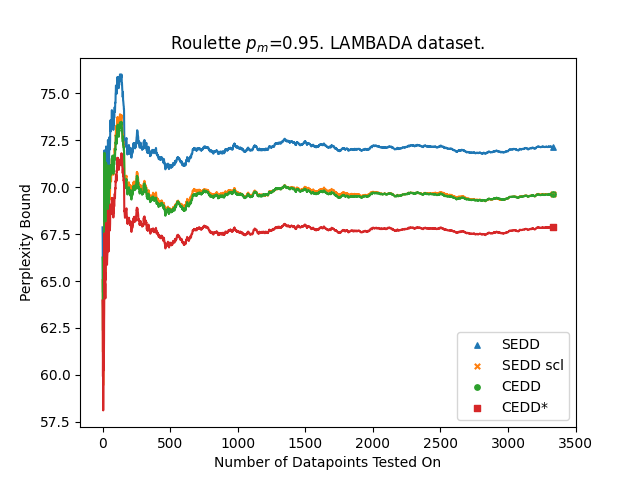}
    %\caption{Caption 2}
    %\label{fig:image2}
\end{subfigure}%
\hfill%
\begin{subfigure}[b]{0.25\textwidth}
    \centering
    \includegraphics[scale=0.24]{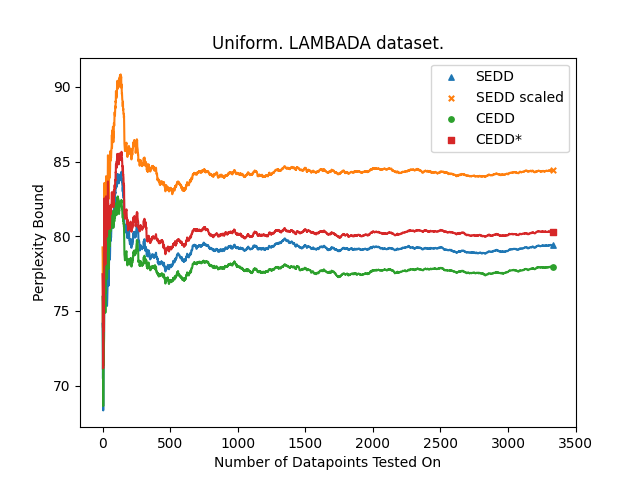}
    %\caption{Caption 3}
    %\label{fig:image3}
\end{subfigure}%

\begin{subfigure}[b]{0.25\textwidth}
    \centering
    \includegraphics[scale=0.24]{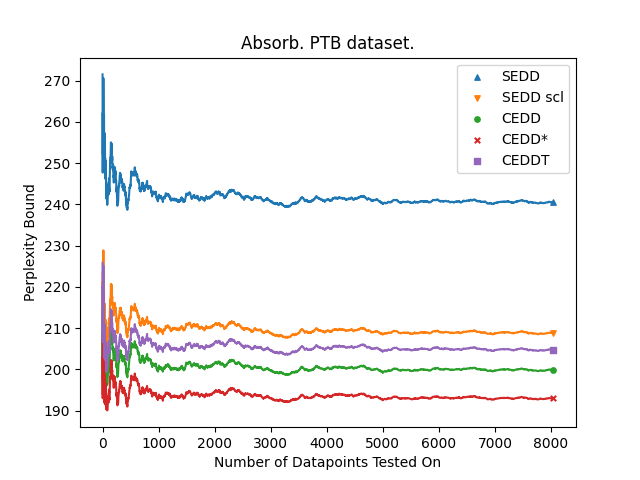}
    %\caption{Caption 4}
    %\label{fig:image4}
\end{subfigure}%
\hfill%
\begin{subfigure}[b]{0.25\textwidth}
    \centering
    \includegraphics[scale=0.24]{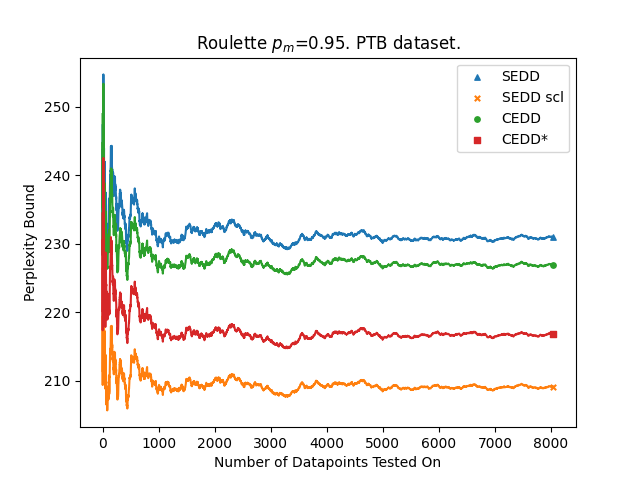}
    %\caption{Caption 5}
    %\label{fig:image5}
\end{subfigure}%
\hfill%
\begin{subfigure}[b]{0.25\textwidth}
    \centering
    \includegraphics[scale=0.24]{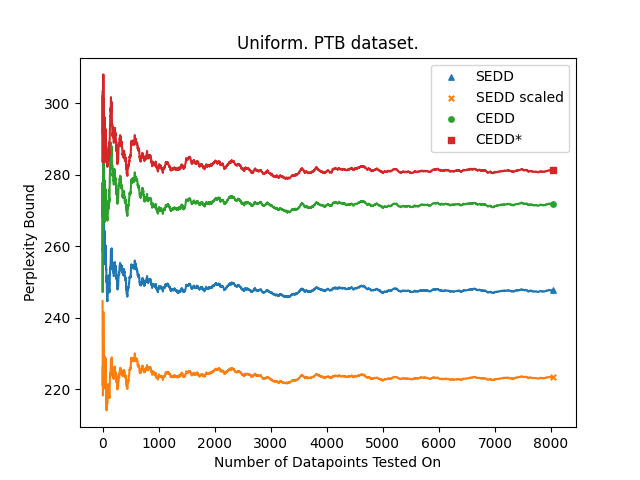}
    %\caption{Caption 6}
    %\label{fig:image6}
\end{subfigure}%

\begin{subfigure}[b]{0.25\textwidth}
    \centering
    \includegraphics[scale=0.24]{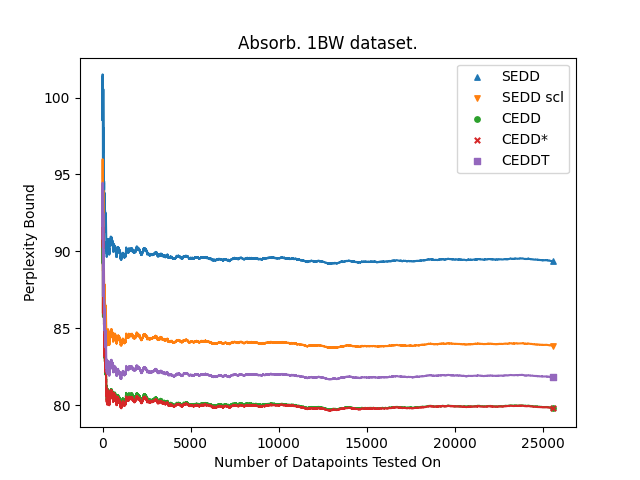}
    %\caption{Caption 7}
    %\label{fig:image7}
\end{subfigure}%
\hfill%
\begin{subfigure}[b]{0.25\textwidth}
    \centering
    \includegraphics[scale=0.24]{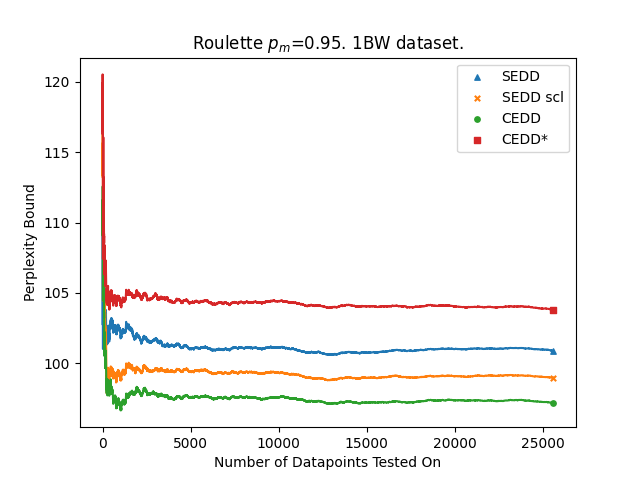}
    %\caption{Caption 8}
    %\label{fig:image8}
\end{subfigure}%
\hfill%
\begin{subfigure}[b]{0.25\textwidth}
    \centering
    \includegraphics[scale=0.24]{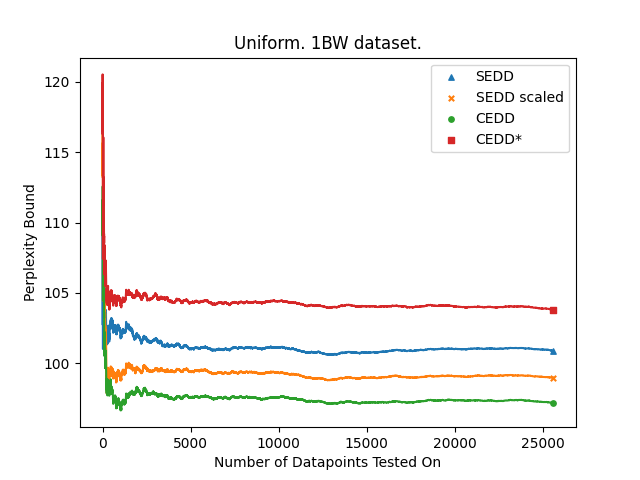}
    %\caption{Caption 9}
    %\label{fig:image9}
\end{subfigure}%

\begin{subfigure}[b]{0.25\textwidth}
    \centering
    \includegraphics[scale=0.24]{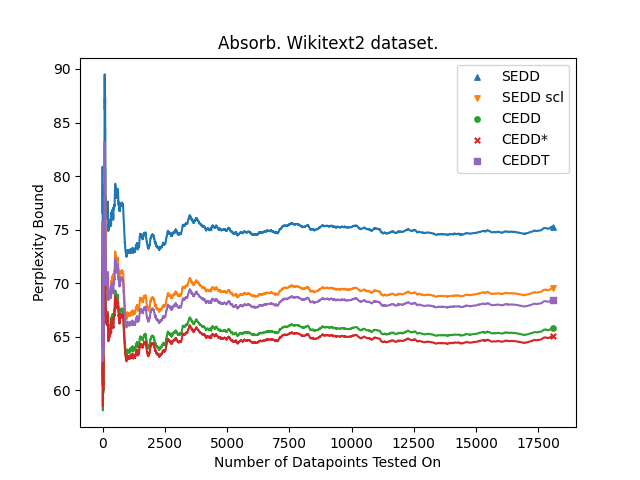}
    %\caption{Caption 10}
    %\label{fig:image10}
\end{subfigure}%
\hfill%
\begin{subfigure}[b]{0.25\textwidth}
    \centering
    \includegraphics[scale=0.24]{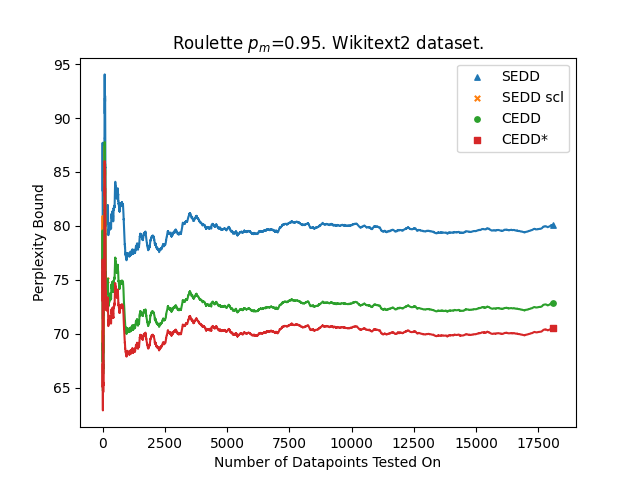}
    %\caption{Caption 11}
    %\label{fig:image11}
\end{subfigure}%
\hfill%
\begin{subfigure}[b]{0.25\textwidth}
    \centering
    \includegraphics[scale=0.24]{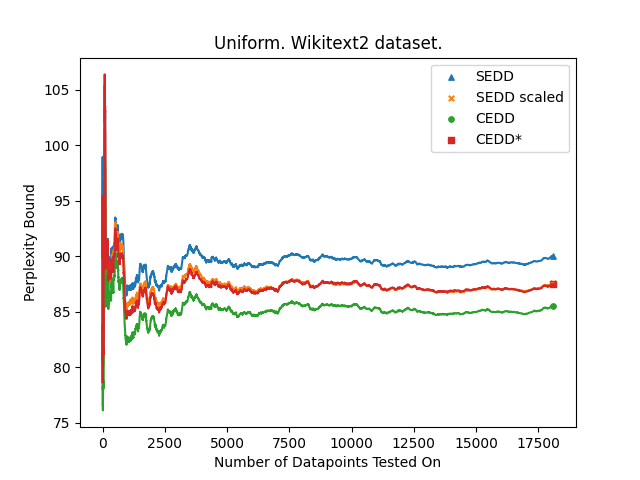}
    %\caption{Caption 12}
    %\label{fig:image12}
\end{subfigure}%

\begin{subfigure}[b]{0.25\textwidth}
    \centering
    \includegraphics[scale=0.24]{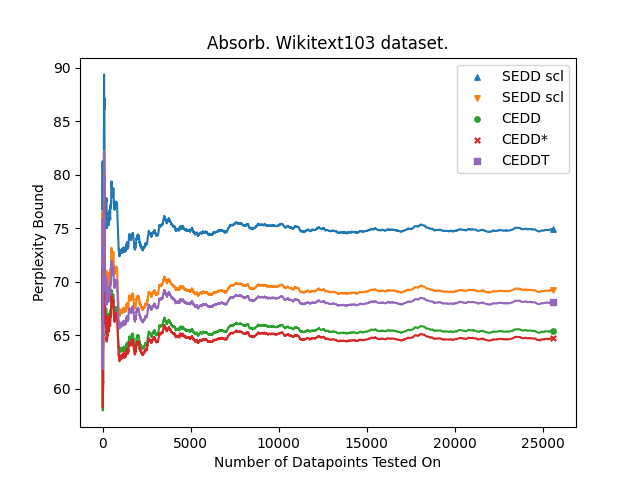}
    %\caption{Caption 13}
    %\label{fig:image13}
\end{subfigure}%
\hfill%
\begin{subfigure}[b]{0.25\textwidth}
    \centering
    \includegraphics[scale=0.24]{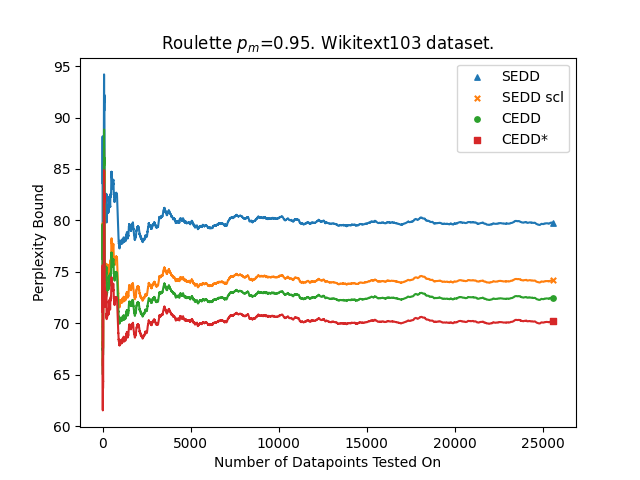}
    %\caption{Caption 14}
    %\label{fig:image14}
\end{subfigure}%
\hfill%
\begin{subfigure}[b]{0.25\textwidth}
    \centering
    \includegraphics[scale=0.24]{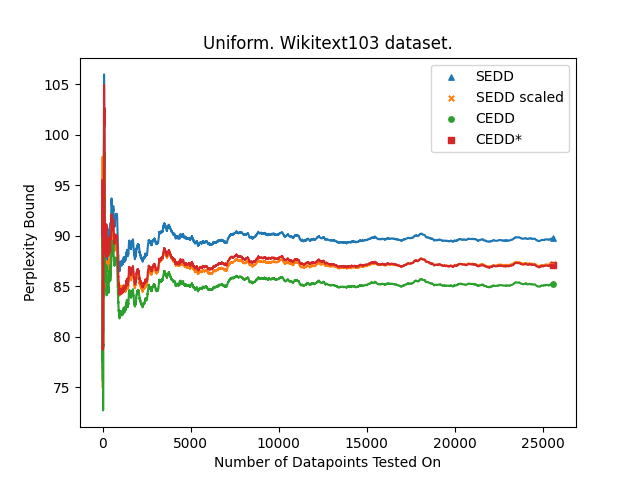}
    %\caption{Caption 15}
    %\label{fig:image15}
\end{subfigure}%

\caption{The comparison of the testing history of various methods on various datasets. Each row represents a different dataset, and each column a different type of diffusion transition-rate matrix. The way we calculated $J_1$ to produce the plots is explained in Appendix \ref{Naive-J_1}. The curves presented are the exponentiated cummulative version of the array returned by the described process. Since we do not perturb the test set, this method enables a clearer visual comparison.}
\label{fig:grid5x3}
\end{figure}

\begin{table}[H]
\caption{Results comparing SEDD, SEDD scaled (SEDDs), CEDD and CEDD* using $J_2$. Lower is better.}
\label{small_perp_tab1_J2}
\begin{center}
\begin{tabular}{lllllll} % Four columns, as indicated by "llll"
\textbf{Model (L=128)} & \textbf{LAMBADA} & \textbf{WikiText2} & \textbf{PTB}  & \textbf{WikiText103} & \textbf{1BW}\\
\hline
SEDD Absorb & 69.33 & 74.58 & 238.35 & 74.27 & 88.35\\
SEDDs Absorb & 66.83 & 68.93 & 207.76 & 68.49 & 83.26\\
CEDD Absorb  & 64.87 & 65.07 & 198.00 & 64.97 & \textbf{78.93}\\
CEDD* Absorb & \textbf{64.11} & \textbf{64.54} & \textbf{191.38} & \textbf{64.30}& 79.17\\
\hline
SEDD Roulette & 71.37 & 79.21 & 227.98 & 78.67 & 92.21\\
SEDDs Roulette & 68.25 & 73.43 & 206.34 & 72.77 & 86.99\\
CEDD Roulette & 68.92 & 72.16 & 224.18 & 71.52 & \textbf{85.37}\\
CEDD* Roulette & \textbf{67.27} & \textbf{69.61} & \textbf{213.90} & \textbf{69.45}& 85.64\\
\hline
SEDD Uniform & 80.09 & 91.37 & 249.50 & 90.63 & 101.82\\
SEDDs Uniform & 80.29 & 88.48 & \textbf{226.03} & 87.73 & 99.73\\
CEDD Uniform & \textbf{79.46} &  \textbf{86.82} & 276.61 & \textbf{86.52} & \textbf{98.44}\\
CEDD* Uniform & 82.43 & 89.68 & 289.09 & 88.90 & 106.32\\
\hline

\end{tabular}
\end{center}
\end{table}

\begin{figure}[H]
    \centering
    \includegraphics[width=0.3\textwidth]{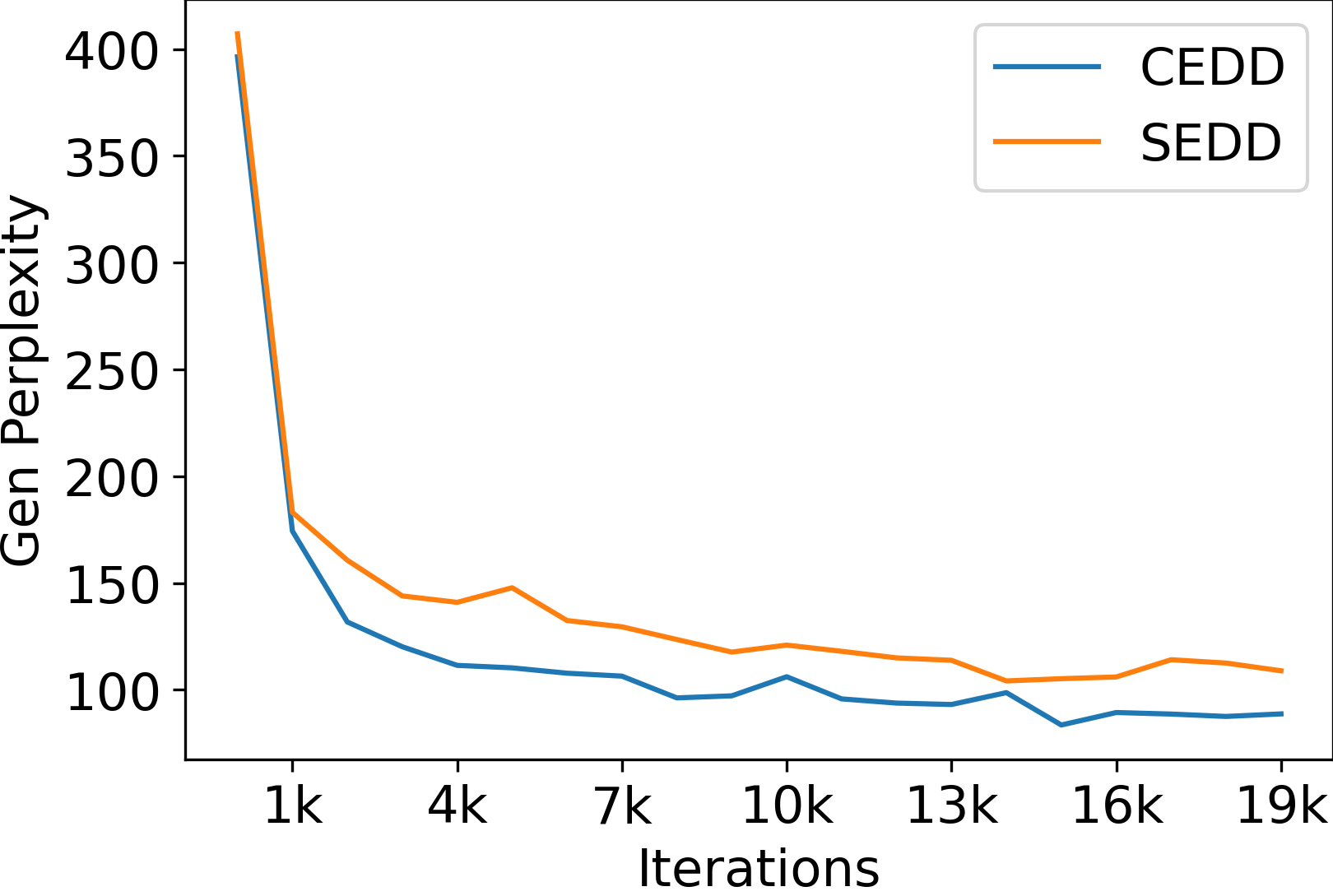}
    \caption{Comparison of CEDD* absorb L=1024 and SEDDs absorb 1024 in terms of generative perplexity (Euler, 128 steps), with respect to number of parameter updates.}
    \label{cvs20k}
\end{figure}

\begin{table}[H]
\caption{Results comparing SEDD (retrained), CEDD* trained for 20k parameter updates. For generation we use the analytic method with 1024 steps. Float 32 sampling.}
\label{seddvscedd_20k}
\begin{center}
\begin{tabular}{lllllll} % Four columns, as indicated by "llll"
\textbf{Model (Absorb)}  & \textbf{GenPerp} & \textbf{LAMBADA} & \textbf{WikiText2} & \textbf{PTB}  & \textbf{WikiText103} & \textbf{1BW}\\
\hline
SEDDs L=1024& 55.27 & 67.92 & 64.91 & 173.38 & 64.13 & 107.64\\
CEDD* L=1024& 48.29 & 63.75 & 57.58 & 157.71 & 57.26 & 98.68\\
\hline
\end{tabular}
\end{center}
\end{table}

\begin{table}[H]
\caption{Results comparing SEDD, SEDD scaled (SEDDs), CEDD and CEDD* in terms of generative perplexity.}
\begin{center}
\begin{tabular}{llll} % Four columns, as indicated by "llll"
\textbf{Model (L=128)} & \textbf{GenPerp} Fl32-GPT2L& \textbf{GenPerp} Fl64-GPT2L & \textbf{GenPerp} Fl64-LLama8B\\
\hline
SEDD Absorb & 83.62 & 172.35 & 212.15\\
SEDDs Absorb & 79.74 & 166.35 & 206.34\\
CEDD Absorb  & 74.19 & 148.21 & 185.90\\
CEDD* Absorb & \textbf{72.13} & \textbf{143.86} & \textbf{183.74}\\
\hline
SEDD Roulette & 87.81 & 178.94 & 220.00\\
SEDDs Roulette & 83.36 & 172.93 & 212.07\\
CEDD Roulette & 76.71 & 167.67 & 208.84\\
CEDD* Roulette & \textbf{72.31} & \textbf{158.56} & \textbf{197.79}\\
\hline
SEDD Uniform &  175.49  & 169.66 & 206.91\\
SEDDs Uniform & 171.07 & 163.88& 200.90\\
CEDD Uniform  & \textbf{168.35} & \textbf{161.84} &  \textbf{200.09}\\
CEDD* Uniform & 179.30 & 175.42 & 213.99\\
\hline
\end{tabular}
\end{center}
\end{table}

\begin{table}[H]
\caption{Results comparing SEDD, SEDD scaled (SEDDs), CEDD and CEDD* in terms of generative perplexity.}
\begin{center}
\begin{tabular}{lllllll} % Four columns, as indicated by "llll"
\textbf{Model (Absorb)} & \textbf{GenPerp} Fl32-GPT2L& \textbf{GenPerp} Fl64-GPT2L & \textbf{GenPerp} Fl64-LLama8B\\
\hline
SEDDs L=128 & 79.74 & 166.35 & 206.34\\
CEDD* L=128 & \textbf{72.13} & \textbf{143.86} & \textbf{183.74}\\
CEDDT L=128 & 74.07 & 154.04 & 195.41\\
\hline
SEDDs L=1024 & 40.95 & 105.27 & 111.87\\
CEDD* L=1024 & \textbf{40.93*} & \textbf{101.83} & \textbf{107.32}\\
CEDDT L=1024 & 42.18 & 108.88 & 115.60\\

\hline
GPT-2 L=1024& 41.02 & 41.02* & 50.25*\\
\hline
\end{tabular}
\end{center}
\end{table}

\begin{table}[H]
\caption{Correction accuracy percentages. 50k training iterations, batch size of 32 and $L=128$.}
\begin{center}
\begin{tabular}{lllll} % Four columns, as indicated by "llll"
\textbf{Model (L=128)} & CEDD* Uniform & SEDD Uniform & CEDD* Roulette & SEDD Roulette\\
\hline
PAP & 90.7 & 89.9 & $\boldsymbol{90.8}$ & 87.9\\
CAP & 91.2 & 90.3 & $\boldsymbol{91.3}$ & 88.6\\
\hline
\end{tabular}
\end{center}
\end{table}

\begin{table}[H]
\caption{Percentage of mistakes corrected for additional models. 25k training iterations, batch size of 32 and $L=128$.}
\begin{center}
\begin{tabular}{lll|lll} % Four columns, as indicated by "llll"
\textbf{Model (L=128)} & PAP & CAP & \textbf{Model (L=128)} & PAP & CAP\\
\hline
SEDD Roulette & 85.1 & 85.8 & SEDD Uniform & 86.9 & 87.5\\
SEDDs Roulette & 86.5 & 87.2 & SEDDs Uniform & 86.4 & 87.1\\
CEDD Roulette & 88.9 & 89.7 & CEDD Uniform & 88.5 & 89.2\\
CEDD* Roulette & $\boldsymbol{89.7}^*$ & $\boldsymbol{90.3}^*$ & CEDD* Uniform & $\boldsymbol{89.5}$ & $\boldsymbol{89.5}$\\
\hline
\end{tabular}
\end{center}
\end{table}

Finally, we provide results (Table \ref{DFMC}) when comparing CEDD* and Discrete Flow Matching that utilizes corrector sampling with the probability velocity $v_t^i(x^i, x_t) = \alpha_t \hat{u} - \beta_t \check{u}$, where $\check{u}(x^i, x_t) = \frac{\dot{k_t}}{k_t}(\delta_{x_t^i}(x^i) - \delta_m(x^i))$. We used $\alpha_t=1+\alpha t^a (1-t)^b$ for $\alpha=1$ and $a=b=0.5$. To compute the perplexity bound $e^{B}$ we utilized Equations (20) and (21) from \citet{haxholli2024minibatch}, and derived:

\begin{equation}
B = \frac{1}{L}\int_0^1 \sum_{x_t} p_{t|1}(x_t|x_1) \sum_{i=1}^L \Big[ -\delta_{x_t^i \neq x_1^i} \frac{\dot{k_t}}{1-k_t} \left(\log \alpha_t p_{1|t}^i(x_1^i|x_t; \theta) + 1\right) + 
\end{equation}
\begin{equation}
\alpha_t \frac{\dot{k_t}}{1-k_t} (1 - p_{1|t}^i(x_t^i|x_t; \theta)) + \beta_t \frac{\dot{k_t}}{k_t} \delta_{m \neq x_i}  \Big] dt.
\end{equation}
We used $k_t = t$ and noticed that by increasing $\alpha$, generative perplexity improves but the quality of samples is reduced, to the point of generated sequences becoming simple repetitions of symbols and rare tokens at $\alpha=10000$. This implies that $\alpha$ might modify the temperature of sampling. In addition, the measured perplexity bound quickly diverges to infinity, as $\alpha$ is increased.

\begin{table}[H]
\caption{Generative perplexity was computed using GPT2 large, and sampling was performed at Float64 precision.}
\begin{center}
\begin{tabular}{lllllll}\label{DFMC}
Model (L=128) & GenPerp & LAMBADA & WT2 & PTB & WT103 & 1BW \\
\hline
       SEDDs Absorb      & 166.35             & 67.05   & 69.37      & 208.69 & 69.17    & 83.87        \\
        CEDD* Absorb      & 143.86            & $\boldsymbol{64.60}$    & $\boldsymbol{65.04}$       & $\boldsymbol{192.99}$  & $\boldsymbol{64.69}$     & $\boldsymbol{79.81}$         \\
        Discrete flow $k_t = t$  & 145.48             & 71.90   & 71.20      & 221.15 & 70.84    & 82.63        \\
        Discrete flow $k_t = t^2$ & 152.70             & 72.31   & 72.87      & 215.30 & 72.55    & 85.82        \\
Discrete Flow Correct & $\boldsymbol{143.38}$ & 114.40   & 113.05   & 352.32 & 112.87         & 130.41 \\
\hline
\end{tabular}
\end{center}
\end{table}

\newpage
\section{Generated Examples}

\subsection{Generated text}\label{gentext_samples_appendix}
\begin{lstlisting}[caption={Generated texts from Absorb diffusion trained using CEDD, L=128.}]
Any version of the bundle allows corruption. The campaign was delayed a year from testing ground, and because of other deficiencies. Perhaps this can be used to inform all the Xbox One games come with the same package. Whilst there will be no problems, there will probably be no corruption price-wise and we can expect the same with one Xbox One game package at the same time. My opinion is a solid one. Let's assume Japan as the entry point in the entire country, and PS3 probably supports only one of these. It is effectively competing (see video) against hardware arcade titles such as PS4 or XPS4 as well
======================================================================
 his Achilles. I was ecstatic to be able to watch every one of them and keep them healthy tonight. You can look at the building blocks in college basketball, see that they play very hard and maybe feel superior to somebody. In my gut, however, you can argue that the one-year early good contracts come back and have a point. There has been a slight step-off with John Wall, which was a bit short on my mind. I now know Wall will be listed on the waiver, but I believe with Vic Beasley and Louren Maye this season, those two, with some really good shooters on the floor,
======================================================================
 times when it is vital for Scotland’s health and welfare The HRH Statement of Purpose and Development (SSTO) provides a framework for the government’s creation of the Future of Europe as a single body independent of Scotland’s national welfare system. The government’s own intention is to actively develop its vision for the UK’s welfare system based on its very nature and how the set of policies and key principals of the forum was established.

The Future of Europe Working Group (STO) Omissions of the UK Council on Europe’s Audit Office (œsivishig
======================================================================
 they go along with so many top players.

LAM: The lot of people in the parts they lose over are now than Jose Lopez, but if Affaar buys you deep and cheats Howard, Charo Colave, Bradley Donaldson, along with the rest of the opponents, we can see some of it.

LAM: With Santiago Ronaldo in South Florida. Vazal and Pedro Morales playing well, they don’t help us.

LAIDHUL: And you can actually tell which sides are being switched. Of three moving sides can be quite accurate.

A pawn,
\end{lstlisting}
\newpage
\begin{lstlisting}[caption={Generated texts from Uniform diffusion trained using CEDD, L=128.}]
 is to facilitate market conditions in jurisdictions. Also, not unlike that elsewhere where in terms of lack of in-house attorneys with no background qualified, or are looking to become in England and the whole country in the U.S., Arizona, New England, California, Connecticut, New Hampshire, New York, Massachusetts, Baltimore, Buffalo and Philadelphia, there are price measures set to rise to the model. And they come in handy as there are extraordinary levels of price in commodities, like alcohol and other drugs. Bankruptcycy approached, savings rate rigging now make even more dangerous. With broadest institutional licibility assets to sell and demand a
======================================================================
 aircraft, under the Mohammed Boland project.[28][30][31] In 2010, the government collaborated with Oquantasch states hoping to build the first pilots and aircraft carriers,[33][34] announced that on 16 October they decommissioned 25 acres of the Mozambique plantation leased by a Boeing 737, including several African pilots whom have been imparted on Brazilian soil and ended in. A military expedition – along with the arms embargo and tax evasion – has been delayed until federal Unabloprevalence halted part of the construction.[22]

Engineer units, and airport infrastructure [ edit ]

The
======================================================================
 scene. Though a woman called, saying that whether it was being on the suspect's video is not the answer, Desiling says an undisclosed number were present at the mall, and that the police were ready.

In the end, two shoppers, Florimore, 27, sprinted outside the convenience store after struggling with downpour to get inside the shop.

When Florimore left the restroom at the store, Zimmerman entered the vendor's cell phone on Sunday and secured her keys Saturday, Desiling said. He helped he lock a key before being used to contract it. After the taped encounter with a current Illinois Bureau of Police
======================================================================
, some of which were then scrutinized by the Bernanke Advisory Committee led by Michael Corm.

The U.K. Fed, eager.

Aboard all his most extreme concoctions careening the populace soon queued a shaky living net following weeks of the disastrous fiscal mess caused by the housing crash and the impending collapse in mortgage payments.

Can't anyone manage to usher it all the way. A savvy government could build “a serious living net — and that is much less than immediate relief.” Instead of achieving a 50 percent reduction in mortgages, anyone at the Fed’s table did it:
\end{lstlisting}
\newpage
\begin{lstlisting}[caption={Generated texts from Roulette (0.95) diffusion trained using CEDD, L=128.}]
 deeper problems caused by America’s growing deficit, and by our rapidly growing economy. The same was true today when the only way to undermine the nation’s welfare was in a virtuous cycle of austerity, stimulus and lavish budgets. And the scorching reality, especially for millions, was that “every man is the child of his parents’ mother”—not only abhorrent problem was the working age in Washington, which only saw 44% of the population working in the same household as network leaders and 79% of the workforce in their industry. We tapped these network leaders in polls on Election Day, causing favorable results
======================================================================
 other federal government agencies had far greater control over water industry regulation. Today, drilling wasn’t the equivalent of the Environmental Protection Agency championed since former president Ronald Reagan put Sen. Scott Desmond (CDE) on top of the Environmental Defense Agency, with a more funding area and rural programs under permit.

Like Moore, who spent so many a million dollars on his calls for the president to institute environmental protections on public lands and national parks, Trump himself had a nice shot on drilling, saying: “As passionate about the fight for property owners as we can be right here for Trump.”

Canada XL – No
======================================================================
Mistress: Next day next step, next step!" He's reply must not be translated. "What is - Guy's Your pronunciation? You still don't understand. Why am I saying that, don't mind: I've moved on to full set). Fitch word for move. Hey, that must have been long ago. What did you do going into our meeting? Everything changed, the rules changed... and you all admitted that Grand Auto Auto wasn't going to work, but you did. I need to know it. You've been in reference for the book, Notice."

Don't ignore him. He can't
======================================================================
 like low income earners, raising the tax on below.

All Part D credits forgiven by at least twenty-one percent paying income tax, and no state interest payments to help the health insurance guarantee. If owners pay the credits, they spend money they owe on each payment, either. A fifth of incomes will pay the credits.

While it's perfectly up to the type of hardship, people can apply state or federal assistance to the credit program. It requires people to sell health insurance. The credits qualify and vary from individual person to enrollee.

Having decided to pay their state income taxes on the bill, a single
\end{lstlisting}
\newpage
\begin{lstlisting}[caption={Generated texts from Absorb diffusion trained using CEDD*, L=128.}]
boat men: drift, and boom. Built, three 9th sailors found themselves losing an average of 12 inches of the underwater 50 feet in the delta base of S. M. Gielker Jr, U.S. captain (and pilot for the 11th Brigade), sailed deep into the canal on a watchdeck with his bow of thumb and lost at his anchor near shore.

Importantly, future "adillists should consider the alarming consequences that[t]here in the future the current rapprochement of the 9th would be had on the children of all areas of the coast," re-ferenc
======================================================================
the world’s private space program.” (This mission, in James’s words—rust is laced and revisited to the down-space fall and the commercial foundation—and eventually—software portable to what Lotus’s proprietary operating system does now—and also new hardware—serves, itself, as a collective organization rather than a fragmented, coordinated copal: “NASA loves the space community at large.”)

James retired as CEO on the company board in 2004. And longer, Planetary Resources’s judoided business model is intimately linked to his own organizational record and by
======================================================================
 engaged in hacking American government and stealing cyber defense secrets from its political enemies.

“We know today that the celebrity and his partner in the leaks incident to WikiLeaks produce the true story about our government, content and the data shared by our intelligence services, and deserve accountability,” said Mark Warner, Assistant Attorney General, at the key E.C.C conference on Wednesday in Washington, R.I. “In this case, the battle #WikiLeaks against WikiLeaks has touched both at a new level for the country, and finally has turning historic date.

“Our government has been without the power to hack
======================================================================
 non-committal after the third GOP debate.

Gia had already just commented on the upcoming status of her project on CNN, after Trump repeatedly asserted it would discuss politics for the first lady.

Trump said, "Look at this, on my part of Hillary Clinton, the mad country," Garcia continued.

"I took my position in a way President Trump didn't have before speaking :], he said it. I admitted some of those things, finally understanding that I stood for Oh Hillary as a businesswoman," she continued.

Garcia made the comment after training herself in Melania Trump's Daily
\end{lstlisting}
\newpage
\begin{lstlisting}[caption={Generated texts from Uniform diffusion trained using CEDD*, L=128.}]
 the Registration of the Bank of Development despite the skills of Arasan Sance. unemployable novice is in high regard for Swatch Up Worker as there are large scale projects bylaws that have already been difficult in the sector.

Caught diving too deeper, Arasan Sance had a career at IIT built on Year Up in Science at the Wood,, Theatre during an instructor class at the university in Banance in 2017. The guide says that some jobs may have been finalised and it was understood well that, if efficacious, McCarley could decide to be invited to a perhaps larger as academic class.

======================================================================
 strength of subsequent fitness genes from mitochondria after maintenance of. This type of transition is called exalted mitochondrial, and although extended to long a feat is indeed amolecule strategy that requires stringent tests in selective binders.

In particular, one that abundant, modified mitochondria can enable recovery; first they produce mitochondria which is attached to the fuel cell, is rapidly moving essentially back to normal form, transitioning from the atrophy to a more muscular phenotype. Mitria such as L1L protein A (C1A and transporter R (Anaris-in proteinase), as NAD is used as an adaptation strategy, like many
======================================================================
 how things were in the later years. The truck was littered with a black pepper. It was overflowing with halal green and mo yellow corn pepperoni and a charcoal plate.

Pictured used to have a baseballs encased in the front driveway with the sports hooders. Guy's black ores tip, black bean bag, silver, cherry, and granny banger was ricocheted, and when the Appomeess the Chinese bullfly rolled me over to the Morrissey. O 44, everything fresh and natural were just different from those in earlier, inscriptions. This guy's big white hooded or blue jacket
======================================================================
 idea of increasing momentum, however, is to slow down a quarterback’s strength. His is in the midst of a slow renaissance with the team’s ensuing 17 games, but it is aging one who is getting the effectiveness of missing Greg Jennings and injuries. That this crucial area comes as the 49ers failed progressual Mac is forcing his fades down instead of sticking fully — unless he is getting better.

The Jets is approaching the top-10 in front but, as soon as they fill it in, an underdog-Dish seems to benefit from this slight fatigue in their upcoming game.

Negvious Matt Miller
\end{lstlisting}
\newpage
\begin{lstlisting}[caption={Generated texts from Roulette (0.95) diffusion trained using CEDD*, L=128.}]
 such fraudulent statements and denials as, “the truth lies,” and “white lies” could never be trusted. More than one among you have just known Williams about.

Many are the fictional experiences Williams shaped, in ways that embodied him in a generation. The Tielemin figure in James Joyce’s white character largely was based on male gaze and information and, from Williams’ perspective, reflecting to the lens on men from a different time was disappointing.

Williams’s material images of how, in essence, we had occupied a not present world: beaten, destroyed, marginalized or
======================================================================
 an attempt to fully realize the potential for pedestrian safety where there is no dearticulate conduct for the traffic.

His own current plans, for which Lokxelli is the secretary of the planning drove changes with little progress through traditional detours as of late. “Not highways, but others are up quite a bit,”Akka forecasted “to make sure that one imp is not going to get torn. How to do that away from what it is becomes when much faster roads occur. There is all going to be of no serious consideration, at present, either in design or perspective.”


======================================================================
 mad over and watching him laugh loudly enough to say so, and my point is that it's not just that I am not displeased with a politician or a CEO, that I think people looked in their shoes and paid well."

To see countries that see relevance to the show's multimillionaires, including so frustrating the small smile in the face of the world, Thomas Archifelius applied, but wanted to represent the image of man keeping ignorant out his protectionist mission.

ARTICLE CONTINUES BELOW

GEORGE STEPHANRAD: In a few weeks, Papacuzzi and so did Paul
======================================================================
 and made fun of the tactic. They acknowledged was just silly. Indeed, shortly after the books, a real Western painter and fellow broach type, said, to another wood miller, “he’s straightened a nose. Skeptic got a hard chin and flies in the back.” The winced to me is that you’re a courtesan. You’ll know what's up for you<|endoftext|>Get it done. Your friends will never see it

Parents of kids and old owners dogs yesterday said they are going to miss president-elect Donald Trump's look, 'Loving' for
\end{lstlisting}
\newpage
\begin{lstlisting}[caption={Generated texts from Absorb diffusion trained using SEDD, L=128.}]
 likely to be the fact that Nephi calls God's beloved "The Synod Word," a difference when it translates to an accident.

As viewers know, an Andesad man and his wife will never meet in Nephi’s time God begins to connect with others, Le Burkert said.

Source: Bonuses de jeunee included<|endoftext|>The concept of attunational fit occurs from the presence of perceptual experiences on the cortex of the brain, as exposed recently by research by senior Prof. Linda Perriene.[1] Since personal dynamic patterns don't exist within human subjects, cortical memory is often cognitive
======================================================================
 will look like a target.

97) J.O. Do you think of as a White House national coordinator as a Clinton political operative? Kitty|O. I sort of're still up-the-table the White House. P.O. That person who refuses to have the same meek with a client with concerns/lives just pretty boycotted by pulling off in the relationship. Glad let's get that out of the way tonight and point out his ability and his job promise.

98) Next to the President - David Grimsnek | In at least 100 years George W. Bush has
======================================================================
 last month invested in Ria Louise, New York through a wealth of $7 billion this year.

Meanwhile, the committee is a part of Katzman & Girle Service Conference, the firm that licenses the annual facility. Frisco visits to CraftEarlier this year.

“We newg CraftyTags for philanthropic, recovery, trust, and use of our services, with immense support from federal government, Mossody said.

Under the partnership, Neoneys also is working on expansion as a partner among other not-for-profit resource providers, with an increasing amount of resources and core services covering
======================================================================
 at least one, but nothing qualified from any perspective they could be offering them.

“The response was from a very recent comment made in blue material by the public but it was decided ahead of time,” Smith writes of the public’s prurient. So we’ve found that a long and full response excludes the ones who didn’t believe the policy would meet the government’s standards for proof and hold.”

Sure enough, their apologists could do what they want on hard science. But that being said, they’d like some insights into scientific ethics and
\end{lstlisting}
\newpage
\begin{lstlisting}[caption={Generated texts from Uniform diffusion trained using SEDD, L=128.}]
 can generate a json of the library based on which one exists for the scenario instance.

<?php G require JetlinkViewer :: create ( ePub : 'Whu )' d. add(): c. app (). head () //Outputs from Sub_select from Google = str__ (http://www.google.com ); //Add it while that function starts down the path and results.Usingjson */ }

In order, Node handles repos(), several usable implementations in Mesh.js have been tested.

The JavaScript library actually has dire explanation for itls: it does not, but unlike
======================================================================
 using you or they use you taking people out to catch you or whatever. That was wonderful because of the ways I look. I had gone from anywhere from 20/22 here to 40 years old to 50/50. I also grew up feeling. 'True Blood Baby' has gave me a lot of body movements... that's got me right there and I used to pop their first transformation. That show."

My sister really grew up as a nairety boy. Thrones in a while? No, it's only sometimes, and since older people might even see us for 15 days or 30 days per night, how long they
======================================================================
, but the ban on kidnapping remains the only difference.

Seyer has closed the case in a court letter seeking a response.

For more on the arrest video visit your Confused TMZ app page.

Updated Sept. 24, 2013. Tribute to Lawyer R Harrison.<|endoftext|>Gils Dahman is starting his season with the Red Bulls Crystal Palace. The midfielder rounded out his fifth season in the United States, missing three league meetings at U-a.

You can watch the interview with the media below. You can see an image for the camera below.

Gils Dahman has started
======================================================================
 Girl”). The earnestness of the ensemble is random, cheerful though frustrated with both the actions of themselves and the different perspectives on their journey. This classic masterpiece—writing in sequels and his imagined form of movies—is the tale of cinema that still has had aesthetics and is able to grow. But it also sees the fall of the film genre—though very serious yet complex—so we see a live show that can also find places lost by leading the way to work around intersectionality, where the lineage of Reed and Dormedy (“Iron Man” though) is applicable—and that she’s starting
\end{lstlisting}
\newpage
\begin{lstlisting}[caption={Generated texts from Roulette (0.95) diffusion trained using SEDD, L=128.}]
 its own territory. All the European Union as well as the U.S. government, including the British East Company, were issued coins to the Federal Reserve and a few days were in the open to support a foreign buyer. President Lee had adopted law rejecting donations from the country’s tax code, preserving the spirit of the Constitution of the 1870 Act of self-rule in South America. In the caddition of the British banking system, the American Mint provided the first ammunition and smuggled guns to Europe. Nonetheless, a serious problem: the Union was secretly armed anti-communists with a formidable internal authority. The union government was
======================================================================
 to blame?

It’s full of wrongheaded sheen and bears expressions and carries a similar light ending. I’m not turned off on that Craig’s Edge of Truth NYC guide if I don’t see something wrong going on and do the right thing.

What’s different about another story that can somehow tell the tale unfolds in Chicago so well with the ppl being approved by the General Service.

Like an Independent. Right? All right.

What Happens?

A Licensed Reader keeps all relationships private. They have no precedents. By all things
======================================================================
 result it couldn't publish it.

No, my work is not the best way that all of us can be at home with the poll results. I acknowledge you believing it, and doing this poll is going to have to happen to make the difference.

But again, we are honestly happy with the results - let's see. We could not vote from all the experts - but that still could happen. What's your opinion on that debate?

The other most annoying - but the most upset is #MSNBC. In the Wall Street Journal, people called the polls slow Gore down, saying that even the ABC thought
======================================================================
, 400 times the base capacity there has been priced to four. And now another station can do the same for capacity with two other lines to 1 per cent of the power meter factory circuit which also costs to $56 million for cancellation.

The next debate is on the schedule, but may not happen in late 2017 without major delays.<|endoftext|>The French media reports that Chinese technology firm EtApple has today started claiming that it has all given back, in its software, a return to earlier this month’s SmartEasy payment processing that will simply payment it for most Chinese users. And based on recent developments, the same company is sending
\end{lstlisting}
\newpage
\begin{lstlisting}[caption={Generated text from Absorb diffusion trained using CEDD*, trained L=1024.}]
 two days later meeting with the Governmental Affairs Committee next Tuesday, replacing the committee's naps with an informal discussion.

In his Thursday hearing, Sessions said he had been "used" into criticizing the CIA.

"There has been some degree of transparency in looking at but that's not going to come cheap for himself or the American people, or really to undermine his independence, which I think, despite being in various positions, I've been very, very clear on the matter of that. I would be angry or very upset about that," he told senators at the meeting. "But I think as we go into Wednesday, and the United States Cabinet are finding ways to make very clear to citizens that they will respect that system."

Several high-level officials also called the appointment, saying it was "a bit of a quirk" since a revolving door of offices such as the CGA is already running rampant and has yet to hold any hearings.

"We've seen people make fairly much of their career life from the Justice Department out of this," said one GOP source. "And I don't think that choice is appropriate when that's a government entity that has given up a lot of accountability claims."

Speaking after noon another Republican member of the Senate Commerce Committee told. Todd Risch (R-Idaho) however, how he thought about Sessions, "and honestly, I didn't see a whole lot of discussion at one point suggesting a tone — a tone. There has been a lot of change since the removal of some five of these people ... I mean, I want people to find out that respect goes to the leadership of their department, and I'm not sure it's nice to move on down that precipice."<|endoftext|>When you try to put any pressure, you put more than the Pacers into you love watching. Before the Boston Celtics come to their games against Washington with.500 records on Tuesday night in Pittsburgh, you will likely watch a five-point game between Brad Stevens and Antetioun Gouden. Hollis have fourth-most competition in the game overall (49 points per game) but before games start their season of over 14.5 points per game has already passed.

That led Boston to an early 2-1 deficit.

No, not really. In addition to last season, however, the Celtics really struggled during the Wizards’ first year of the NBA. For only the first 24 minutes, Washington had 18 minutes their D-man shooting guard who averaged at least four points, and as he chipped in another 11 points on at least 10 fourth quarter quarters Boston was outscored in the paint on five four-point shots, including two-point turnover margin.

If I included Stevens and Horford scoring 18 points in the only fourth quarter quarter from the paint, it would have probably been 2-1, and in the second-half-first quarter at half-time it would have been a 9-2 game thanks to the disastrous fourth-quarter performances. Yes, the Celtics have been unable to pull the ball away from us in the paint over the fourth quarter, but their success was probably in spite of the bad 2s, otherwise, our defense was weak.
\end{lstlisting}
\newpage
\begin{lstlisting}[caption={Generated text from Absorb diffusion trained using SEDD, trained L=1024.}]
.

The principle of selling for a period of energy will make the most sense next year, as Tesla’s record rapid sales may be sluggish but demand lives on.

Tesla Motors Corp’s CTO Carl Conti said beginning a new conversation with Tesla about the future state of its workforce in 2014 may be a part of its strategy in which as the company races to reduce its price of luxury vehicle deliveries, the first units that get to market again in earnest are expected to begin to get “next level,” he said.

AP Photo

“I appreciate [the] ideas. These are not a fast break, but are ways to strengthen the margin at least knowing that if you get the right timeframe you make, its way ahead,” Musk told reporters at The New York Times over the past month.

“None of these people have looked toward [s]ourcing the cost or the increase of the vehicle, unless you see it as a downward spiral,” he said.

Although Musk and other executives point to one to two auto drivers and a net purchaser of gasoline or other, the company’s strategy is that a reduced price for vehicles will outstrip some of those buyers, Conti still said.

Conti said the price of Model S is projected to fall by at least a 33 percent decline rate in 2015, on average. Conti expects Tesla expects 58.5 million to 59.5 million in 2015 and also for an average 15 percent decline to rise to February.

For the Model S, by contrast, Tesla’s are expected to reach a write-down of about $500 million to 2,000,000 EVs in September and more than 700,000 currently, he said.

The first Model deliveries of the year is expected to nearly begin on March 22. That price is heavily influenced by dealer monitoring that includes a total inventory of 600,000, with 400,000 delivered next month, Conti said.

He noted Tesla expects its predicted upward trajectory to increase by 2.4 percent by 2015 as production in Australia and China accelerate to 4.6 percent next year, respectively.

Musk also noted in The Wall Street Journal that while Tesla expects more than three consecutive in production next year, it will production a total of $8 million in U.S. roll-backs in the fall.

He said he has settled on a key question about the car: “If a customer needs it and is willing to pay the rates that we can keep things moving, then we ought to consider adding incentives for it.”

“The price of Model S expectations for production is a driver’all,” Conti said. “When the profit is available, it means a car is not in its pocket. We are seeing this as a very good time after Jan. 31.”

\end{lstlisting}
\newpage
\begin{lstlisting}[caption={Generated text from GPT2, trained L=1024.}]
power well past the NPC versus NPC meter for each unit type. Their use is varied, and thus allows for a considerable amount of flexibility with The Banner Saga, but whether you prefer your parent's roots to stay part of the community -- with allies everyone can fight and help each other -- remains to be seen. For that, you can always pick the steam community tab (not required, that's how the game is named -- no one likes spamming themselves for news now, I know).

However, you agree to terms of service that you agree to disable your favorite kind of multiplayer games (by default) to prevent unwanted conflicts: factions are only listed once in a game and cannot be disbanded or changed just because of a non-friendlier fleet. So it's a pretty straightforward, fairly simple toggle to get used to.

Ugh, it sounds like the HUD might get progressively messier as the game ends.

Shields charging slowly: active shields last several minutes, recharge fully in seconds, and charge every second they charge.

Shields wearing'shield' else doubles their size, can clip high above their skull, and are invincible to all other shields around your ship that border the shield and can deploy wall-mounted shields. Other shields like this one can only be used to shield the next vessel. Hit someone on the Pustules to activate shield shields. In an expedited, non-lethal, coordinated confrontation, it is fairly easy to compare stats to each shield for a ship to use.

Shield minus shields compensation: up to five shields are held at rank 2 in your ship's shields. It's called 'delay,' but with a ton of fancy commands on screen, you don't have to know (or even see) their context or employment to make sure to compensate for some of the distortion inherent in that bit. Seconds before your shield starts charging (just seconds before it behaves like hiring another worker), the timer will start running indefinitely, which means shields can snap to the back of your ship if kept as low as possible. Avoid collisions caused by instantaneous cap.

Shield halves that match the length of the ship's hull: you can control the range between Shields, such as your Private Capital Signatures, where the ship's bulletproof liquor clock is set at half-space and your Heavy Construction Band Bump the Game's Game soundtrack to Fade let the corp count (if the corporation has sixty transports within one day of being prepped, it's probably already pretty close to ten minutes, but your mission selection loading speeds will be holding it back a little).

Surrounding explosions moving over the bow: it's deceivingly easy to line up targets like an Infallible Surface projection theater Explosion shield room high up in the sky: fire teams can run together and mean to demolish their own low-flying illions every once in a while, yet closely follow their targets like shadow warriors to the detriment of themselves. Covertly set to'square hit' for easier customization, Ubisoft's new ship weather HUD is the key to spreading effective damage throughout the entire ship using the standard meteorological updates.

Shaping your space: shape each bridge to create low-risk rockouts or pathogen/addage paths. This may be tedious, but it actually whizzes through by itself while simultaneously adding damage to giants mercenary ships.
\end{lstlisting}
\newpage
\section{CTMCs Preliminaries}\label{AppendixD}

\subsection{Discrete-Time Markov Chains Over Finite-State Spaces}
\label{appendix_gen_inst}

% A discrete-time Markov Chain in a finite-state space, is a stochastic process $X_1, X_2, ..., X_T$, where $X_t$ depends only on the previous state $X_{t-1}$ and can take only one of the values $\{1, 2, ..., S\}$, where $S$ is the total number of states, while $T$ is the number of steps. The probability of being at state $i$ at time $t$ is 
A discrete-time Markov Chain in a finite-state space is a stochastic process $X_1, X_2, \ldots, X_T$, where each state $X_t$ depends solely on the preceding state $X_{t-1}$. The states $X_t$ can take on any value from the set $\{1, 2, \ldots, S\}$, where $S$ denotes the total number of possible states, and $T$ represents the number of time steps. The probability of being in state $x$ at time $t$ is
\begin{equation}\label{appendix_first}
    p_t(X_t=x)= \sum_{y=1}^S p(X_t=x, X_{t-1}=y)=\sum_{y=1}^Sp_{t|t-1}(X_t=x|X_{t-1}=y)p_{t-1}(X_{t-1}=y).
\end{equation}
If we place all such probabilities $p_t(X_t=x)$ in a vector $\vs_t$ of shape $S\times 1$, such that $\vs_t(x)=p_t(X_t=x)$, then from above we can deduce that
\begin{equation}
    \vs_t= \mP\vs_{t-1},
\end{equation}
where $\mP(x, y) = p_{t|t-1}(X_t=x|X_{t-1}=y)$. Given an initial probability distribution $\vs_0$ over states, the equation above fully determines the evolution of the probability over states with respect to time. If it is known that the state at time $t-1$ is $y$, then $\vs_t$ is simply column $y$ of $\mP$. This implies that the sum of the elements of each column $y$ of $\mP$ is one.

\subsection{Continuous-Time Markov Chains Over Finite-State Spaces}
It is possible to define a stochastic process with the Markov property in finite-state spaces, for $t\in[0, T]$, \citep{anderson2012continuous}. As previously, we can define a discrete-time process, on time points $\{0, \epsilon, ..., T-\epsilon, T\}$, such that there is $\epsilon$ probability of activating the previous transition mechanism when progressing from time $t-\epsilon$ to $t$, otherwise we stay where we are with probability ($1-\epsilon$).
% In other words, if we write with $A_t$ the event of the transition mechanism being activated at time $t$ we have,
% \begin{equation}
%     p_t(X_t=i)= p(A_t^C)p_{t-\epsilon}(X_{t-\epsilon}=i) +p(A_t)\sum_{j=1}^Sp_{t|t-\epsilon}(X_t=i|X_{t-\epsilon}=j)p_{t-\epsilon}(X_{t-\epsilon}=j).
% \end{equation}
Removing the random variables to simplify notation, we have
\begin{equation}\label{appendix_transition_mechanism}
    p_t(x)= (1-\epsilon) p_{t-\epsilon}(x) +\epsilon \sum_{y=1}^Sp_{t|t-\epsilon}(x|y)p_{t-\epsilon}(y).
\end{equation}
We notice that when $\epsilon =1$ the equation above coincides with Equation (\ref{appendix_first}), and in addition as before we can write Equation (\ref{appendix_transition_mechanism}) in matrix form
\begin{equation}\label{appendix_disc_lin_ode}
    \vs_t= (1-\epsilon)\vs_{t-\epsilon} +\epsilon \mP\vs_{t-\epsilon} =\left(\mI+\epsilon(\mP-\mI)\right)\vs_{t-\epsilon}=\left(\mI+\epsilon\mQ\right)\vs_{t-\epsilon} \text{ , where } \mQ=\mP-\mI.
\end{equation}
 From Equation (\ref{appendix_disc_lin_ode}), we see that $\frac{\vs_t-\vs_{t-\epsilon}}{\epsilon}=\mQ\vs_{t-\epsilon}$, which when taking the limit $\epsilon\rightarrow 0$ becomes 
\begin{equation}\label{appendix_ode}
    \frac{d\vs_t}{dt}=\mQ\vs_t.
\end{equation}
Given an initial probability distribution $\vs_0$ over states, the equation above fully determines the evolution of the probability over states with respect to time. Indeed, the distribution over states at time $t$ is the solution of the linear ODE in (\ref{appendix_ode}): $\vs_t=e^{t\mQ}\vs_0$. Therefore, if it is known that the state at time $0$ is $j$, then $\vs_t$ is simply column $j$ of $e^{t\mQ}$. Naturally, one can rescale the time variable such that $t=\sigma(t)$, where $\sigma$ is monotonically increasing, $\sigma(0)=0$ and $\lim_{t\rightarrow 1}\sigma(t)=T$, so that for $\vz_0:=\vs_0$,  we get $e^{t\mQ}\vs_0=e^{\sigma(t)\mQ}\vz_0=\vz_t$, and 
\begin{equation}\label{appendix_proper_ode}
\frac{d\vz_t}{dt}=\sigma^{'}(t ) \mQ\vz_t=\mQ_t\vz_t \text{, where } \mQ_t=\sigma^{'}(t ) \mQ.
\end{equation}
 Matrices $\mQ_t=\sigma^{'}(t ) (\mP-\mI)$ clearly satisfy the properties of transition-rate matrices \citep{suhov2008probability}. Matrices $\mQ_t$ are chosen such that: a) the matrix exponential $e^{\sigma(t)\mQ}$ is easy to calculate, which is essential as $p_{t|0}(x|y)=e^{\sigma(t)\mQ}(x,y)$; and b) $\vz_1$ is an easy reference distribution to sample from \citep{austin2021structured, campbell2022continuous}.
 \newline
Finally, similar to diffusion processes in continuous spaces, the continuous-time Markov chain in Equation (\ref{appendix_proper_ode}) also admits a reverse process \citep{kelly1979reversibility, sun2022score}:
\begin{equation}\label{appendix_back_ode}
    \frac{d\vz_{1-t}}{dt}=\bar{\mQ}_{1-t}\vz_{1-t},
\end{equation}
where $\bar{\mQ}_t(x,y)={\mQ}_t(y,x)\frac{p_t(x)}{p_t(y)}$ for $x\neq y$, and $\bar{\mQ}_t(x,x)=-\sum_{y\neq x}\bar{\mQ}_t(y,x)$. Since we can easily sample from the reference distribution, the only unknowns preventing us from being able to run backwards are the ratios $\frac{p_t(x)}{p_t(y)}$ also known as concrete scores \citep{meng2022concrete}, which we desire to model using a neural network. Once such ratios are modeled we can generate samples from the learned data distribution $p_0^\theta$ by discretizing Equation (\ref{appendix_back_ode}) as follows:
\begin{equation}\label{appendix_gen_euler}
    p(x_{t-\epsilon} = y \mid x_t = x) = \delta_{x}(y) + \bar{\mQ}_t(y, x) \epsilon + O(\epsilon^2).
\end{equation}
%\subsection{Visualising the generation of sequences}

\end{document}